\documentclass[11pt]{article}


\usepackage{natbib}
\usepackage[margin=1in]{geometry}
\usepackage{palatino} 
\setlength{\parskip}{1mm}
\title{Transformers Learn to Achieve Second-Order Convergence Rates for In-Context Linear Regression}

\newcommand{\aspace}{\hspace{5mm}}

\author{%
\textbf{Deqing Fu} \aspace
\textbf{Tian-Qi Chen} \aspace
\textbf{Robin Jia}\aspace
\textbf{Vatsal Sharan}\\ 
Department of Computer Science\\
University of Southern California\\
\texttt{\{deqingfu,tchen939,robinjia,vsharan\}@usc.edu} \\
}

\date{}




\usepackage[utf8]{inputenc} 
\usepackage[T1]{fontenc}    
\usepackage[backref=section]{hyperref}
\usepackage{url}            
\usepackage{booktabs}       
\usepackage{amsfonts}       
\usepackage{nicefrac}       
\usepackage{microtype}      
\usepackage{xcolor}         

\usepackage{graphicx}
\usepackage{subfigure}

\usepackage{hyperref}
\hypersetup{breaklinks=true,colorlinks=true,citecolor=violet}


\usepackage{amsmath}
\usepackage{amssymb}
\usepackage{mathtools}
\usepackage{amsthm}

\usepackage[capitalize,noabbrev]{cleveref}

\theoremstyle{plain}
\newtheorem{theorem}{Theorem}[section]
\newtheorem{proposition}[theorem]{Proposition}
\newtheorem{lemma}[theorem]{Lemma}

\theoremstyle{definition}
\newtheorem{definition}[theorem]{Definition}

\theoremstyle{remark}
\newtheorem{remark}[theorem]{Remark}

\usepackage[textsize=tiny]{todonotes}

\usepackage{amsthm}
\usepackage{amsmath}
\usepackage{amssymb}
\usepackage{mathrsfs}
\usepackage{graphicx}
\usepackage{mathtools}
\usepackage{enumerate}
\usepackage{enumitem}
\usepackage{footnote}
\usepackage{float}
\usepackage{xspace}
\usepackage{multirow}
\usepackage{nicefrac}
\usepackage{wrapfig}
\usepackage{framed}
\usepackage{url}
\usepackage{caption}
\usepackage{blkarray}
\usepackage{makecell}
\usepackage[makeroom]{cancel}
\usepackage{soul}
\usepackage{comment}
\usepackage{lipsum}  
\usepackage{titletoc}
\usepackage{thmtools,thm-restate}

\usepackage{tikz}
\usetikzlibrary{arrows,chains,matrix,positioning,scopes}

\newtheorem*{theorem*}{Theorem}

\newcommand{\calA}{{\mathcal{A}}}
\newcommand{\calC}{{\mathcal{C}}}

\newcommand{\calE}{{\mathcal{E}}}

\newcommand{\zero}{\boldsymbol{0}}

\newcommand{\bA}{\boldsymbol{A}}
\newcommand{\bB}{\boldsymbol{B}}

\newcommand{\bd}{\boldsymbol{d}}

\newcommand{\bx}{\boldsymbol{x}}
\newcommand{\bxi}{\boldsymbol{\xi}}
\newcommand{\bh}{\boldsymbol{h}}

\newcommand{\bJ}{\boldsymbol{J}}
\newcommand{\bK}{\boldsymbol{K}}
\newcommand{\bX}{\boldsymbol{X}}
\newcommand{\bG}{\boldsymbol{G}}
\newcommand{\bO}{\boldsymbol{O}}
\newcommand{\bM}{\boldsymbol{M}}
\newcommand{\bu}{\boldsymbol{u}}
\newcommand{\by}{\boldsymbol{y}}
\newcommand{\bY}{\boldsymbol{Y}}

\newcommand{\ba}{\boldsymbol{a}}

\newcommand{\bH}{\boldsymbol{H}}
\newcommand{\bQ}{\boldsymbol{Q}}

\newcommand{\bI}{\boldsymbol{I}}

\newcommand{\bS}{\boldsymbol{S}}
\newcommand{\bT}{\boldsymbol{T}}
\newcommand{\bw}{\boldsymbol{w}}
\newcommand{\bW}{\boldsymbol{W}}

\newcommand{\bv}{\boldsymbol{v}}
\newcommand{\bV}{\boldsymbol{V}}
\newcommand{\bs}{\boldsymbol{s}}

\newcommand{\bSigma}{\boldsymbol{\Sigma}}

\newcommand{\btheta}{\boldsymbol{\theta}}

\newcommand{\bbeta}{\boldsymbol{\beta}}

\newcommand{\inner}[1]{ \left\langle {#1} \right\rangle }

\usepackage{cleveref}

\makeatletter
\renewcommand*\env@matrix[1][\arraystretch]{%
  \edef\arraystretch{#1}%
  \hskip -\arraycolsep
  \let\@ifnextchar\new@ifnextchar
  \array{*\c@MaxMatrixCols c}}
\makeatother

\makeatletter
\def\blfootnote{\xdef\@thefnmark{}\@footnotetext}
\makeatother

\newcommand{\gd}{Gradient Descent}
\newcommand{\newton}{Iterative Newton}
\newcommand{\iid}{\overset{\text{i.i.d.}}{\sim}}


\newcommand{\new}[1]{{\color{black} {#1}}}


\usepackage{amsmath,amsfonts,bm}









\def\eqref#1{equation~\ref{#1}}









\def\1{\bm{1}}










\DeclareMathAlphabet{\mathsfit}{\encodingdefault}{\sfdefault}{m}{sl}
\SetMathAlphabet{\mathsfit}{bold}{\encodingdefault}{\sfdefault}{bx}{n}











\newcommand{\R}{\mathbb{R}}



\DeclareMathOperator*{\argmax}{arg\,max}
\DeclareMathOperator*{\argmin}{arg\,min}

\renewcommand{\paragraph}[1]{\vspace{1mm}\noindent\textbf{#1}}


\begin{document}
\maketitle

\begin{abstract}
Transformers excel at \textit{in-context learning} (ICL)---learning from demonstrations without parameter updates---but how they do so remains a mystery. 
Recent work suggests that Transformers may internally run \gd\ (GD), a first-order optimization method, to perform ICL.
In this paper, we instead demonstrate that Transformers learn to approximate second-order optimization methods for ICL.
For in-context linear regression, Transformers share a similar convergence rate as \textit{\newton's Method}; both are exponentially faster than GD.
Empirically, predictions from successive Transformer layers closely match different iterations of Newton's Method \textit{linearly}, with each middle layer roughly computing 3 iterations; thus, Transformers and Newton's method converge at roughly the same rate. 
In contrast, \gd\ converges \textit{exponentially} more slowly.
We also show that Transformers can learn in-context on ill-conditioned data, a setting where \gd\ struggles but \newton\ succeeds. 
Finally, to corroborate our empirical findings, we prove that Transformers can implement $k$ iterations of Newton's method with $k + \mathcal{O}(1)$ layers.
\blfootnote{Our codes are available at \url{https://github.com/DeqingFu/transformers-icl-second-order}.}
\end{abstract}

\section{Introduction} \label{sec:intro}
\vspace{-1mm}

Transformer neural networks \citep{Vaswani2017AttentionIA} have become the default architecture for natural language processing \citep{devlin-etal-2019-bert,brown2020lm,openai2023gpt}.
    As first demonstrated by GPT-3 \citep{brown2020lm}, Transformers excel at \textit{in-context learning} (ICL)---learning from prompts consisting of input-output pairs, without updating model parameters. 
Through in-context learning, Transformer-based Large Language Models (LLMs) can achieve state-of-the-art few-shot performance across a variety of downstream tasks \citep{rae2022scaling,smith2022using,thoppilan2022lamda,chowdhery2022palm}.

Given the importance of Transformers and ICL, many prior efforts have attempted to understand how Transformers perform in-context learning.
Prior work suggests Transformers can approximate various linear functions well in-context \citep{Garg2022WhatCT}.
Specifically to linear regression tasks, prior work has tried to understand the ICL mechanism, and the dominant hypothesis is that Transformers learn in-context by running optimization internally through gradient-based algorithms \citep{Oswald2022TransformersLI,Oswald2023UncoveringMA,Ahn2023TransformersLT,Dai2023WhyCG,mahankali2024one}. 

This paper presents strong evidence for a competing hypothesis:
Transformers trained to perform in-context linear regression learn a strategy much more similar to a second-order optimization method than a first-order method like \gd\ (GD). 
In particular, Transformers approximately implement a second-order method with a convergence rate very similar to Newton-Schulz's Method, also known as the \textit{Iterative Newton's Method}, which iteratively improves an estimate of the inverse of the data matrix to compute the optimal weight vector.
Across many Transformer layers, subsequent layers approximately compute more and more iterations of Newton's Method, with increasingly better predictions; both eventually converge to the optimal minimum-norm solution found by ordinary least squares (OLS). 
Interestingly, this mechanism is specific to Transformers: LSTMs do not learn these same second-order methods, as their predictions do not even improve across layers.



We present both empirical and theoretical evidence for our claims. Empirically, Transformer layers demonstrate a similar rate of convergence to the OLS solution as second-order methods such as \newton, which is substantially faster than the rate of convergence of GD (\cref{fig:convergence}). 
The predictions made by the Transformer at successive layers closely match the predictions made by \newton\ after a proportional number of iterations, showing that they progress in similar ways at the same rate.
In contrast, to match the Transformer's predictions after $k$ layers, GD would have to run for  exponential in $k$ many steps (\cref{fig:heatmap}). Some individual Transformer layers make progress equivalent to hundreds of GD steps: these layers must be doing something more sophisticated than GD.
Furthermore, a crucial aspect of second-order methods is that they can handle ill-conditioned problems by correcting the curvature. We find that the convergence rate of Transformers is not significantly affected by ill-conditioning, which again matches \newton\ but not GD. 
To provide theoretical grounding to our empirical results, we show that Transformer circuits can efficiently implement \newton: one transformer layer can compute one Newton iteration (given $\mathcal O(1)$ pre/post-processing layers), and requires hidden states of dimension $\mathcal O(d)$ for a $d$-dimensional linear regression problem.
Overall, our work provides a mechanistic account of how Transformers perform ICL that explains model behavior better than previous hypotheses, and hints at why Transformers are well-suited for ICL relative to other architectures. 

\vspace{-1mm}
\section{Related Work} \label{sec:related_work}
\vspace{-1mm}

\paragraph{In-context learning by large language models. }
GPT-3 \citep{brown2020lm} first showed that Transformer-based large language models can ``learn'' to perform new tasks from in-context demonstrations (i.e., input-output pairs).
Since then, a large body of work in NLP has studied in-context learning, for instance by understanding how the choice and order of demonstrations affects results \citep{lu-etal-2022-fantastically,liu-etal-2022-makes,rubin-etal-2022-learning,su2023selective,chang-jia-2023-data,nguyen2023context}, studying the effect of label noise \citep{min-etal-2022-rethinking,yoo-etal-2022-ground,wei2023larger}, and proposing methods to improve ICL accuracy \citep{zhao2021calibrate,min-etal-2022-noisy,min-etal-2022-metaicl}.

\paragraph{In-context learning beyond natural language. }
Inspired by the phenomenon of ICL by large language models, subsequent work has studied how Transformers learn in-context beyond NLP tasks.
\citet{Garg2022WhatCT} first investigated Transformers' ICL abilities for various classical machine learning problems, including linear regression. 
We largely adopt their linear regression setup in this work.
\citet{Li2023TransformersAA} formalize in-context learning as an algorithm learning problem.
\citet{han2023incontext} suggests that Transformers learn in-context by performing Bayesian inference on prompts, which can be asymptotically interpreted as kernel regression. 
Other work has analyzed how Transformers do in-context classification \citep{AtaeeTarzanagh2023TransformersAS,tarzanagh2023maxmargin,Zhang2023TrainedTL}, the role of pertaining data \citep{raventós2023pretraining}, and the relationship between model architecture and ICL \citep{lee2023exploring}.

\paragraph{Do Transformers implement Gradient Descent? }
A growing body of work has suggested that Transformers learn in-context by implementing gradient descent within their internal representations.
\citet{Akyrek2022WhatLA} summarize operations that Transformers can implement, such as multiplication and affine transformations, and show that Transformers can implement gradient descent for linear regression using these operations.
Concurrently, \citet{Oswald2022TransformersLI} argue that Transformers learn in-context via gradient descent, where one layer performs one gradient update.
In subsequent work, \citet{Oswald2023UncoveringMA} further argue that Transformers are strongly biased towards learning to implement gradient-based optimization routines.
\citet{Ahn2023TransformersLT} extend the work of \citet{Oswald2022TransformersLI} by showing Transformers can learn to implement preconditioned \gd, where the pre-conditioner can adapt to the data. 
\citet{Bai2023TransformersAS} provide detailed constructions for how Transformers can implement a range of learning algorithms via gradient descent.
Finally, \citet{Dai2023WhyCG} conduct experiments on NLP tasks and conclude that Transformer-based language models performing ICL behave similarly to models fine-tuned via gradient descent\new{; however, concurrent work \citep{shen2023pretrained} argues that real-world LLMs do not perform ICL via gradient descent}.
\citet{mahankali2024one} showed that implementing gradient descent is a global minima for single layer linear self-attention. However, we study deeper models in this work, which can behave differently from single-layer models.
In this paper, we argue that Transformers actually learn to perform in-context learning by implementing a second-order optimization method, not gradient descent\footnote{After an initial version of this paper, \citet{vladymyrov2024linear} found that a variant of \gd\ can mimic Iterative Newton by approximating the inverse implicitly and getting second-order rates, which also supports our claim.}.

\paragraph{Mechanistic interpretability for Transformers. }
Our work attempts to understand the mechanism through which Transformers perform in-context learning.
Prior work has studied other aspects of Transformers' internal mechanisms, including 
reverse-engineering language models \citep{wang2022interpretability}, the grokking phenomenon  \citep{power2022grokking,nanda2023progress}, manipulating attention maps
\citep{hassid2022does}, and circuit finding \citep{conmy2023automated}. 

\paragraph{Theoretical Expressivity of Transformers. } \citet{pmlr-v202-giannou23a} provide a construction of looped transformers to implement \newton's method for solving pseudo-inverse, and each Newton iteration can be implemented by 13 looped Transformer layers. In contrast, our construction needs only one Transformer layer to compute one Newton iteration.

\section{Problem Setup} \label{sec:problem_definition}

\begin{wrapfigure}[]{R}{0.45\linewidth}
\vspace{-6mm}
    \includegraphics[width=\linewidth]{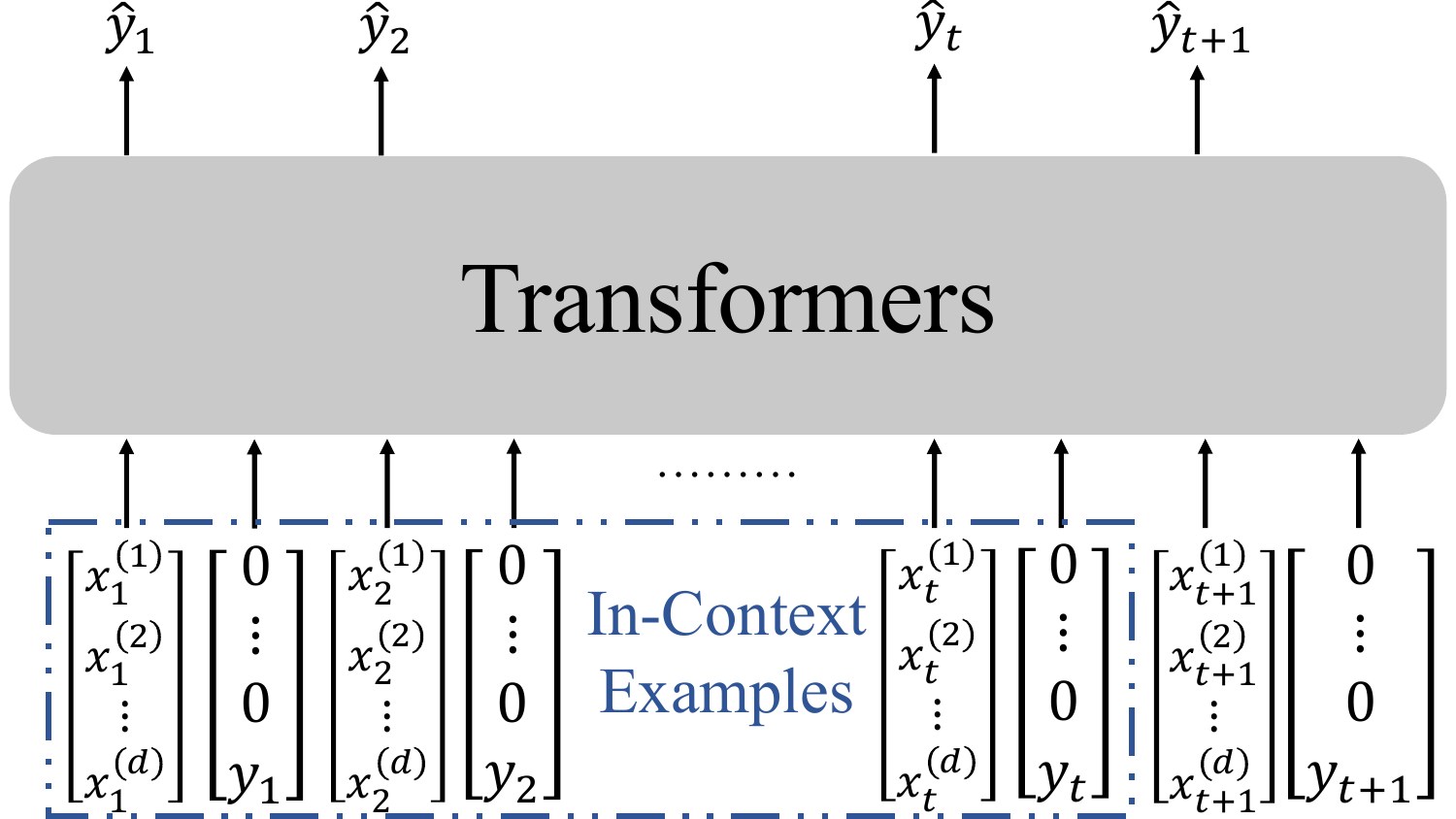}
    \captionof{figure}{Illustration of how Transformers are trained to do in-context linear regression.}
\vspace{-6mm}
\label{fig:transformer_illustration}
\end{wrapfigure}

In this paper, we  focus on the following linear regression task. The task involves $n$ examples $\{\bx_i, y_i\}_{i=1}^n$ where $\bx_i\in \R^d$ and $y_i \in \R$. The examples are generated from the following data generating distribution $P_{\mathcal{D}}$, parameterized by a distribution $\mathcal{D}$ over $(d \times d)$ positive semi-definite matrices. For each sequence of $n$ in-context examples, we first sample  a ground-truth weight vector $\bw^\star \iid \mathcal N(\zero, \bI) \in \mathbb R^d$ and a matrix $\bSigma\iid \mathcal{D}$. For $i\in [n]$, we sample each $\bx_i \iid \mathcal N(\zero, \bSigma)$. The label $y_i$ for each $\bx_i$ is given by $y_i = \bw^{\star\top} \bx_i$. Note that for much of our experiments $\mathcal{D}$ is only supported on the identity matrix $\bI$ and hence $\bSigma=\bI$, but we also consider some distributions over ill-conditioned matrices, which give rise to ill-conditioned regression problems.  Most of our results are on this noiseless setup and results with the noisy setup are in \Cref{ssec:noisy}.

\subsection{Standard Methods for Solving Linear Regression} \label{ssec:other_methods}

\new{Our central research question is:
\begin{center}
    \textit{\textbf{What convergence rate does the algorithm Transformers learn for linear regression achieve?}}
\end{center}}


To investigate this question, we first discuss various known algorithms for linear regression. We then compare them with Transformers empirically in \S\ref{sec:experiments} and theoretically in \S\ref{sec:mechanism}, to evaluate if Transformers are more similar to first-order or second-order methods. We care particularly about algorithms' convergence rates (the number of steps required to reach an $\epsilon$ error).

For any time step $t$, let $\bX^{(t)} = \begin{bmatrix}
    \bx_1  & \cdots & \bx_t
\end{bmatrix}^\top$ be the data matrix and $\by^{(t)} = \begin{bmatrix}
    y_1 & \cdots & y_t
\end{bmatrix}^\top$ be the labels for all the datapoints seen so far. Note that since $t$ can be smaller than the data dimension $d$, $\bX^{(t)}$ can be singular.  We now consider various algorithms for making predictions for $\bx_{t+1}$ based on $\bX^{(t)}$ and $\by^{(t)}$. When it is clear from context,  we drop the superscript and refer to $\bX^{(t)}$ and $\by^{(t)}$ as $\bX$ and $\by$, where  $\bX$ and $\by$ correspond to all the datapoints seen so far.

\paragraph{Ordinary Least Squares.}
This method finds the minimum-norm solution to the objective:
\begin{equation}
    \mathcal L(\bw \mid \bX, \by) = \frac{1}{2n}\|\by - \bX \bw \|_2^2.
    \label{eqn:obj}
\end{equation}
The Ordinary Least Squares (OLS) solution has a closed form given by the Normal Equations:\begin{equation}
    \hat{\bw}^\mathrm{OLS} = (\bX^\top \bX)^\dagger \bX^\top \by 
\end{equation}
where $\bS := \bX^\top \bX$  and $\bS^\dagger$ is the pseudo-inverse \citep{moonre1920pseudo} of $\bS$.

\paragraph{Gradient Descent.}  Gradient descent (GD) is a first-order method which finds the weight vector $\hat{\bw}^\mathrm{GD}$ with initialization $\hat{\bw}^\mathrm{GD}_0 = \zero$ using the iterative update rule:
\begin{equation}
\begin{aligned}
    & \hat{\bw}^\mathrm{GD}_{k+1} = \hat{\bw}^\mathrm{GD}_{k} - \eta \nabla_{\bw} \mathcal L(\hat{\bw}^\mathrm{GD}_{k} \mid \bX, \by).
\end{aligned}
\end{equation}
It is known that GD requires $\mathcal O\left(\kappa(\bS) \log(1/\epsilon)\right)$  steps to converge to an $\epsilon$ error where $\kappa(\bS) = \frac{\lambda_{\max}(\bS)}{\lambda_{\min}(\bS)}$ is the \textit{condition number}. Thus, when $\kappa(\bS)$ is large, GD converges slowly \citep{boyd2004convex}. 

\paragraph{Online Gradient Descent.}
While GD computes the gradient with respect to the full data matrix $\bX$ at each iteration, Online Gradient Descent (OGD) is an online algorithm that only computes gradients on the newly received data point $\{\bx_k, y_k\}$ at step $k$:
\begin{equation}
    \hat{\bw}^\mathrm{OGD}_{k+1} = \hat{\bw}^\mathrm{OGD}_{k} - \eta_k \nabla_{\bw} \mathcal L(\hat{\bw}^\mathrm{OGD}_{k} \mid \bx_k, y_k).
\end{equation}
Picking $\eta_k = \frac{1}{\|\bx_k\|_2^2}$ ensures that the new weight vector $\hat{\bw}^\mathrm{OGD}_{k+1}$ makes zero error on $\{\bx_k, y_k\}$.

\paragraph{\newton's Method.} This is a second-order method which finds the weight vector $ \hat{\bw}^\mathrm{Newton}$ by iteratively apply Newton's method to finding the pseudo inverse of $\bS = \bX^\top \bX$ \citep{schulz1933inverse,adi1965ai}. 
\begin{equation}
    \begin{aligned}
        &\bM_0 = \alpha \bS \textrm{, where } \alpha = \frac{2}{\|\bS \bS^\top\|_2 }, ~~ \hat{\bw}^\mathrm{Newton}_0 = \bM_0 \bX^\top \by, \\
        & \bM_{k+1} = 2 \bM_{k} - \bM_k \bS \bM_k, ~~ \hat{\bw}^\mathrm{Newton}_{k+1} =  \bM_{k+1}  \bX^\top \by. 
    \end{aligned}
\end{equation}
This computes an approximation of the  psuedo inverse using the moments of $\bS$. In contrast to GD, the \newton's method only requires $\mathcal O(\log \kappa(\bS) + \log \log (1/\epsilon))$ steps to converge to an $\epsilon$ error \citep{soderstrom1974onp,Pan1991AnIN}. Note that this is exponentially faster than the convergence rate of GD. 
We discuss additional algorithms such as Conjugate Gradient, BFGS, and L-BFGS in the Appendix \ref{sec:second-order}.

\vspace{-1mm}
\subsection{Solving Linear Regression with Transformers}
\vspace{-1mm}

We will use neural network models such as Transformers to solve this linear regression task. As shown in \Cref{fig:transformer_illustration}, at time step $t+1$, the model sees the first $t$ in-context examples $\{\bx_i,y_i\}_{i=1}^t$, and then makes predictions for $\bx_{t+1}$, whose label $y_{t+1}$ is not observed by the Transformers model.

We randomly initialize our models and then train them on the linear regression task to make predictions for every number of in-context examples $t$, where $t\in [n]$. Training and test data are both drawn from $P_{\mathcal{D}}$. To make the input prompts contain both $\bx_i$ and $y_i$,  we follow same the setup as \citet{Garg2022WhatCT}'s to zero-pad $y_i$'s, and use the same GPT-2 model \citep{radford2019language} with softmax activation and causal attention mask  (discussed later in \cref{def:attn}). 

We now present the key mathematical details for the Transformer architecture, and how  they can be used for in-context learning.
First, the causal attention mask enforces that attention heads can only attend to hidden states of previous time steps, and is defined as follows.

\begin{definition}[Causal Attention Layer]\label{def:attn}
    A \textbf{causal} attention layer with $M$ heads and activation function $\sigma$  is denoted as $\mathrm{Attn}$ on any input sequence $\bH = \begin{bmatrix}
        \bh_1, \cdots, \bh_N
    \end{bmatrix} \in \mathbb R^{D \times N}$, where $D$ is the dimension of hidden states and $N$ is the sequence length. In the vector form,
    \begin{equation}
        \tilde{\bh}_t = [\mathrm{Attn}(\bH)]_t = \bh_t +  \sum_{m=1}^M \sum_{j=1}^t \sigma \left(\inner{\bQ_m \bh_t, \bK_m \bh_j}\right) \cdot \bV_m \bh_j.
    \end{equation}
\end{definition}
\citet{Vaswani2017AttentionIA} originally proposed the Transformer architecture with the Softmax activation function for the attention layers. Later works have found that replacing $\mathrm{Softmax}(\cdot)$ with $\frac{1}{t} \mathrm{ReLU}(\cdot)$ does not hurt model performance \citep{Cai2022EfficientViTEL,shen2023study,wortsman2023replacing}. The Transformers architecture is defined by putting together attention layers with feed forward layers:

\begin{definition}[Transformers]
    An $L$-layer decoder-based transformer with Causal Attention Layers is denoted as $\mathrm{TF}_{\btheta}$ and is a composition of a MLP Layer (with a skip connection) and a Causal Attention Layers. For input sequence $\bH^{(0)}$, the transformers $\ell$-th hidden layer is given by
    \begin{equation}
      \mathrm{TF}_{\btheta}^{\ell}(\bH^{(0)}) :=  \bH^{(\ell)} = \mathrm{MLP}_{\btheta_\mathrm{mlp}^{(\ell)}} \left(\mathrm{Attn}_{\btheta_\mathrm{attn}^{(\ell)}} (\bH^{(\ell-1)})\right). \nonumber
    \end{equation}
    where $\btheta = \{\btheta_\mathrm{mlp}^{(\ell)}, \btheta_\mathrm{attn}^{(\ell)}\}_{\ell=1}^L$ and $\btheta_\mathrm{attn}^{(\ell)} = \{\bQ_m^{(\ell)}, \bK_m^{(\ell)}, \bV_m^{(\ell)}\}_{m=1}^M$ has $M$ heads at layer $\ell$. 
\label{def:transformers}
\end{definition}

In particular for the linear regression task, Transformers perform in-context learning as follows
\begin{definition}[Transformers for Linear Regression] \label{def:readout}
Given in-context examples $\{\bx_1, y_1, \dotsc, \bx_t, y_t\}$, Transformers make predictions on a query example $\bx_{t+1}$  through a readout layer parameterized as $\btheta_{\mathrm{readout}} = \{\bu, v\}$, and the prediction $\hat{y}_{t+1}^{\mathrm{TF}} $ is given by
\begin{align*}
    \hat{y}_{t+1}^{\mathrm{TF}} &:= \mathrm{ReadOut} \Big[\underbrace{\mathrm{TF}_{\btheta}^L (\{\bx_1, \by_1, \cdots, \bx_t, \by_t, \bx_{t+1}\})}_{\bH^{(L)}} \Big] = \bu^\top \bH^{(L)}_{:,2t+1} + v.
\end{align*}
\end{definition}

\new{To compare the rate of convergence of iterative algorithms to that of Transformers, we treat the layer index $\ell$ of Transformers as analogous to the iterative step $k$ of algorithms discussed in \S \ref{ssec:other_methods}. Note that for Transformers, we need to re-train the $\mathrm{ReadOut}$ layer for every layer index $\ell$ so that they can improve progressively (see \S\ref{ssec:progress} and for experimental details) for linear regression tasks.}

\begin{figure*}[t]
    \centering
    \subfigure[Transformers]{\includegraphics[width=0.315\linewidth]{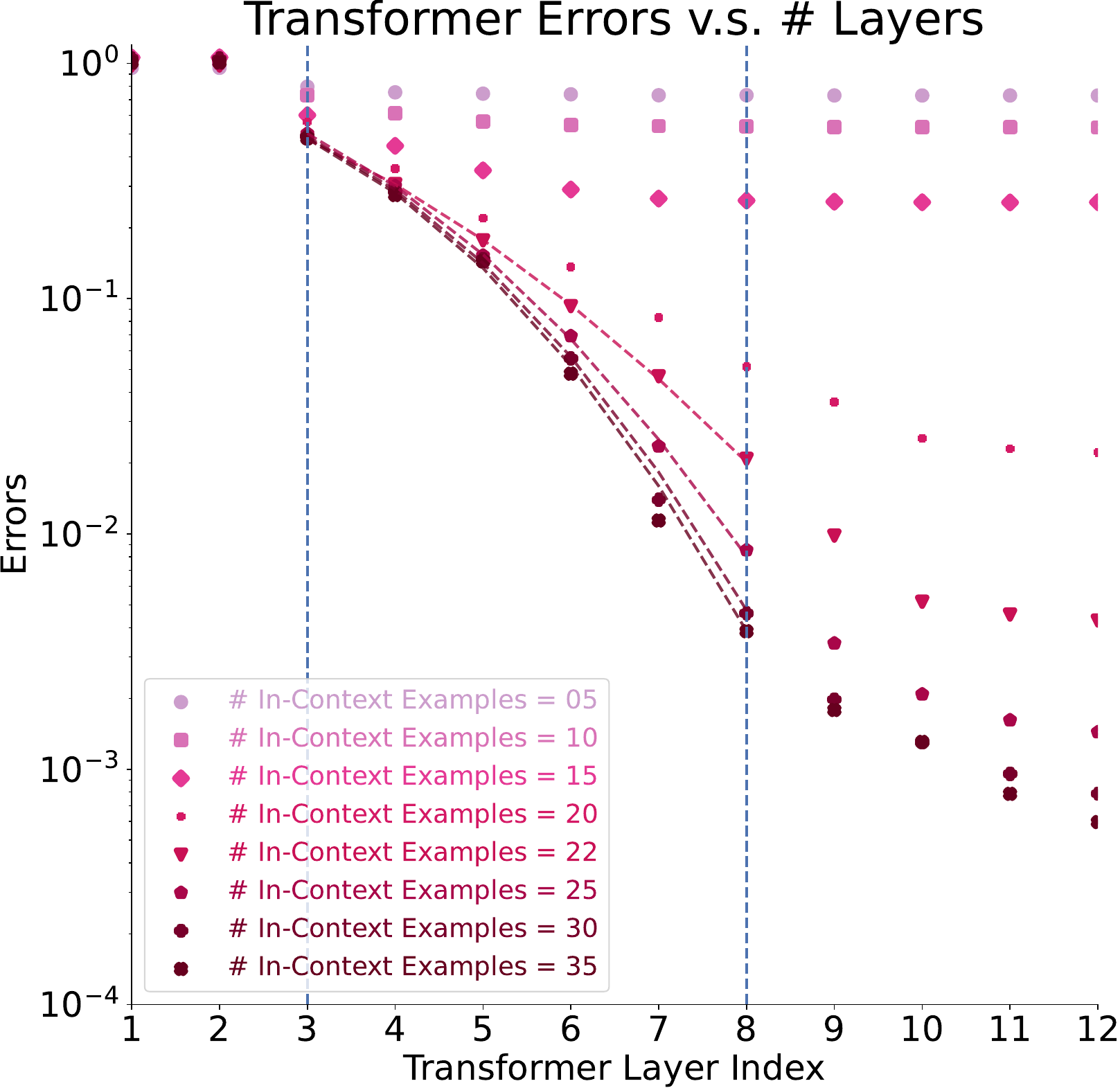} \label{subfig:transformers-convergence} } 
    \subfigure[Iterative Newton's Method]{\includegraphics[width=0.315\linewidth]{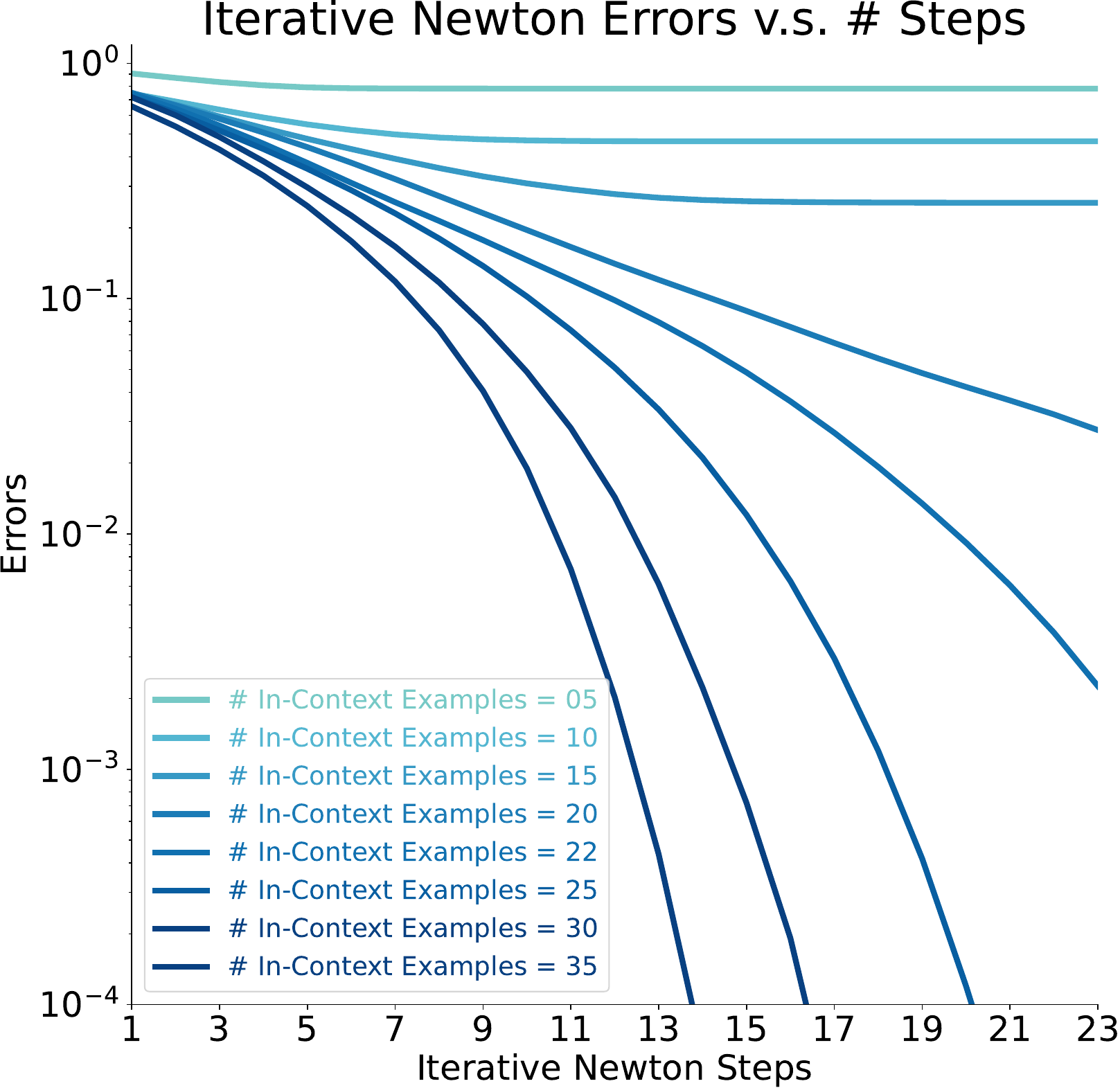} \label{subfig:newton-convergence} } 
    \subfigure[Gradient Descent]{\includegraphics[width=0.315\linewidth]{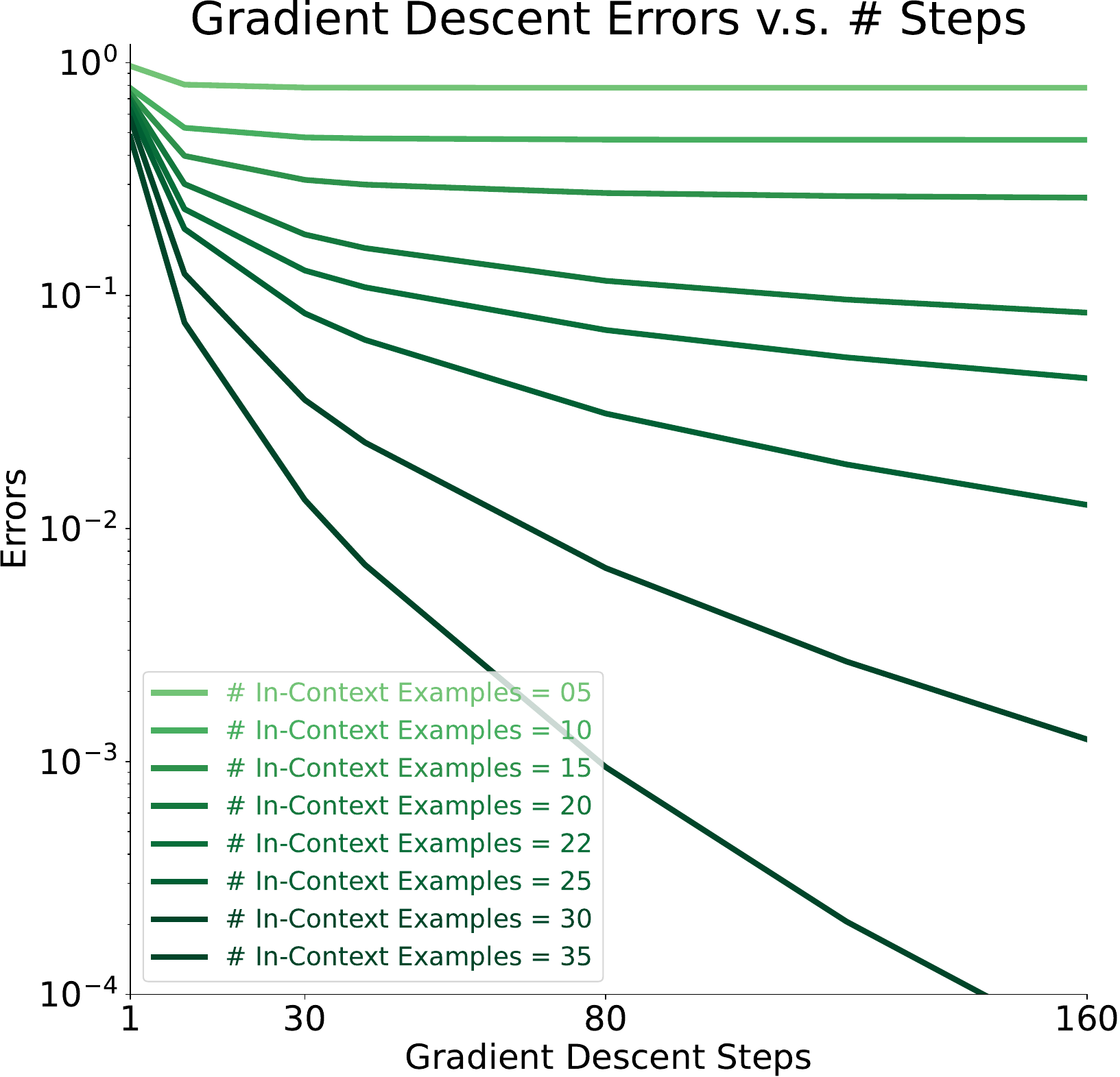} \label{subfig:gd-convergence}} 
    \caption{\textbf{Convergence of Algorithms. Similar to \newton\ and GD, Transformer's performance improve over the layer index $\ell$. When $n > d$, the Transformer model, from layers 3 to 8, demonstrates a superlinear convergence rate, similar to \newton, while GD, with fixed step size, is sublinear. Later layers of Transformers show a slower convergence rate, and we hypothesize they have little incentive to implement the algorithm precisely since the error is already very small. A 24-layer Transformer model exhibits the same superlinear convergence (\Cref{fig:convergence-24layer} in \S\ref{app:24-layer}).}}
    \vspace{-1.5ex}
    \label{fig:convergence}
\end{figure*}

\vspace{-1mm}
\subsection{Measuring Algorithmic Similarity}
\vspace{-1mm}

We propose two metrics to measure the similarity between linear regression algorithms.

\paragraph{Similarity of Errors.} This metric aims to measure similarity of algorithms through comparing prediction errors. For a linear regression algorithm $\calA$,
let $ \calA(\bx_{t+1} \mid \{\bx_i, y_i\}_{i=1}^{t})$ 
denote its prediction on the $(t+1)$-th in-context example $\bx_{t+1}$ after observing the first $t$ examples (see \Cref{fig:transformer_illustration}). 
We write $\calA(\bx_{t+1}) := \calA(\bx_{t+1} \mid \{\bx_i, y_i\}_{i=1}^{t})$ for brevity. Errors (i.e., residuals) on the sequence are:\footnote{the indices start from 2 to $n+1$ because we evaluate all cases where $t$ can choose from $1,\cdots, n$.}
\begin{equation}
    \calE(\calA \mid \{\bx_i, y_i\}_{i=1}^{n+1}) = \Big[
        \calA(\bx_{2}) - y_2 ,\cdots, \calA(\bx_{n+1}) - y_{n+1}
   \Big]^\top. \nonumber
\end{equation}
The similarity of errors for two algorithms $\calA_a$ and $\calA_b$ is the expected cosine similarity of their errors on a randomly sampled data sequence:
\begin{equation*}
    \mathrm{SimE}(\calA_a, \calA_b) 
    =   \mathop{\mathbb E}_{\{\bx_i, y_i\}_{i=1}^{n+1} \sim P_{\mathcal{D}}} \Bigg[
    \calC\Big(\calE(\calA_a | \{\bx_i, y_i\}_{i=1}^{n+1}), 
    \calE(\calA_b | \{\bx_i, y_i\}_{i=1}^{n+1})\Big)\Bigg]. 
\end{equation*}

Here $\calC(\bu, \bv) = \frac{
    \inner{\bu, \bv}
    }{\|\bu\|_2 \|\bv\|_2}$ is the cosine similarity, $n$ is the total number of in-context examples, and $P_{\mathcal{D}}$ is the data generation process discussed previously. 

\paragraph{Similarity of Induced Weights.}
All standard algorithms for linear regression estimate a weight vector $\hat{\bw}$. 
While neural ICL models like Transformers do not explicitly learn such a weight vector, similar to \citet{Akyrek2022WhatLA},
we can \emph{induce} an implicit weight vector $\tilde{\bw}$ learned by any algorithm $\calA$ by fitting a weight vector to its predictions.
We can then measure similarity of algorithms by comparing the induced $\tilde{\bw}$.
To do this, for any fixed sequence of $t$ in-context examples  $\{\bx_i, y_i\}_{i=1}^{t}$, we sample $T\gg d$ query examples  $\tilde{\bx}_{k} \iid \mathcal N(\zero, \bSigma)$, where $k\in [T]$. For this fixed sequence of in-context examples $\{\bx_i, y_i\}_{i=1}^{t}$, we create $T$ in-context prediction tasks and use the algorithm $\calA$ to make predictions $\calA(\tilde{\bx}_k \mid \{\bx_i, y_i\}_{i=1}^{t})$. We define the induced data matrix and labels as 
\begin{equation}
    \tilde{\bX} = \begin{bmatrix}
        \tilde{\bx}_1^\top \\  \vdots \\ \tilde{\bx}_T^\top
    \end{bmatrix}  \qquad \tilde{\bY} = \begin{bmatrix}
        \calA(\tilde{\bx}_1 \mid \{\bx_i, y_i\}_{i=1}^{t}) \\ 
        \vdots \\ 
        \calA(\tilde{\bx}_T \mid \{\bx_i, y_i\}_{i=1}^{t})
    \end{bmatrix}.
\end{equation}
The induced weight vector for $\calA$ and these $t$ examples is:
\begin{equation}
    \tilde{\bw}_t(\calA) := \tilde{\bw}_t(\calA \mid \{\bx_i, y_i\}_{i=1}^t) = (\tilde{\bX}^\top \tilde{\bX})^{-1} \tilde{\bX}^\top \tilde{\bY}.
\end{equation}
The similarity of induced weights between two algorithms $\calA_a$ and $\calA_b$ is the expected average cosine similarity\footnote{Alternative metrics such as $\ell_2$ distance gives the same observation. Here cosine similarity is better since errors usually have small magnitudes, and directions of induced weights are meaningful.} of induced weights $\tilde{\bw}_t(\calA_a)$ and $\tilde{\bw}_t(\calA_b)$ over all possible $1 \leq t \leq n$, on a randomly sampled data sequence:
\begin{equation*}
    \mathrm{SimW}(\calA_a, \calA_b) =  \mathop{\mathbb E}_{\{\bx_i, y_i\}_{i=1}^{n} \sim P_{\mathcal{D}}} \Bigg[
    \frac{1}{n} \sum_{t=1}^n
    \calC\Big(\tilde{\bw}_t(\calA_a | \{\bx_i, y_i\}_{i=1}^t), \tilde{\bw}_t(\calA_b | \{\bx_i, y_i\}_{i=1}^t))\Big)\Bigg].
\end{equation*}


\paragraph{Matching steps between algorithms.} Each algorithm converges to its predictions after several \textbf{steps} --- for example the number of iterations for \newton\ and GD, and  the number of layers for Transformers (see \Cref{ssec:progress}). 
When comparing two algorithms, given a choice of steps for the first algorithm, we match it with the steps for the second algorithm that maximize similarity. 
\begin{definition}[Best-matching Steps] Let $\mathcal M$ be the metric for evaluating similarities between two algorithms $\calA_a$ and $\calA_b$, which have steps $p_a \in [0, T_a]$ and $p_b \in [0, T_b]$, respectively. For a given choice of $p_a$, we define the best-matching number of steps of algorithm $\calA_b$ for $\calA_a$ as: 
\begin{equation}
    p_b^{\mathcal M}(p_a) := \argmax_{p_b \in [0, T_b]} \mathcal M(\calA_a(\cdot \mid p_a), \calA_b(\cdot \mid p_b)).
\end{equation} 
\vspace{-3.4mm}
\label{def:matching}
\end{definition}
In our experiments, we chose $T_a, T_b$ be large enough integers so the algorithms converge. The matching processes can be visualized as heatmaps as shown in \Cref{fig:heatmap}, where best-matching steps are highlighted. This enables us to compare the rate of convergence of algorithms. 
In particular, if two algorithms converge at the same rate, the best matching steps between the two algorithms should follow a linear trend.
We will discuss these results in \S\ref{sec:experiments}. See \Cref{fig:best-matching} on how best-matching steps help compare the convergence rates.

\vspace{-1mm}
\section{Experimental Evidence} \label{sec:experiments}
\vspace{-1mm}

We primarily study the Transformers-based GPT-2 model with 12 layers and 8 heads per layer. Alternative configurations with fewer heads per layer, or with more layers, also support our findings; we defer them to \S \ref{app:1-head} and \S\ref{app:24-layer}. We initially focus on isotropic cases where $\bSigma = \bI$ and later consider ill-conditioned $\bSigma$ in \S \ref{ssec:ill}. Our training setup is exactly the same as \citet{Garg2022WhatCT}: models are trained with at most $n = 40$ in-context examples for $d = 20$ (with the same learning rate, batch size etc.).  

We claim that Transformers learn high-order optimization methods in-context. We provide evidence that Transformers improve themselves with more layers in \S \ref{ssec:progress}; Transformers share the same rate of convergence as \newton, exponentially faster than that of GD, in \S \ref{ssec:tf_similar_newton}; and they also  perform well on ill-conditioned problems in \S \ref{ssec:ill}. Finally, we contrast  Transformers with LSTMs in \S \ref{ssec:lstm}.

\vspace{-1mm}
\subsection{Transformers improve progressively over layers} \label{ssec:progress}
\vspace{-1mm}

Many known algorithms for linear regression, including GD, OGD, and \newton, are \emph{iterative}:
their performance progressively improves as they perform more iterations, eventually converging to a final solution.
How can a Transformer implement such an iterative algorithm?
\citet{Oswald2022TransformersLI} propose that deeper \emph{layers} of the Transformer may correspond to more iterations;
in particular, they show that there exist Transformer parameters such that each attention layer performs one step of GD.

Following this intuition, we first investigate whether the predictions of a trained Transformer improve as the layer index $\ell$ increases. 
For each layer of hidden states $\bH^{(\ell)}$ (see \cref{def:transformers}), we re-train the \verb|ReadOut| to predict $y_t$ for each $t$; the new predictions are given by $\mathrm{ReadOut}^{(\ell)}\left[\bH^{(\ell)}\right]$. Thus for each input prompt, there are $L$ Transformer predictions parameterized by layer index $\ell$.
All parameters besides the \verb|ReadOut| layer parameters are kept frozen.

As shown in \Cref{subfig:transformers-convergence} (and \cref{subfig:transformers} in the Appendix), as we increase the layer index $\ell$, the prediction performance improves progressively. 
Hence, Transformers progressively improve their predictions over layers $\ell$, similar to how iterative algorithms improve over steps. \new{Such observations are consistent with language tasks where Transformers-based language models also improve their predictions along with layer progressions \citep{geva-etal-2022-transformer,chuang2023dola}.}

\begin{figure*}[t]
    \centering
\includegraphics[width=0.49\linewidth]{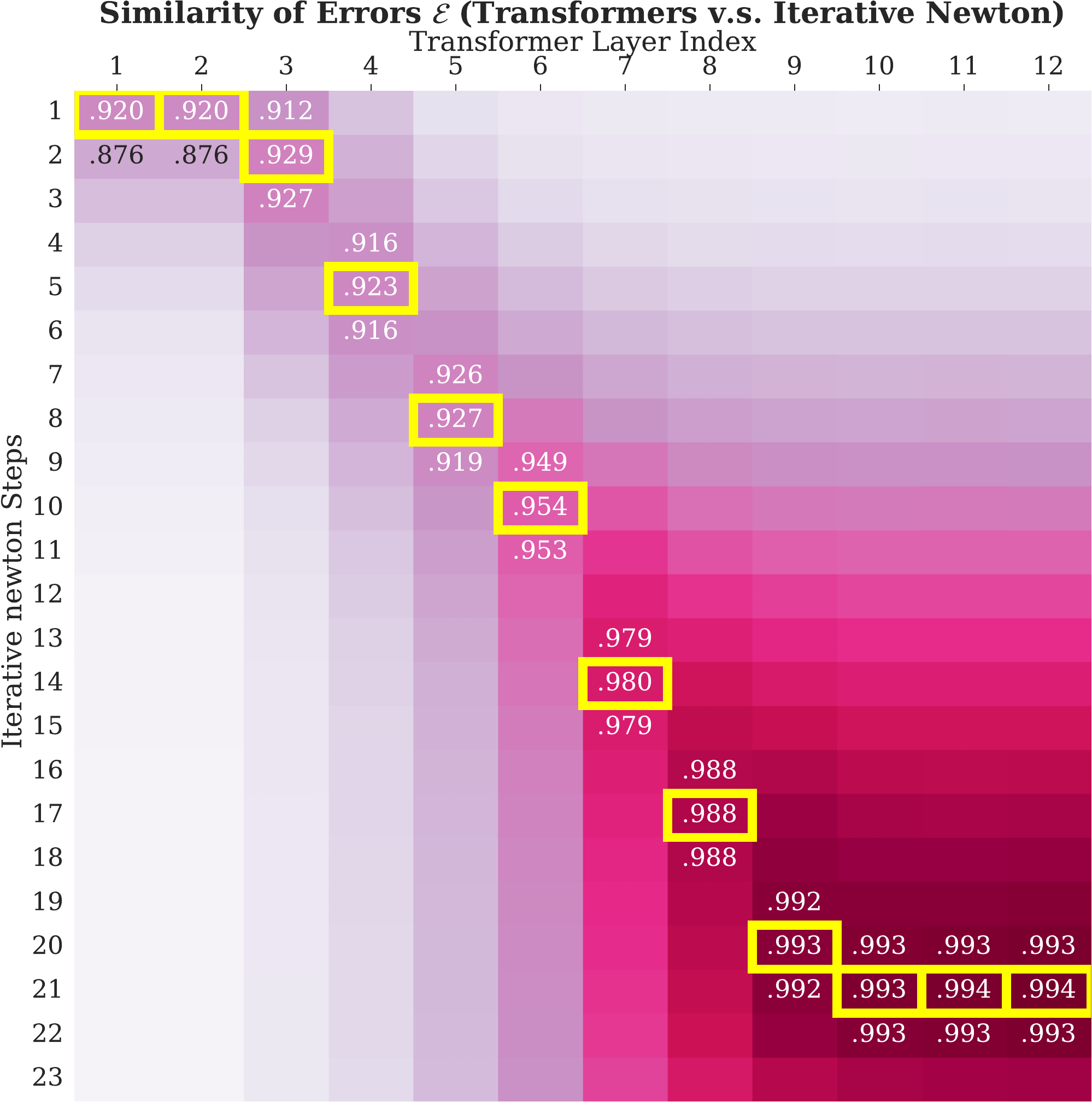}
\includegraphics[width=0.49\linewidth]{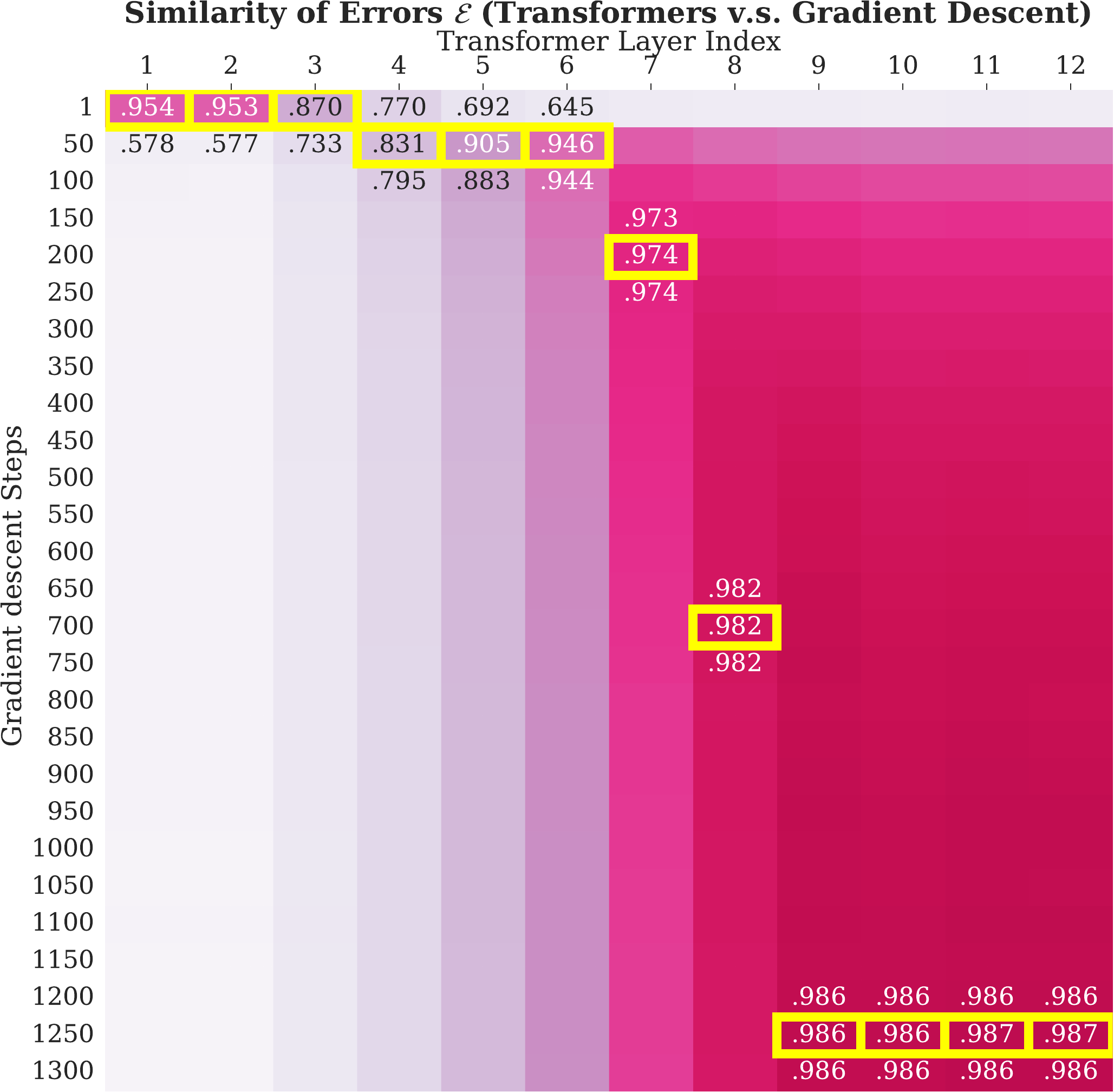}
\caption{\textbf{Heatmaps of Similarity.} The best matching steps are highlighted in yellow. Transformers layers show a linear trend with \newton\ steps but an exponential trend with GD. This suggests Transformers and \newton\ have the same convergence rate that is exponentially faster than GD. See \Cref{fig:gd_heatmap_log_scale} for an additional heatmap where GD's steps are shown in log scale: on that plot there is a linear correspondence between Transformers and GD's steps. This further strengthens the claim that Transformers have an exponentially faster rate of convergence than GD.}
\vspace{-1ex}
\label{fig:heatmap}
\end{figure*}

\vspace{-1mm}
\subsection{Transformers are more similar to second-order methods, such as \newton} \label{ssec:tf_similar_newton}
\vspace{-1mm}

We now test the more specific hypothesis that the iterative updates performed across Transformer layers are similar to the iterative updates for known iterative algorithms. First, \Cref{fig:convergence} shows that the middle layers of Transformers converge at a rate similar to \newton, and faster than GD.
In particular, the Transformer and \newton\ both converge at a superlinear rate, while GD converges at a sublinear rate. 

Next, we analyze whether each layer $\ell$ of the Transformer corresponds to performing $k$ steps of some iterative algorithm, for some $k$ depending on $\ell$.
We focus here on GD\ and \newton's Method; we will discuss online algorithms in \Cref{ssec:lstm}, and additional optimization methods in \Cref{sec:second-order}. We will discuss results on noisy linear regression tasks in \Cref{ssec:noisy}.

For each layer $\ell$ of the Transformer,
we measure the best-matching similarity (see Def.~\ref{def:matching}) with candidate iterative algorithms with the optimal choice of the number of steps $k$.
As shown in \Cref{fig:heatmap}, the Transformer has very high error similarity with \newton's method at all layers.
Moreover, we see a clear \emph{linear} trend between layer 3 and layer 9 of the Transformer, where each layer appears to compute roughly 3 additional iterations of \newton's method.
This trend only stops at the last few layers because both algorithms converge to the OLS solution;
Newton is known to converge to OLS (see \S\ref{ssec:other_methods}), and we verify in \Cref{app:exp_standard} that the last few layers of the Transformer also basically compute OLS (see \Cref{fig:over_examples} in the Appendix). 
We observe the same trends when using similarity of induced weights as our similarity metric (see \Cref{fig:full_heatmap_sim_w} in the Appendix).\Cref{fig:bfgs} in the Appendix shows that there is a similar \textit{linear} trend between Transformer and BFGS, an alternative quasi-Newton method. This is perhaps not surprising, given that BFGS also gets a superlinear convergence rate for linear regression \cite{nocedal1999numerical}.
Thus, we do not claim that Transformers specifically implement \newton, only that they (approximately) implement some second-order method.

In contrast, even though GD\ has a comparable similarity with the Transformers at later layers, their best matching follows an \textit{exponential} trend. As discussed in the \Cref{ssec:other_methods}, for well-conditioned problems where  $\kappa \approx 1$, to achieve $\epsilon$ error, the rate of convergence of GD\ is $\mathcal{O}(\log (1/\epsilon))$ while the rate of convergence of \newton\ is $\mathcal{O}(\log \log (1/\epsilon))$. Therefore the rate of convergence of \newton\ is exponentially faster than GD.  Transformer's \textit{linear} correspondence with \newton\ and its \textit{exponential} correspondence with GD\ provides strong evidence   that the rate of convergence of Transformers is similar to \newton, i.e., $\mathcal O(\log \log (1/\epsilon))$. We also note that it is not possible to significantly improve GD's convergence rate without using second-order methods: \citet{nemirovskii1983problem} showed a $ \Omega\big( \log(1/\epsilon) \big)$  lower bound on the convergence rate of gradient-based methods for smooth and strongly convex problems, and \cite{Arjevani2016OnLower} shows a similar lower bound specifically for quadratic problems. 

In the Appendix, we show that limited-memory BFGS \cite{Liu1989OnTL} and conjugate gradient (see \Cref{fig:conjugate-gradient}), which do not use full-second order information, also converge slower than Transformers. This provides further evidence for the usage of second-order information by Transformers. We also show more evidence by investigating alternative function classes such as linear regression with noises in \Cref{ssec:noisy} and 2-layer neural network with ReLU or Tanh activation function in \Cref{app:non-linear}. 

Overall, we conclude that a Transformer trained to perform in-context linear regression learns to implement an algorithm that is very similar to second-order methods, such as \newton's method, not GD. 
Starting at layer 3, subsequent layers of the Transformer compute more and more iterations of \newton's method.
This algorithm successfully solves the linear regression problem, as it converges to the optimal OLS solution in the final layers.

\begin{figure*}[t]
\centering
    \begin{minipage}[b]{0.32\linewidth}
            \centering
            \includegraphics[width=0.95\linewidth]{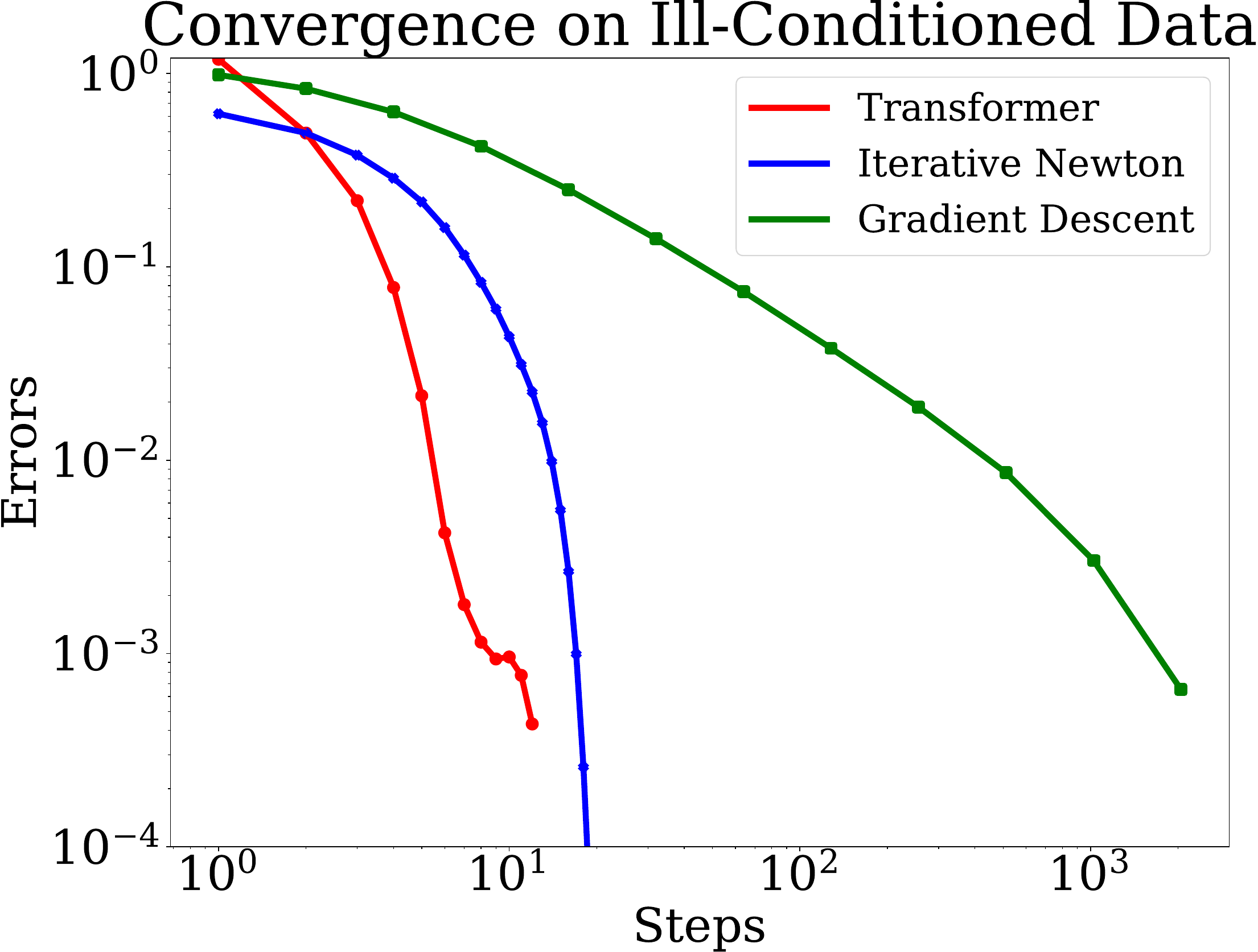}
            \vspace{1mm}
            \caption{Transformers performance on ill-conditioned data. Given 40 in-context examples, Transformers and \newton\ converge similarly and they both can converge to the OLS solution quickly whereas GD\ suffers.}
            \label{fig:ill}
    \end{minipage}
    \hfill
    \begin{minipage}[b]{0.65\linewidth}
    \centering
    \includegraphics[width=0.42\linewidth]{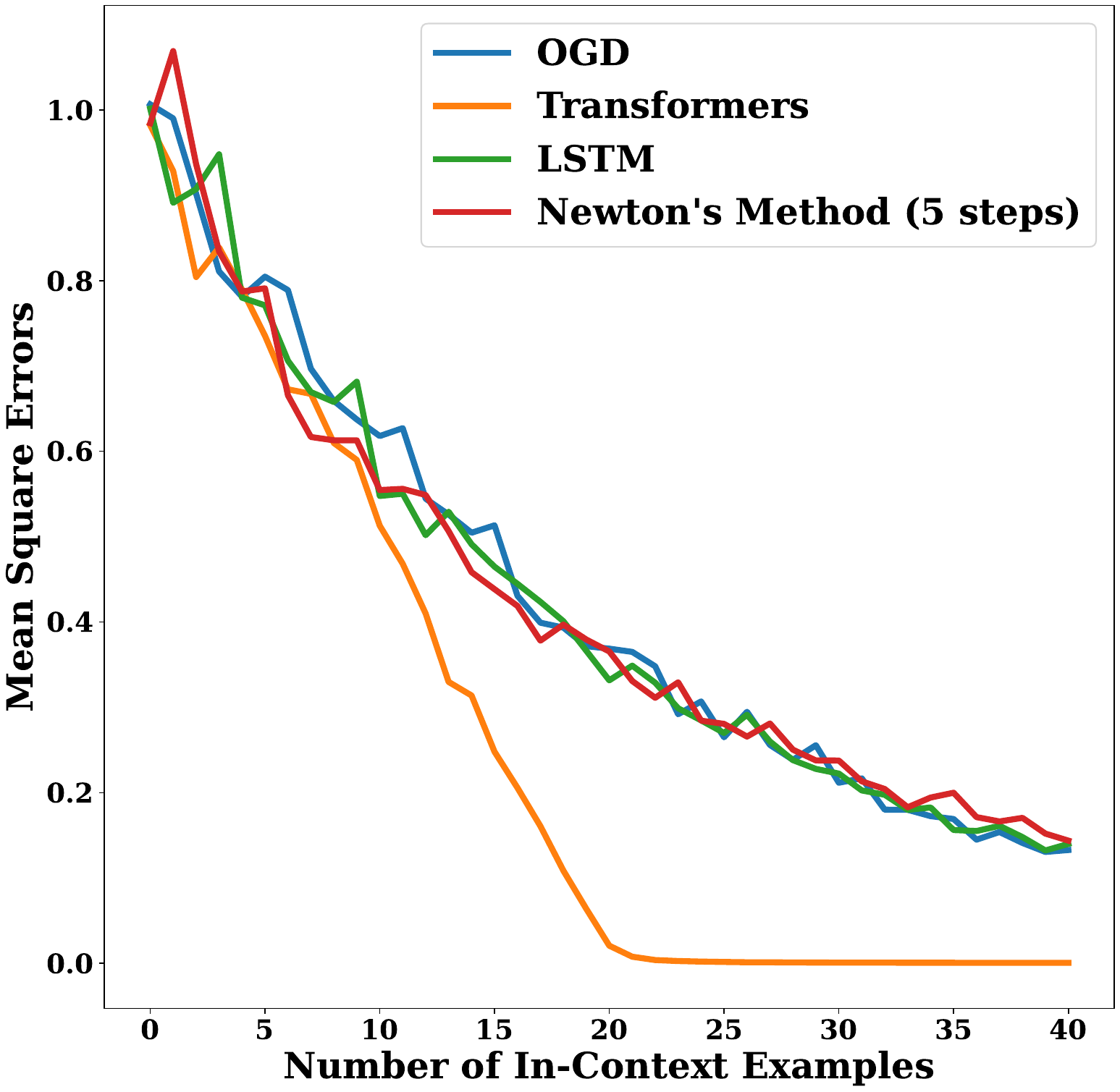}
    \includegraphics[width=0.43\linewidth]{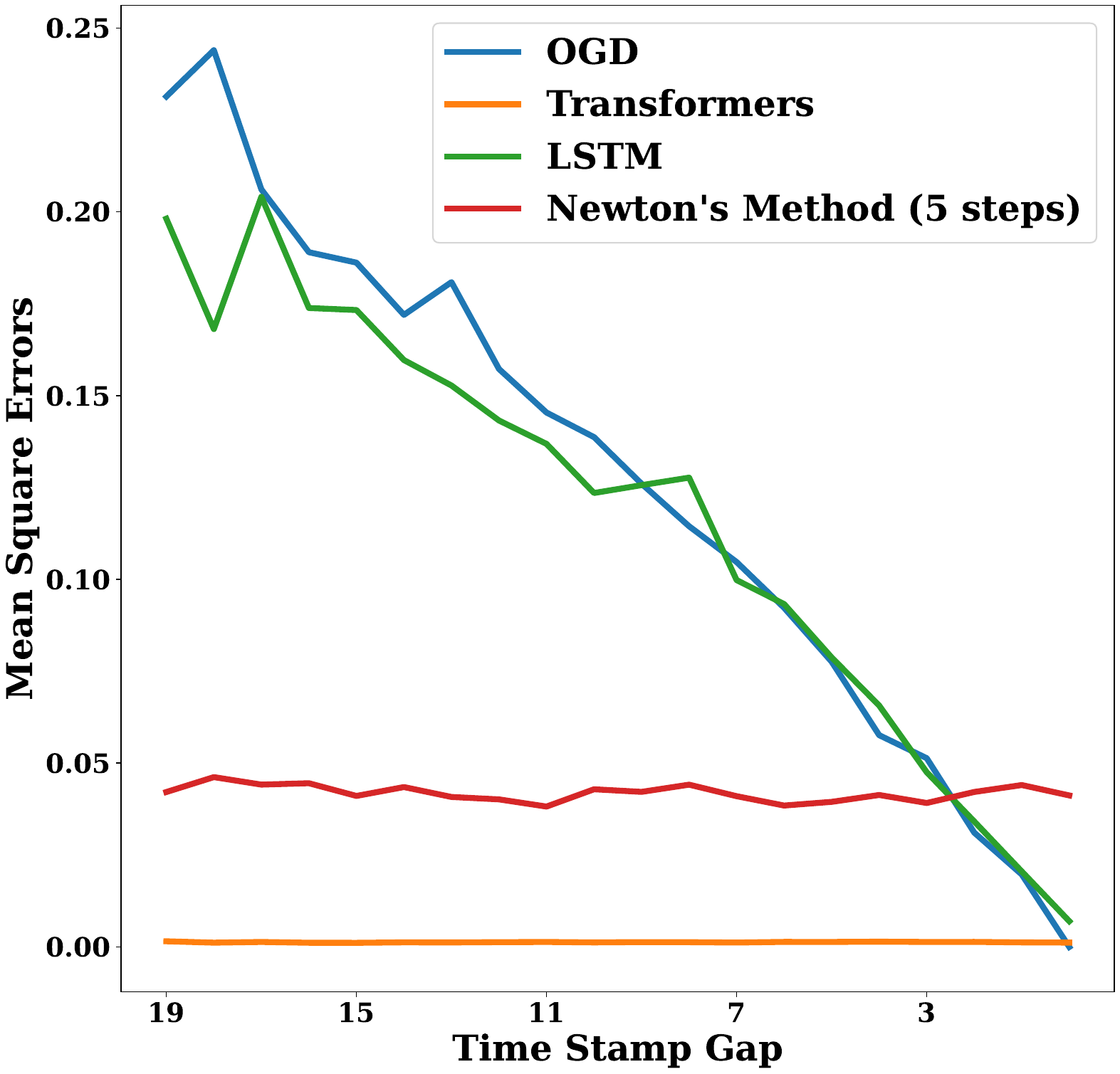}
    \caption{In the left figure, we measure model predictions with normalized MSE. Though LSTM is seemingly most similar to Newton's Method with only 5 steps, neither algorithm converges yet. OGD also has a similar trend as LSTM. In the right figure, we measure the model's error rate on example $\bx_{n-g}$ after seeing $n$ examples, for different values of the time stamp gap $g$ (see \cref{ssec:forgetting})}, and find both Transformers and not-converged Newton have better memorization than LSTM and OGD.
    \label{fig:lstm-experiment}
    \end{minipage}
\end{figure*}


\vspace{-1mm}
\subsection{Transformers perform well on ill-conditioned data}  \label{ssec:ill}
\vspace{-1mm}

\vspace{-1mm}

We repeat the same experiments with data $\bx_i \iid \mathcal N(\zero, \bSigma)$ sampled from an ill-condition covariance matrix $\bSigma$ with condition number $\kappa(\bSigma) = 100$, and eigenbasis chosen uniformly at random.  The first $d/2$ eigenvalues of $\bSigma$  are 100, and the last $d/2$ are 1. Note that choosing the eigenbasis uniformly at random for \emph{each} sequence ensures that there is a different covariance matrix $\bSigma$ for {each} sequence of datapoints.

As shown in \Cref{fig:ill}, the Transformer model's performance still closely matches \newton's Method with 21 iterations, same as when $\bSigma = \bI$ (see layer 10-12 in \Cref{fig:heatmap}). The convergence of second-order methods has a mild logarithmic dependence on the condition number since they correct for the curvature. On the other hand, GD's convergence is affected polynomially by conditioning.  
As $\kappa(\bSigma)$ increase from 1 to 100, the number steps required for GD's convergence increases significantly (see Fig.~\ref{fig:ill} where GD requires 2,000 steps to converge), making it impossible for a 12-layer Transformers to implement these many gradient updates.  We also note that preconditioning the data by $(\bX^\top \bX)^\dagger$ can make the data well-conditioned, but since the eigenbasis is chosen uniformly at random, with high probability there is no sparse pre-conditioner or any fixed pre-conditioner which works across the data distribution. Computing $(\bX^\top \bX)^\dagger$ appears to be as hard as computing the OLS solution (Eq. \ref{eqn:obj})---in fact \citet{sharan2019memory} conjecture that first-order methods such as gradient descent and its variants cannot avoid polynomial dependencies in condition number in the ill-conditioned case.\footnote{Regarding preconditioning, we also note that---even for well-conditioned instances---preconditioned GD still gets a linear rate of convergence, whereas Transformers and \newton\ get superlinear rates.} 
See \Cref{app:ill} for detailed experiments on ill-conditioned problems. These experiments further strengthen our thesis that Transformers learn to perform second-order optimization methods in-context, not first-order methods such as GD. 

\subsection{Transformers Require $\mathcal O(d)$ Hidden Dimension}

We ablate 12-layer 1-head Transformers with various hidden sizes on $d=20$ problems. As shown in Figure \ref{fig:hiddien-dimension}, we observe that Transformers can mimic OLS solution when the hidden size is 32 or 64, but fail with smaller sizes. This resonates with our theoretical results on $\mathcal O(d)$ hidden dimension in Theorem \ref{thm:transformers_newton}, and in this case, the theorem ensures a construction of transformers to implement Iterative Newton's method. 
\begin{figure}[t]
    \centering
    \includegraphics[width=0.75\linewidth]{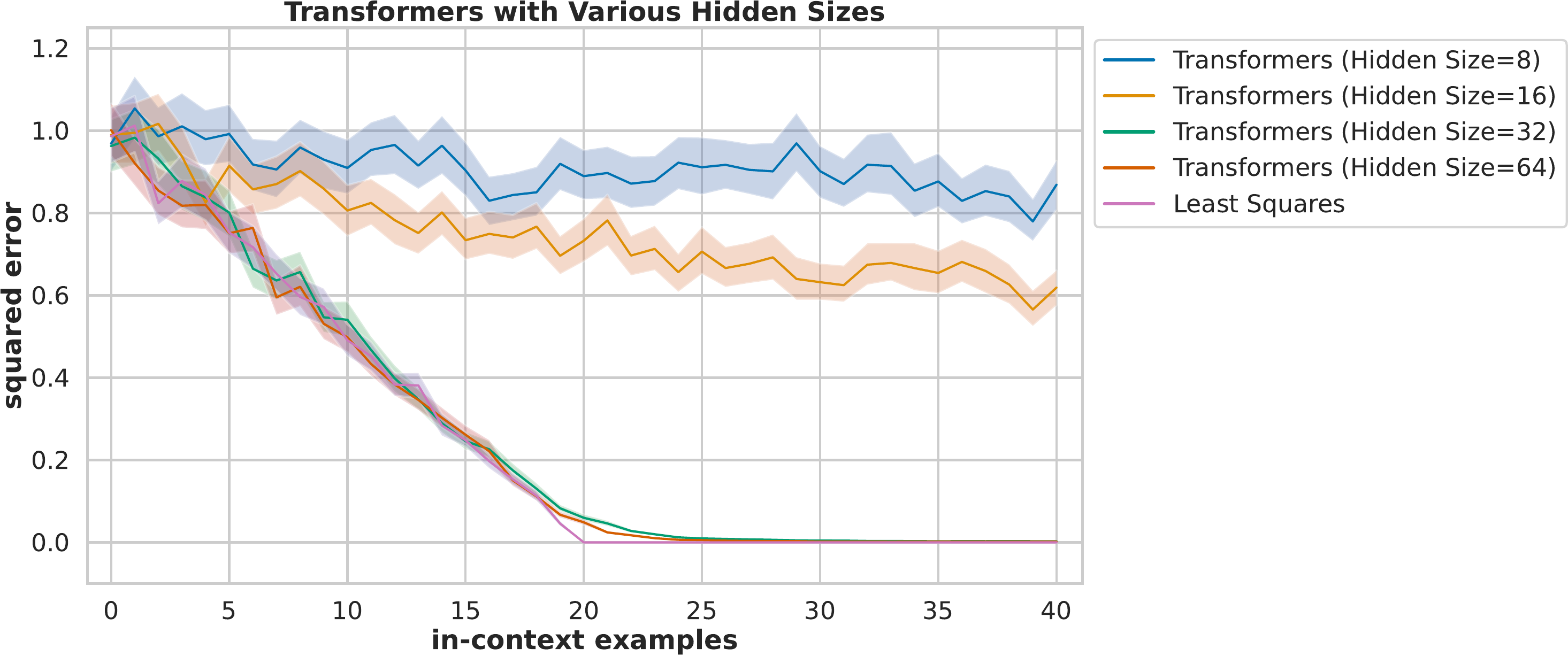}
    \caption{Ablation on Transformer's Hidden Size. For linear regression problems with $d=20$, Transformers need $\mathcal O(d)$ hidden dimension to mimic OLS solutions.}
    \label{fig:hiddien-dimension}
\end{figure}

\vspace{-1mm}
\subsection{LSTM is more similar to OGD than Transformers} \label{ssec:lstm}
\vspace{-1mm}

As discussed in \S\ref{app:lstm}, LSTM is an alternative auto-regressive model widely used before the introduction of Transformers. 
Thus, a natural research question is: \textit{If Transformers can learn in-context, can LSTMs do so as well? If so, do they learn the same algorithms?} To answer this question, we train a LSTM model in an identical manner to the Transformers studied in the previous sections.

\Cref{fig:lstm-experiment} plots the error of Transformers, LSTMs, and other standard methods as a function of the number of in-context (i.e., training) examples provided.
While LSTMs can also learn linear regression in-context, they have much higher mean-squared error than Transformers.
Their error rate is similar to Iterative Newton's Method after only 5 iterations, a point where it is far from converging to the OLS solution.

Finally, we show that LSTMs behave more like an online learning algorithm than Transformers.
In particular, its predictions are biased towards getting more recent training examples correct, as opposed to earlier examples, as shown in \Cref{fig:lstm-experiment}.
This property makes LSTMs similar to online GD.
In contrast, five steps of Newton's method has the same error on average for recent and early examples, showing that the LSTM implements a very different algorithm from a few iterations of Newton.

We hypothesize that since LSTMs have limited memory, they must learn in a roughly online fashion;
in contrast, Transformer's attention heads can access the entire sequence of past examples, enabling it to learn more complex algorithms. See \S \ref{app:lstm} for more discussions.

\vspace{-1mm}
\section{Theoretical Justification}
\label{sec:mechanism}
\vspace{-1mm}

Our empirical evidence demonstrates that Transformers behave much more similarly to \newton's than to GD. \newton\ is a second-order optimization method, and is algorithmically more involved than GD. We begin by first examining this difference in complexity. As discussed in Section \ref{sec:problem_definition}, the updates for \newton\ are of the form,
\begin{align}
   &\hat{\bw}^\mathrm{Newton}_{k+1} =  \bM_{k+1}  \bX^\top \by  \qquad \text{where } \bM_{k+1} = 2 \bM_{k} - \bM_k \bS \bM_k 
\end{align}
and $\bM_{0}=\alpha \bS $ for some $\alpha>0.$ We can express $\bM_k$ in terms of powers of $\bS$ by expanding iteratively, for example $\bM_{1}=2 \alpha \bS - 4 \alpha^2 \bS^3, \bM_{2}= 4 \alpha \bS - 12 \alpha^2 \bS^3  + 16 \alpha^3 \bS^5 - 16\alpha^4 \bS^7$, and in general $\bM_k = \sum_{s=1}^{2^{k+1} - 1} \beta_s \bS^s$ for some $\beta_s\in \R$ (see Appendix \ref{sec:newton_moment} for detailed calculations). Note that $k$ steps of \newton's requires computing $\Omega(2^k)$ moments of $\bS$. Let us contrast this with GD. GD updates for linear regression take the  form,
\begin{align}
\hat{\bw}^\mathrm{GD}_{k+1} = \hat{\bw}^\mathrm{GD}_{k} - \eta (\bS \hat{\bw}^\mathrm{GD}_{k} -\bX^\top \by).
\end{align}
Like \newton, we can express $\hat{\bw}^\mathrm{GD}_{k}$  in terms of powers of $\bS$ and $\bX^\top \by$. However, after $k$ steps of GD, the highest power of $\bS$ is only $O(k)$. This exponential separation is consistent with the exponential gap in terms of the parameter dependence in the convergence rate---$\mathcal O\left(\kappa(\bS) \log(1/\epsilon)\right)$ for GD vs. $\mathcal O(\log \kappa(\bS) + \log \log (1/\epsilon))$ for \newton. Therefore, a natural question is whether Transformers can actually as complicated of a method such as \newton\ with only polynomially many layers? Theorem \ref{thm:transformers_newton} shows that this is indeed possible. 
    
\begin{restatable}{theorem}{TransformersNewton}
\label{thm:transformers_newton}
For any $k$, there exist Transformer weights such that on any set of in-context examples  $\{\bx_i, y_i\}_{i=1}^n$ and test point  $\bx_\mathrm{test}$, the Transformer predicts on $\bx_\mathrm{test}$ using $\bx_\mathrm{test}^\top \hat{\bw}^\mathrm{Newton}_{k}$. Here $\hat{\bw}^\mathrm{Newton}_{k}$ are the \newton\ updates given by $ \hat{\bw}^\mathrm{Newton}_{k} =  \bM_{k}  \bX^\top \by$ where $\bM_{j}$ is updated as 
\begin{align*}
    \bM_{j} = 2 \bM_{j-1} - \bM_{j-1} \bS \bM_{j-1}, 1\le j\le k, \quad  \bM_{0}=\alpha \bS, 
\end{align*}
for some $\alpha>0$ and  $\bS = \bX^\top \bX$. The dimensionality of the hidden layers is $\mathcal{O}(d)$, and the number of layers is $k+8$. One transformer layer computes one Newton iteration. 3 initial transformer layers are needed for initializing $\bM_0$ and 5 layers at the end are needed to read out predictions from the computed pseudo-inverse $\bM_k$.
\end{restatable}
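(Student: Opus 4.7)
The approach is a fully constructive proof that exhibits explicit weight matrices realizing each Newton iteration as a single Transformer layer. The fundamental difficulty is dimensional: each hidden state has dimension $\mathcal{O}(d)$, whereas $\bM_j$ and $\bS$ are $d\times d$, so the matrices must be distributed across tokens---one matrix row per dedicated token---and every matrix product must be engineered as an attention-weighted sum along the sequence. The algebraic identity that drives the whole construction is
\[
\bM_j\bS\bM_j \;=\; (\bX\bM_j)^\top(\bX\bM_j) \;=\; \sum_{i=1}^n (\bM_j\bx_i)(\bM_j\bx_i)^\top,
\]
which turns the cubic matrix update into a bilinear statistic of per-token quantities that attention can aggregate natively.

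For the three pre-processing layers, I first augment each token with $\mathcal{O}(d)$ workspace slots and partition the $2n+1$ input positions into ``example'' tokens holding $\bx_i, y_i$ and ``matrix'' tokens $r=1,\dots,d$ destined to hold the $r$-th row of $\bM_j$. One attention layer aggregates $\bS=\sum_i \bx_i\bx_i^\top$ and writes its rows into the matrix tokens; a second layer approximates $\alpha\propto \|\bS\bS^\top\|_2^{-1}$ (via a saturating softmax or a ReLU-based estimate); and a third installs $\bM_0=\alpha \bS$ together with $\bM_0\bx_i$ in each example token's workspace. This sets up the invariant maintained through the iterative phase: token $r$ holds row $r$ of $\bM_j$ and example token $i$ additionally carries $\bM_j\bx_i$. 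Each of the $k$ Newton layers must update both quantities in one shot. A first group of heads, with queries firing on matrix tokens and keys/values formed from the $\bM_j\bx_i$ entries of the example tokens, outputs at token $r$ the $r$-th row of $\sum_i (\bM_j\bx_i)(\bM_j\bx_i)^\top=\bM_j\bS\bM_j$; a second head copies $2\bM_j$ through via the skip connection; the per-token MLP then forms $2\bM_j-\bM_j\bS\bM_j=\bM_{j+1}$; and a final head recomputes $\bM_{j+1}\bx_i$ using the freshly updated matrix rows. I expect this single-layer realization of the Newton step to be the main obstacle: the attention heads and the per-token MLP must jointly accomplish a square-multiply-and-combine step while simultaneously refreshing the example-token invariant for the next iteration, and this is only possible thanks to the factorization above.

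The five read-out layers finish the computation: one attention layer aggregates $\bX^\top \by=\sum_i y_i\bx_i$ into a dedicated token; a further layer uses the stored rows of $\bM_k$ to form $\hat{\bw}^{\mathrm{Newton}}_k=\bM_k(\bX^\top \by)$ via an attention-weighted sum; a subsequent layer at the test-point position computes the inner product $\bx_{\mathrm{test}}^\top \hat{\bw}^{\mathrm{Newton}}_k$; and the remaining layers handle bookkeeping and route this scalar into the slot read by $\btheta_{\mathrm{readout}}$. Because at every stage each token stores only a constant number of $d$-dimensional quantities---one row of $\bM_j$, one copy of $\bx_i$, one copy of $\bM_j\bx_i$, and a handful of scalars---the hidden-state dimension stays $\mathcal{O}(d)$ throughout, matching the bound stated in the theorem.
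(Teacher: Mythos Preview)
You have the right algebraic engine---the identity $\bM_j\bS\bM_j=\sum_i(\bM_j\bx_i)(\bM_j\bx_i)^\top$ is exactly what makes a single attention sublayer suffice---but the layer you describe for the Newton step does not type-check against the Transformer in Definition~\ref{def:transformers}. You write that, within one layer, a first group of heads writes row $r$ of $\bM_j\bS\bM_j$ into the matrix tokens, then the MLP forms $\bM_{j+1}$, and then ``a final head recomputes $\bM_{j+1}\bx_i$ using the freshly updated matrix rows.'' In a layer $\mathrm{MLP}\circ\mathrm{Attn}$ all heads read the \emph{same} pre-MLP input; there is no attention after the MLP. So that final head cannot see $\bM_{j+1}$, and without a refreshed $\bM_{j+1}\bx_i$ at the example tokens the next iteration's heads have nothing to work with. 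As stated, the construction needs two layers per Newton step, which breaks the $k+\mathcal O(1)$ count.

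The paper's construction sidesteps this by \emph{never} materializing $\bM_j$ in any token. It maintains only the per-token invariant $\bh_t^{(\ell)}=\begin{bmatrix}\bM_\ell\bx_t\\ \bx_t\end{bmatrix}$, and one attention sublayer (two ReLU heads implementing a signed bilinear form) applied to these very tokens produces
\[
\sum_{j}\big\langle -\tfrac12\bx_t,\;\bM_\ell\bx_j\big\rangle\,\bM_\ell\bx_j \;=\; -\tfrac12\,\bM_\ell\bS\bM_\ell\,\bx_t,
\]
which, together with the residual $\bM_\ell\bx_t$ and a linear MLP, yields $\bM_{\ell+1}\bx_t$ directly. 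This is precisely your identity evaluated at each example token with the query drawn from the untouched $\bx_t$ slot---so the ``refresh'' you tried to schedule as a separate head is the \emph{whole} update, and the matrix tokens are redundant. If you insist on keeping matrix tokens, the only way to update the example-token invariant in the same layer is this same head, which never reads the matrix rows; once you see that, you may as well drop the matrix tokens and recover the paper's $2d$-wide hidden state.

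Two smaller points. First, the theorem asks only for ``some $\alpha>0$,'' so $\alpha$ is a fixed constant baked into the value matrices; you do not need a layer to estimate $\|\bS\bS^\top\|_2^{-1}$ from data. Second, the readout in the paper also avoids the explicit matrix: with $\bM_k\bx_j$ and $y_j$ colocated, one $\mathrm{mul}$ gives $y_j\bM_k\bx_j$, a summation head gives $\bM_k\bX^\top\by$, and a final $\mathrm{mov}/\mathrm{mul}$ produces $\langle\hat\bw_k,\bx_{\mathrm{test}}\rangle$.
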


Here we provide a skecth of the proof. 
We note that our proof uses full attention instead of causal attention and ReLU activations for the self-attention layers. The definitions of these and the full proof appear in Appendix \ref{app:mechanism}.

\subsection{Proof Sketch for Theorem \ref{thm:transformers_newton}}
The constructive proof leverages some key operators which \cite{Akyrek2022WhatLA} showed  a single Transformers layer can implement. We summarize these in Proposition \ref{prop:akyrek} . We mainly use the $\mathrm{mov}$ operator, which can copy parts of the hidden state from time stamp $i$ to time stamp $j$ for any $j \geq i$; the $\mathrm{mul}$ operator which can do multiplications; and the $\mathrm{aff}$ operator, which can be used to do addition and subtraction. 
\paragraph{Transformers Implement Initialization $\bT^{(0)} = \alpha \bS$.} Given input sequence $\bH = \{\bx_1, \cdots, \bx_n \}$, we can first use the $\mathrm{mov}$ operator from Proposition \ref{prop:akyrek} 
so that the input sequence becomes $\begin{bmatrix}
    \bx_1 & \cdots & \bx_n \\
    \bx_1 & \cdots & \bx_n 
\end{bmatrix}$. We call each column $\bh_j$. With an full attention layer and normalized ReLU activations, one can construct two heads with query and value matrices of the form $\bQ_1^\top \bK_1 = -\bQ_2^\top \bK_2 = \begin{bmatrix}
    \bI_{d \times d} & \bO_{d \times d} \\
    \bO_{d \times d} & \bO_{d \times d}
\end{bmatrix}$ and value matrices $\bV_m =n \alpha \begin{bmatrix}
    \bI_{d \times d} & \bO_{d \times d} \\
    \bO_{d \times d} & \bO_{d \times d}
\end{bmatrix}$  for some $\alpha \in \mathbb R$.\footnote{The value matrices contain the number of in-context examples $n$. There are ways to avoid this by applying Proposition \ref{prop:akyrek} to count.} Combining the attention layer and skip connections we end up with $\bh_t \leftarrow \begin{bmatrix}
        \bx_t + \alpha \bS \bx_t \\ \bx_t
\end{bmatrix} $. Applying the $\mathrm{aff}$ operator from Proposition \ref{prop:akyrek} to do subtraction, we can have each column of the form $\begin{bmatrix}
        \alpha \bS \bx_t \\ \bx_t
\end{bmatrix}$. We denote $\bT^{(0)} :=  \alpha \bS$ so that Transformers and \newton\ have the similar initialization and we call these columns $\bh_t^{(0)}$.  

\paragraph{Transformers implement Newton Iteration.}  We claim that we can construct  layer $\ell$'s hidden states to be of the form 
    \begin{equation}
        \bH^{(\ell)} = \begin{bmatrix}
            \bh_1^{(\ell)} & \cdots & \bh_n^{(\ell)}
        \end{bmatrix} = \begin{bmatrix}
            \bT^{(\ell)} \bx_1 & \cdots & \bT^{(\ell)} \bx_n \\
            \bx_1 & \cdots & \bx_n
        \end{bmatrix} 
    \end{equation}
    We prove by induction that assuming our claim is true for $\ell$, we work on $\ell + 1$:
Let $\bQ_m = \tilde{\bQ}_m \begin{bmatrix}
        \bO_d & -\frac{n}{2} \bI_d \\
        \bO_d & \bO_d 
    \end{bmatrix} , \bK_m = \tilde{\bK}_m \begin{bmatrix}
        \bI_d & \bO_d \\
        \bO_d & \bO_d 
    \end{bmatrix}$ where $\tilde{\bQ}_1^\top \tilde{\bK}_1 : = \bI$, $\tilde{\bQ}_2^\top \tilde{\bK}_2 : = -\bI$ and $\bV_1 = \bV_2 = \begin{bmatrix}
        \bI_d & \bO_d \\
        \bO_d & \bO_d 
    \end{bmatrix}$. A 2-head self-attention layer gives 
\begin{equation}
    \bh_t^{(\ell+1)} = \begin{bmatrix}
        \left(\bT^{(\ell)} - \frac{1}{2} \bT^{(\ell)} \bS {\bT^{(\ell)}}^\top \right) \bx_t \\ \bx_t 
    \end{bmatrix}
\end{equation}
Note that all $\bT^{(\ell)}$ are symmetric. Passing over an MLP layer gives
    \begin{equation}
        \bh_t^{(\ell+1)} \leftarrow \bh_t^{(\ell+1)} + \begin{bmatrix}
            \bI_d & \bO_d \\ 
            \bO_d & \bO_d 
        \end{bmatrix} \bh_t^{(\ell+1)} = \begin{bmatrix}
        \left(2\bT^{(\ell)} -  \bT^{(\ell)} \bS {\bT^{(\ell)}} \right) \bx_t \\ \bx_t 
    \end{bmatrix}
    \end{equation}
We denote $\bT^{(\ell+1)} := 2\bT^{(\ell)} -  \bT^{(\ell)} \bS {\bT^{(\ell)}}$ and this is the exactly same form as \newton\ updates. 

\paragraph{Transformers can implement $\hat{\bw}_\ell^\mathrm{TF} = \bT^{(\ell)} \bX^\top \by$.}
We insert columns with $[0, 0, \cdots, y_j]^\top$ after each $\bx_j$ (See \Cref{fig:transformer_illustration} for illustration) and keep them unchanged until reaching layer $\ell$. Applying $\mathrm{mov}$ and $\mathrm{mul}$, we have columns $\begin{bmatrix}
    \bxi \\ \bT^{(\ell)} y_j \bx_j
\end{bmatrix}$ where $\bxi$ are irrelevant quantities. Apply \Cref{lemma:sum} for summation, we can gather $\sum_{j=1}^n \bT^{(\ell)} y_j \bx_j = \bT^{(\ell)} \bX^\top \by$, which is again the same as \newton\ and we call this $\hat{\bw}_\ell^\mathrm{TF}$. 

\paragraph{Transformers can make predictions on $\bx_{test}$ by $\inner{\hat{\bw}_\ell^\mathrm{TF}, \bx_\mathrm{test}}$.} 

Now we can make predictions on text query $\bx_\mathrm{test}$:
\begin{equation}
    \begin{bmatrix}
        \bxi &  \bx_\mathrm{test} \\
        \hat{\bw}_\ell^\mathrm{TF} & \bx_\mathrm{test}
    \end{bmatrix} \overset{\mathrm{mov}}{\longrightarrow} \begin{bmatrix}
        \bxi &  \bx_\mathrm{test} \\
        \hat{\bw}_\ell^\mathrm{TF} & \bx_\mathrm{test} \\
        \zero & \hat{\bw}_\ell^\mathrm{TF}
    \end{bmatrix} \overset{\mathrm{mul}}{\longrightarrow} \begin{bmatrix}
        \bxi &  \bx_\mathrm{test} \\
        \hat{\bw}_\ell^\mathrm{TF} & \bx_\mathrm{test} \\
        \zero & \hat{\bw}_\ell^\mathrm{TF} \\
        0 & \inner{\hat{\bw}_\ell^\mathrm{TF}, \bx_\mathrm{test}}
    \end{bmatrix}
\end{equation}
A final readout layer can extract the prediction $\inner{\hat{\bw}_\ell^\mathrm{TF}, \bx_\mathrm{test}}$.

Now we complete the proof that Transformers can perform exactly \newton. Finally, we count the number of layers and the dimension of hidden states. We can see all operations in the proof require a linear amount of hidden state dimensions, which are $\mathcal O(d)$. There are 3 Transformer layers needed to compute the Newton initialization and 5 layers needed for reading out predictions. Operations from Transformers index $\ell$ to $\ell+1$ require 1 Transformer layer. Hence, to perform $k$ \newton\ updates, Transformers require $k + 8$ layers. This implies that the rate of convergence of Transformers to solve linear regression in-context is the same as \newton's: $\mathcal O(\log \log (1/\epsilon))$ and this is consistent with our experimental results in \S\ref{sec:experiments}.

\vspace{-1mm}
\section{Conclusion and Discussion}
\vspace{-1mm}

In this work, we studied how Transformers perform in-context learning for linear regression.
In contrast with the hypothesis that Transformers learn in-context by implementing gradient descent,
\new{our experimental results show that different Transformer layers match iterations of \newton\  \textit{linearly} and \gd\ \textit{exponentially}. This suggests that Transformers share a similar rate of convergence to \newton\ but not to \gd.}
Moreover, Transformers can perform well empirically on ill-conditioned linear regression, whereas first-order methods such as \gd\ struggle. {This empirical evidence --- when combined with existing lower bounds in optimization --- suggests that Transformers use second-order information for solving linear regression, and we also prove that Transformers can indeed represent second-order methods. } 

An interesting direction is to explore a wider range of second-order methods that Transformers can implement. It also seems promising to  extend our analysis to classification problems, especially given 
recent work showing that Transformers resemble SVMs in classification tasks \citep{Li2023TransformersAA,AtaeeTarzanagh2023TransformersAS}. 
Finally, a natural question is to understand the differences in the  model architecture that make Transformers better in-context learners than LSTMs. Based on our investigations with LSTMs, we hypothesize that Transformers can implement more powerful algorithms because of having access to a longer history of examples. 
 Investigating the role of this additional memory in learning appears to be an intriguing direction.

\section*{Acknowledgement}
We would like to thank the USC NLP Group and Center for AI Safety for providing compute resources. DF would like to thank Oliver Liu and Ameya Godbole for their extensive discussions. 
DF and RJ were supported by a Google Research Scholar Award.
RJ was also supported by an Open Philanthropy research grant.
VS was supported by NSF CAREER Award CCF-2239265 and an Amazon Research Award.

\bibliography{Reference}
\bibliographystyle{plainnat}

\clearpage
\onecolumn
\section*{Appendix}
\appendix
\startcontents[appendix]
\addcontentsline{toc}{chapter}{Appendix}
\renewcommand{\thesection}{\Alph{section}} 

\printcontents[appendix]{}{1}{\setcounter{tocdepth}{3}}

\setcounter{section}{0}
\section{Additional Experimental Results} \label{app:add_exp}

\subsection{Contrast with LSTMs} \label{app:lstm}

While our primary goal is to analyze Transformers, we also consider LSTMs \citep{hochreiter1997lstm} to understand whether Transformers learn different algorithms than other neural sequence models trained to do linear regression. In particular, we train a unidirectional $L$-layer LSTM, which generates a sequence of hidden states $\bH^{(\ell)}$ for each layer $\ell$, similarly to an $L$-layer Transformer.
As with Transformers, we add a readout layer that predicts the $\hat{y}_{t+1}^{\mathrm{LSTM}}$ from the final hidden state at the final layer, $\bH_{:,2t+1}^{(L)}$.

\begin{table}[!htp]
    \centering
    \begin{tabular}{c|c|c}
        & Transformers &  LSTM \\
         \midrule
     Newton & \textbf{0.991} & 0.920  \\
     GD & \textbf{0.957} & 0.916 \\
     OGD   & 0.806 & \textbf{0.954}  
    \end{tabular}
    \caption{\textbf{Similarity of errors between algorithms.} Transformers are more similar to full-observation methods such as Newton and GD; and LSTMs are more similar to online methods such as OGD.}
    \vspace{-1ex}
    \label{tab:lstm}
\end{table}

We train a 10-layer LSTM model, with 5.3M parameters, in an identical manner to the Transformers (with 9.5M parameters) studied in the previous sections.\footnote{While the LSTM has fewer parameters than the Transformer, we found in preliminary experiments that increasing the size of the LSTM would not substantively change our results.}

LSTMs' inferior performance to Transformers can be explained by the inability of LSTMs to use deeper layers to improve their predictions.
\Cref{fig:progression} shows that LSTM performance does not improve across layers---a readout head fine-tuned for the first layer makes equally good predictions as the full 10-layer model. 
Thus, LSTMs seem poorly equipped to fully implement iterative algorithms.
Similarly, Table \ref{tab:lstm} shows that LSTMs are more similar to OGD than Transformers are, whereas Transformers are more similar to Newton and GD than LSTMs.

\subsection{Additional Results on Isotropic Data without Noise} \label{app:exp_standard}
\subsubsection{Progression of Algorithms}
\begin{figure*}[!htp]
    \centering
    \hfill
    \subfigure[Transformers]{\includegraphics[width=0.31\linewidth]{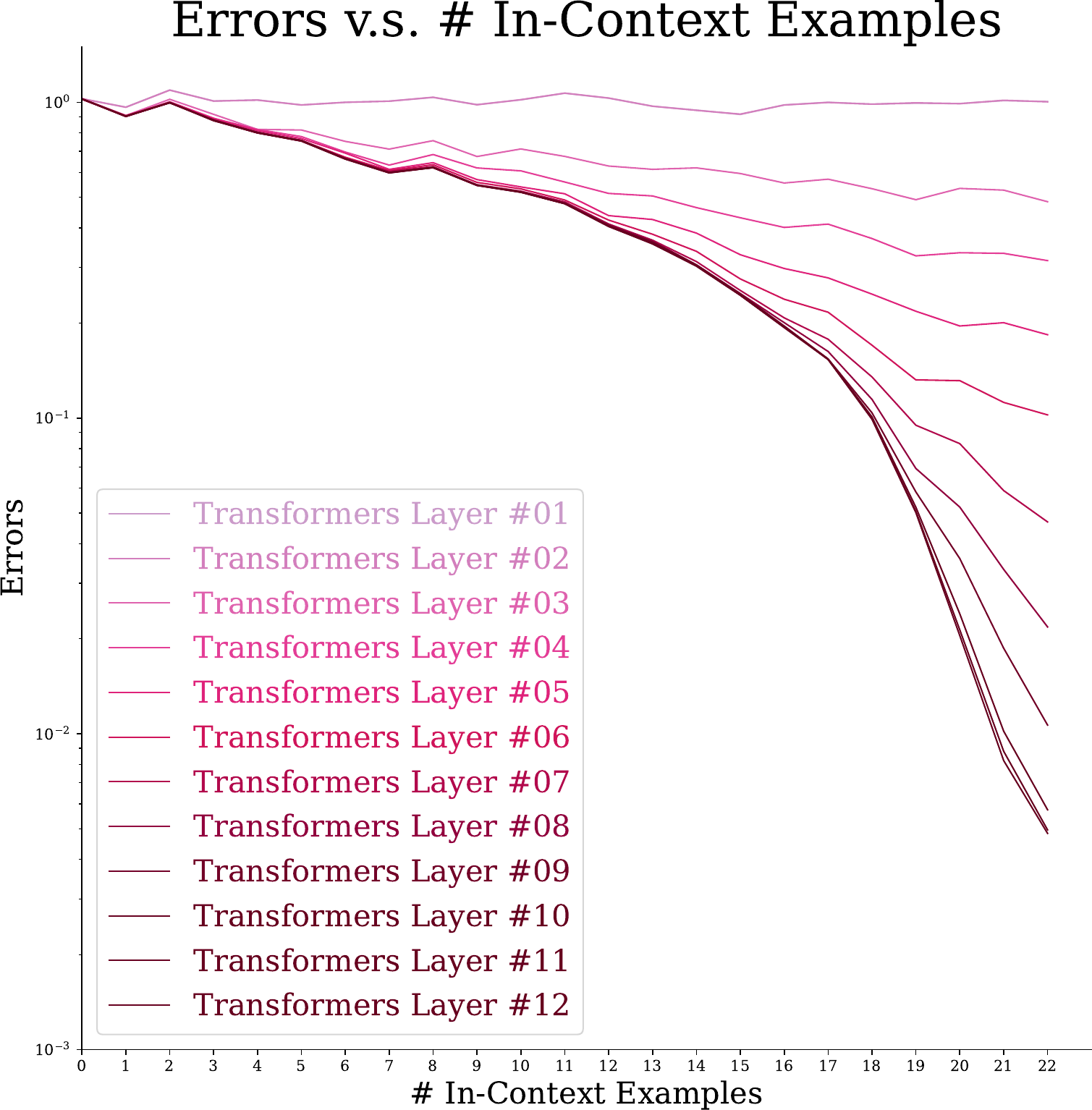} \label{subfig:transformers} } 
    \subfigure[Iterative Newton's Method]{\includegraphics[width=0.31\linewidth]{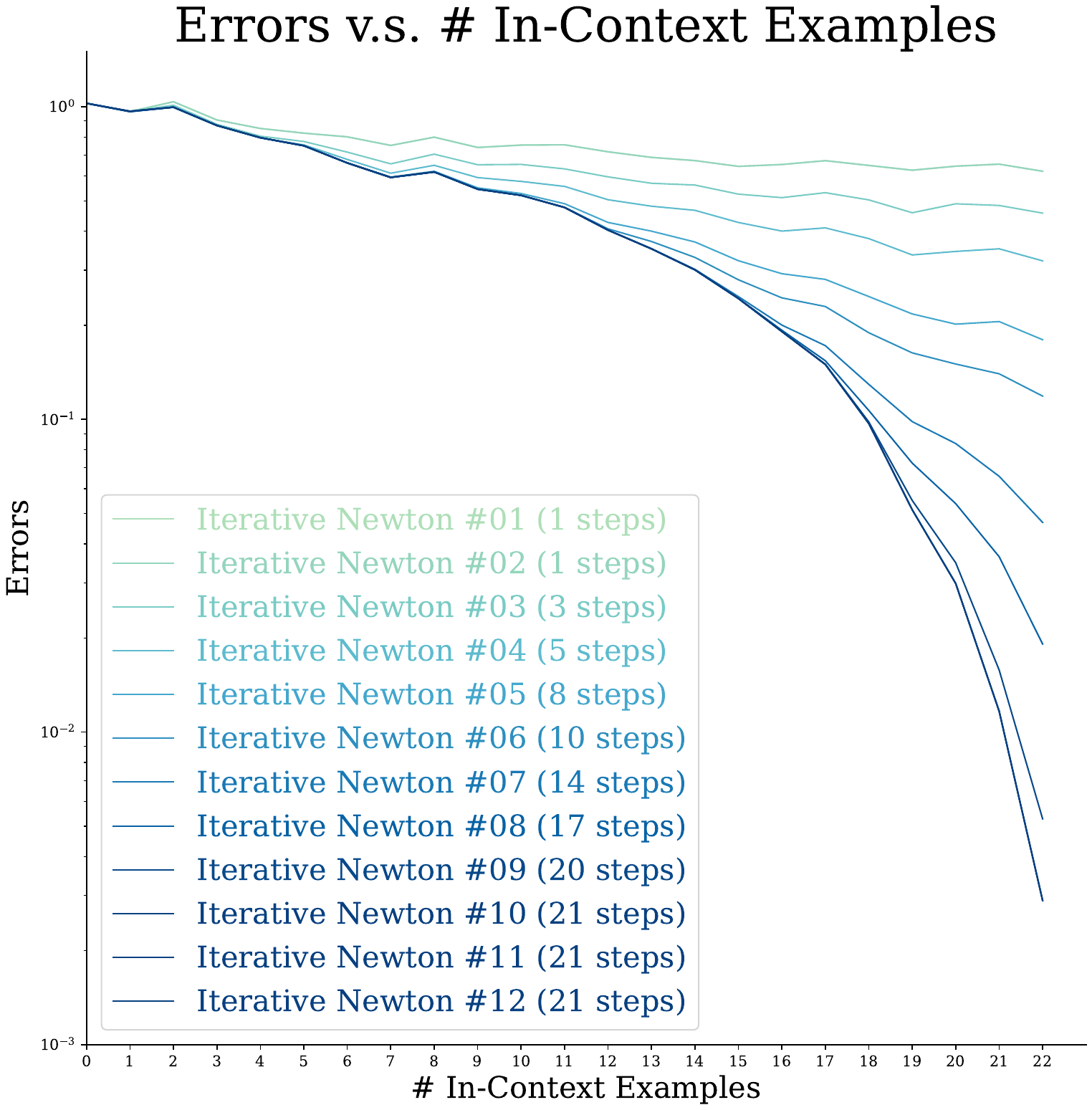} \label{subfig:newton} } 
    \subfigure[LSTM]{\includegraphics[width=0.31\linewidth]{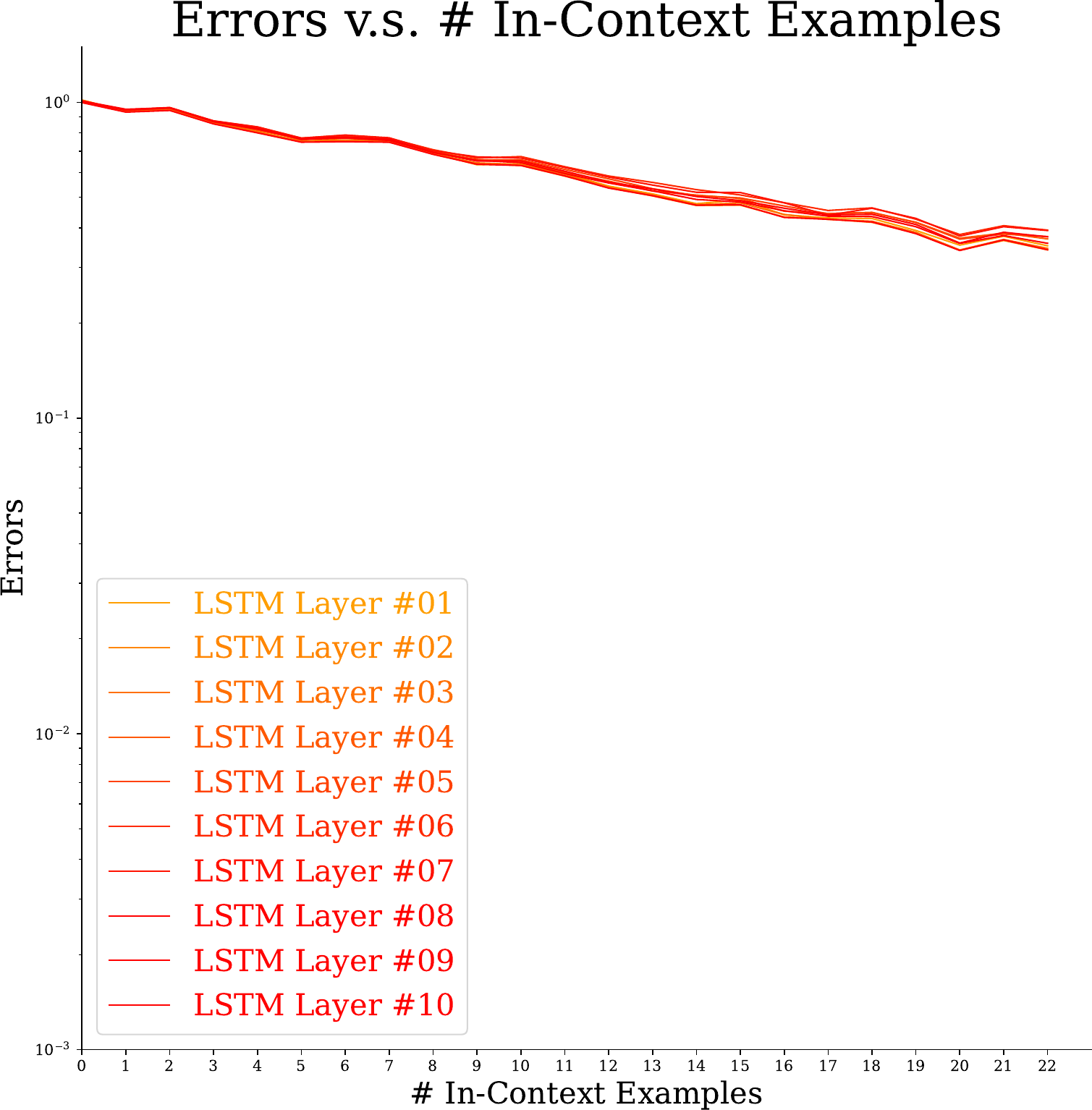} \label{subfig:lstm}}  \hfill
    \caption{\textbf{Progression of Algorithms.} (a) Transformer's performance improves over the layer index $\ell$. (b) \newton's performance improves over the number of iterations $k$, in a way that closely resembles the Transformer. We plot the best-matching $k$ to Transformer's $\ell$ following Definition \ref{def:matching}. (c) In contrast, LSTM's performance does not improve from layer to layer.}
    \vspace{-1.5ex}
    \label{fig:progression}
\end{figure*}
\subsubsection{Heatmaps} 
We present heatmaps with all values of similarities. 
\begin{figure}[!htp]
    \centering
\includegraphics[width=0.49\linewidth]{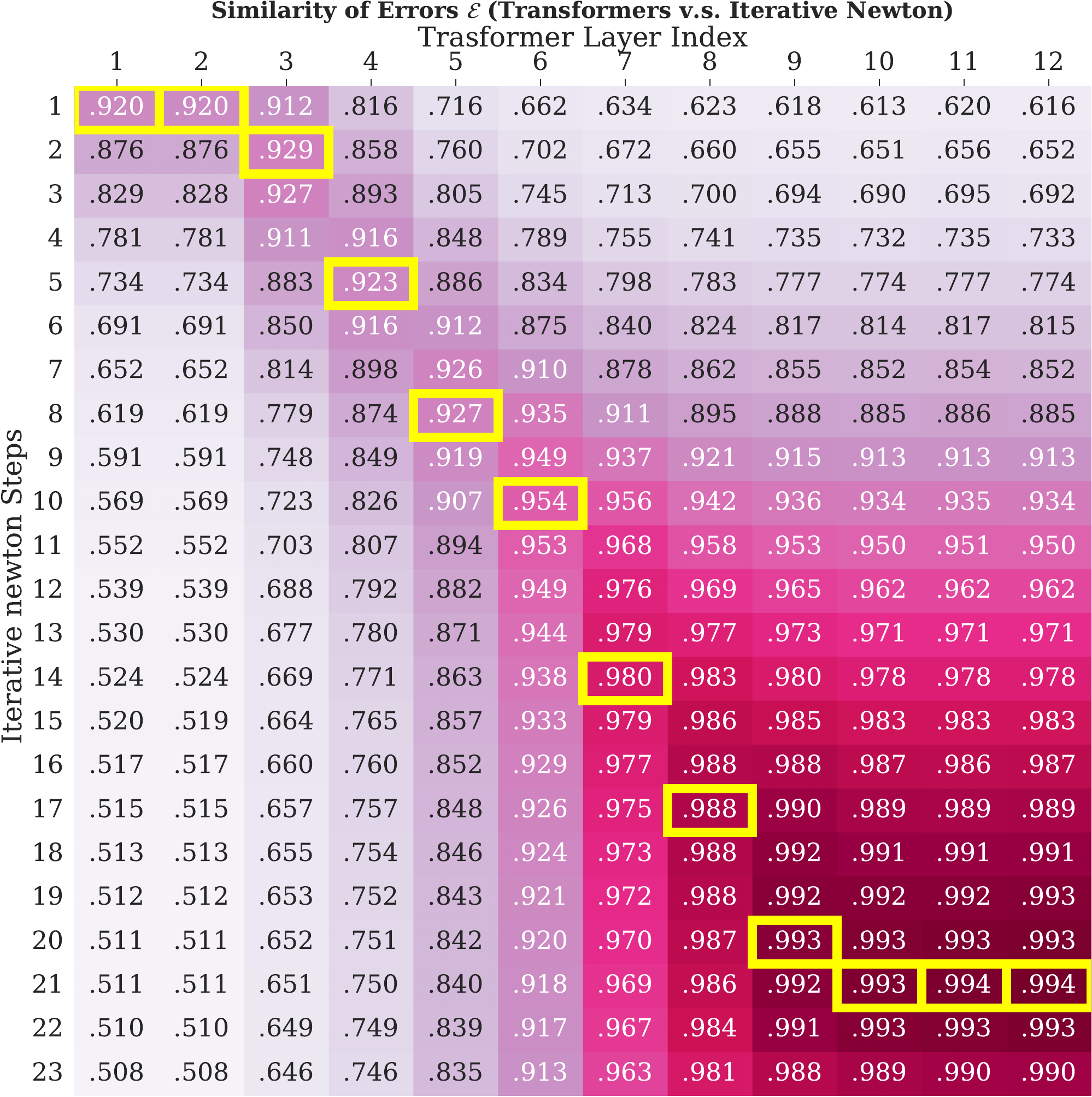}
\includegraphics[width=0.49\linewidth]{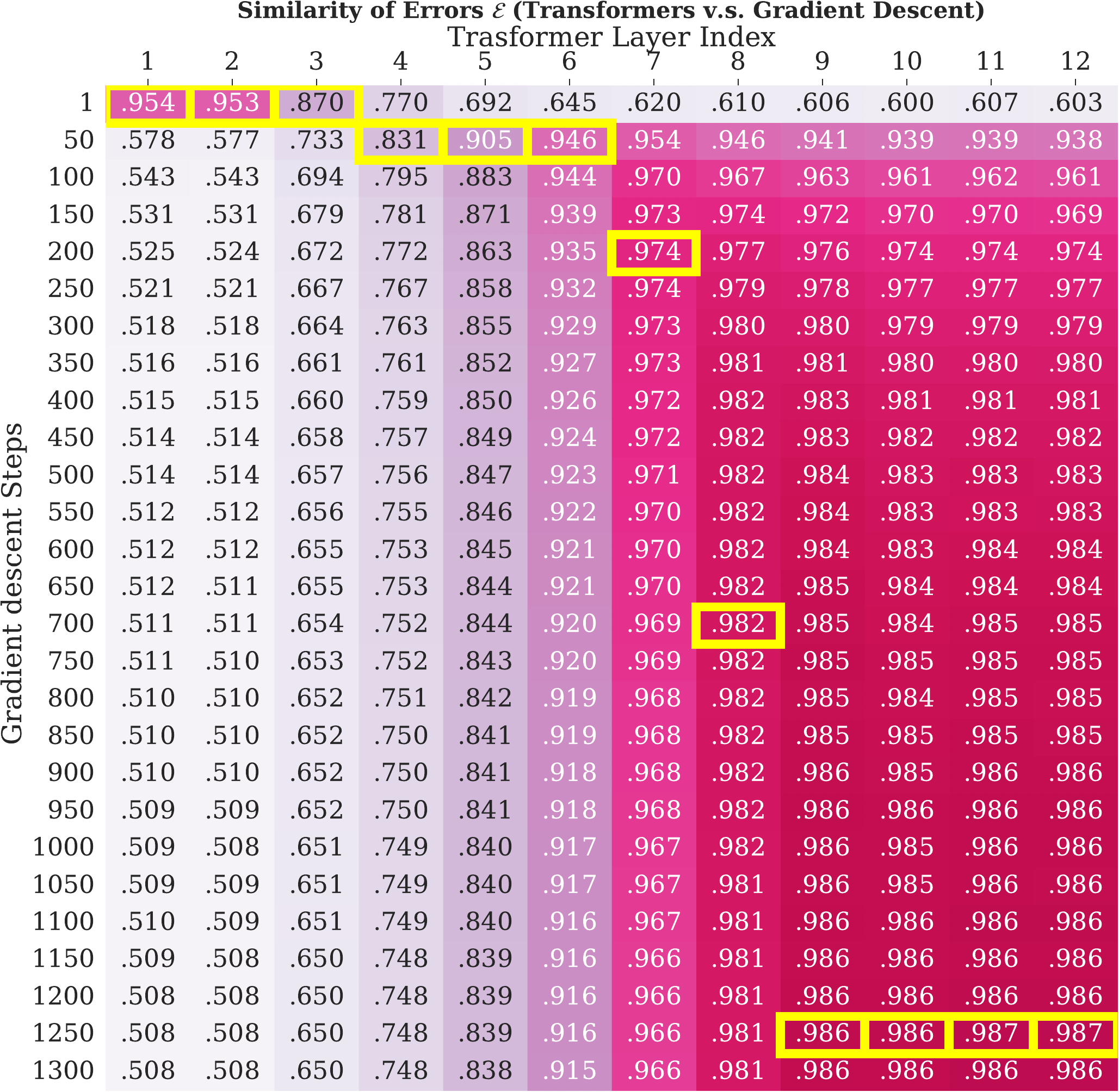}
\caption{\textbf{Similarity of Errors.} The best matching steps are highlighted in yellow.} \label{fig:full_heatmap_sim_e}
\end{figure}
\begin{figure}[!htp]
    \centering
\includegraphics[width=0.49\linewidth]{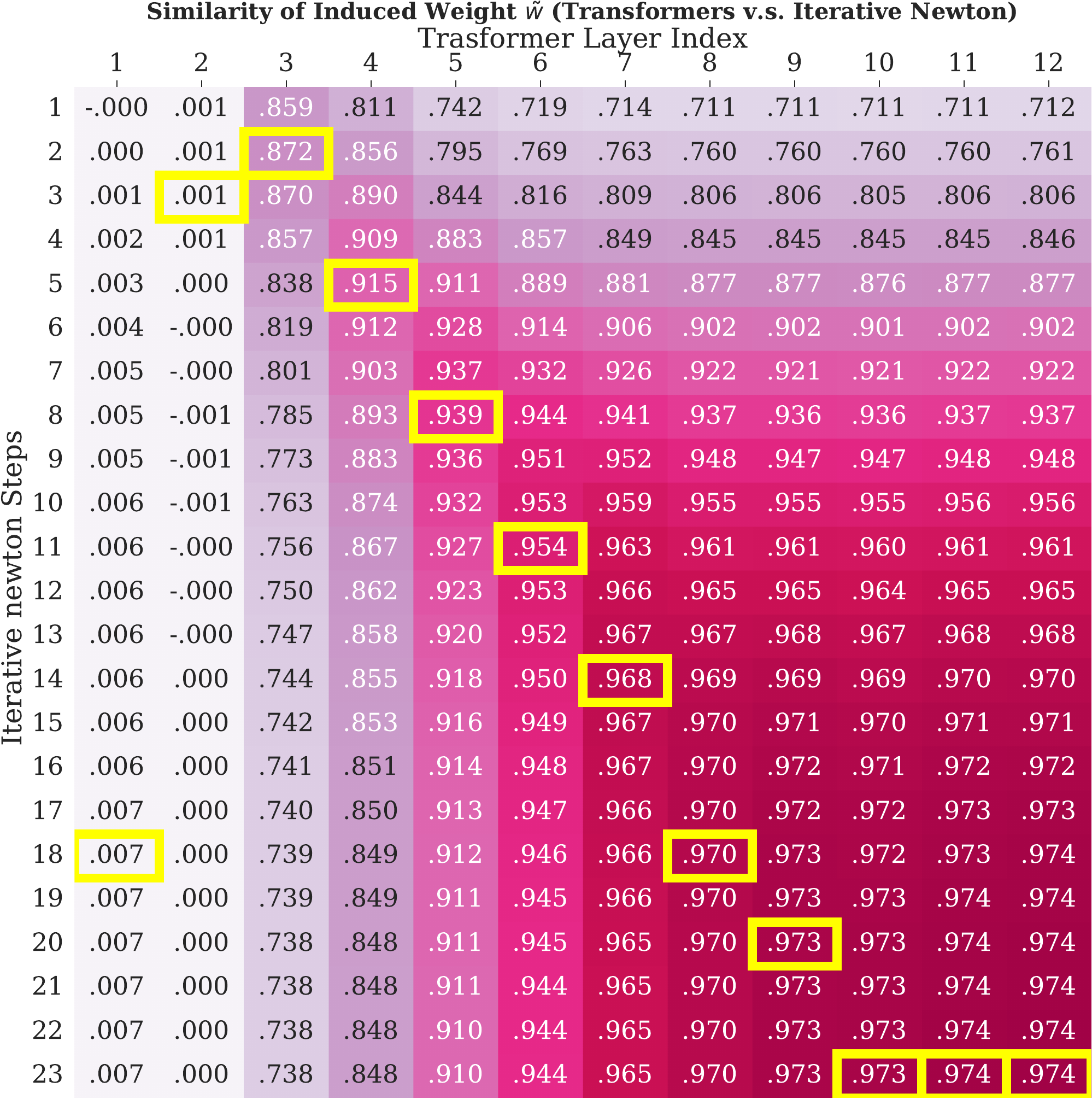}
\includegraphics[width=0.49\linewidth]{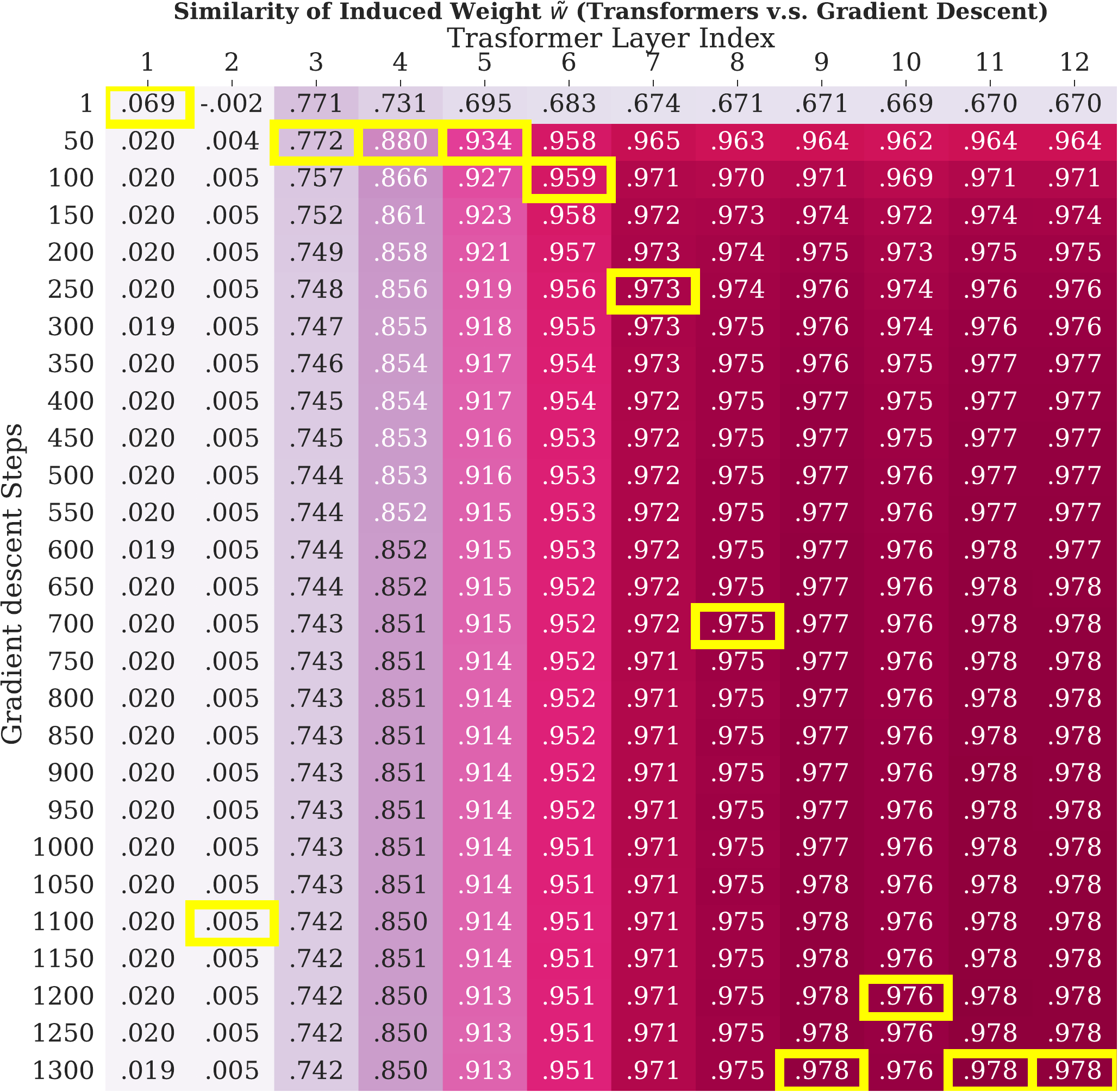}
\caption{\textbf{Similarity of Induced Weight Vectors.} The best matching steps are highlighted in yellow.} \label{fig:full_heatmap_sim_w}
\end{figure}

\begin{figure}[!htp]
    \centering
    \includegraphics[width=0.7\linewidth]{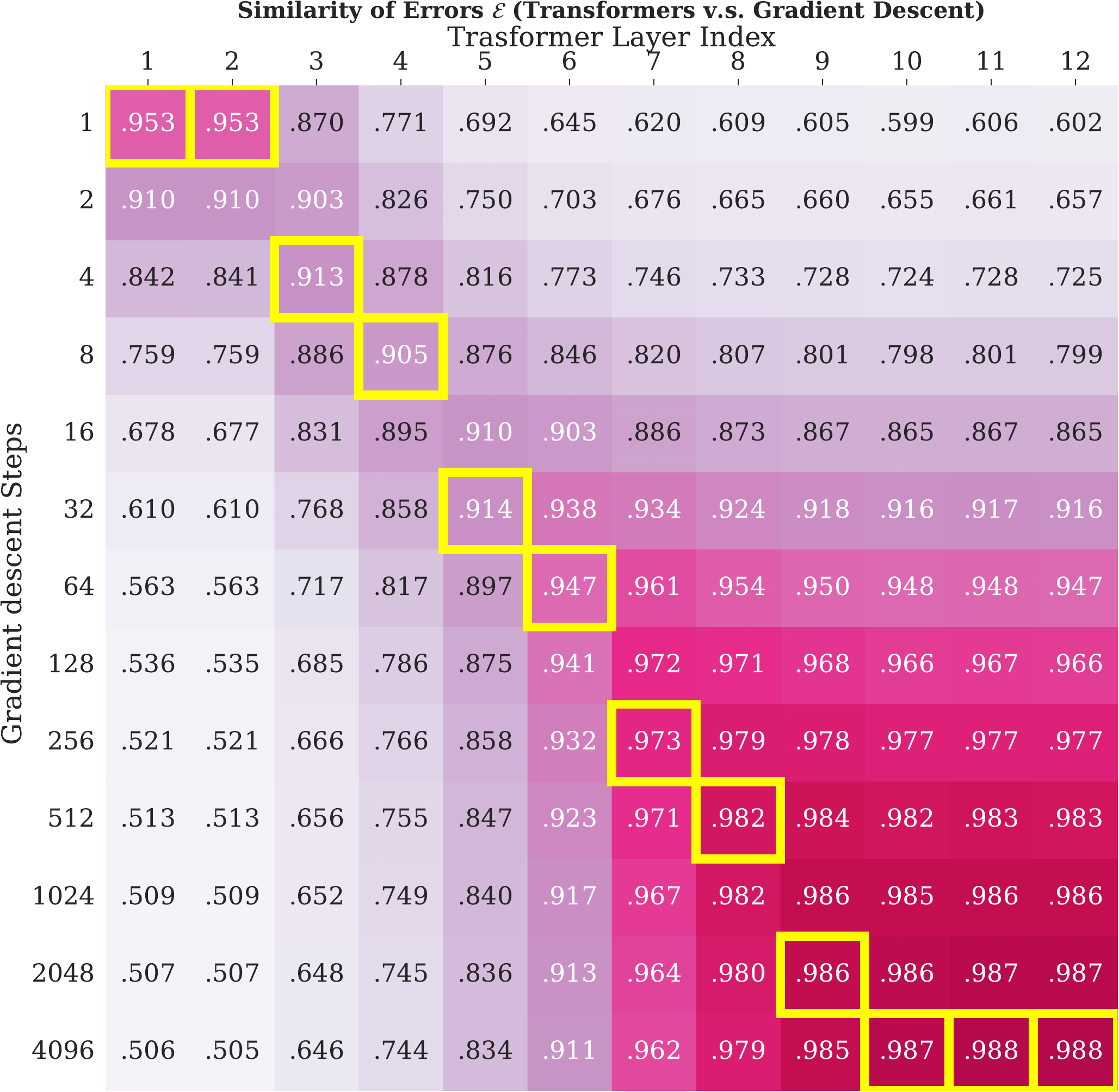}
    \caption{\textbf{Similarity of Errors of Gradient Descent in Log Scale.} The best matching steps are highlighted in yellow. Putting the number of steps of \gd\ in log scale further verifies the claim that Transformer's rate of covergence is exponentially faster than that of \gd.}
    \label{fig:gd_heatmap_log_scale}
\end{figure}

\clearpage
\subsubsection{Comparison with Other Second-Order Methods} \label{sec:second-order}

In this section, we ablate with alternative second-order methods, such as Conjugate Gradient, BFGS, and its limited memory variant, L-BFGS. 

\paragraph{Conjugate Gradient Method. } For linear regression problems, the Conjugate Gradient (CG) method solves the linear system
$$
\underbrace{(\bX^\top \bX)}_{\bS} \bw - \bX^\top \by = 0
$$
CG finds the weight vector $\hat{\bw}^{CG}$ with initialization $\bw_0$ by maintain a set of conjugate gradient $\{\Delta \bw_1, \cdots, \Delta \bw_k\}$. It follows the iterative update rule
\begin{equation}
    \begin{aligned}
        & \bd_k = - \nabla \mathcal L(\bw_k) \\
        & \Delta \bw_k = \bd_k - \sum_{i=0}^{k-1} \frac{\bd_k^\top \bS \Delta \bw_i}{\Delta \bw_i^\top \bS \Delta \bw_i} \Delta \bw_i \\
        & \alpha_k = \arg \min_{\alpha} \mathcal L(\bw_k + \alpha \Delta \bw_k) \\
        &\bw_{k+1} = \bw_k + \alpha_k \Delta \bw_k
    \end{aligned}
\end{equation}

The conjugate Gradient method requires $\mathcal O\left(\sqrt{\kappa} \log(1/\epsilon)\right)$ steps to converge to an $\epsilon$ error on quadratic objectives such as linear regression. 

\paragraph{BFGS. } Broyden– Fletcher–Goldfarb–Shanno (BFGS) is a Quasi-Newton method, designed to approximate the inverse Hessian $\bB_k :\approx \nabla^2 \mathcal L(\bw_k)^{-1}$. The BFGS updates are given by
\begin{equation}
    \bw_{k+1} = \bw_k - \alpha_k \bB_k \nabla \mathcal L(\bw_k)
\end{equation}
where 
\begin{equation*}
    \begin{aligned}
    \bs_k &= \bw_{k+1} - \bw_k \\
    \by_k &= \nabla \mathcal L(\bw_{k+1}) - \nabla \mathcal L(\bw_{k}) \\
    \bB_{k+1} &= \bB_k - \frac{\bB_k \by_k \by_k^\top \bB_k}{\by_k^\top \bB_k \by_k} + \frac{\bs_k \bs_k^\top}{\by_k^\top \bs_k}
    \end{aligned}
\end{equation*}
When $k$ is large, $\bB_k$ approximates the inverse Hessian well.

\paragraph{L(imited-memory)-BFGS.} L-BFGS is a limited-memory version of BFGS. Instead of the inverse Hessian $\bB_k$, L-BFGS maintains a history of past $m$ updates (where $m$ is usually small). Recall the iterative update rule of $\bB_k$ in BFGS
\begin{equation}
    \bB_{k+1} = \bB_k - \frac{\bB_k \by_k \by_k^\top \bB_k}{\by_k^\top \bB_k \by_k} + \frac{\bs_k \bs_k^\top}{\by_k^\top \bs_k}
\end{equation}
Unlike BFGS, which recursively unroll to an initialization $\bB_0$, L-BFGS only unroll to $\bB_{k-m}$ but replacing $\bB_{k-m}$ with $\bB_\mathrm{init}$. In this regard, running $n$ steps of L-BFGS only requires $\mathcal O(mn)$ memory, which is more memory-efficient than BFGS who requires $\mathcal O(n^2)$ memory. The trade-off is that L-BFGS won't have a good estimate of the inverse Hessian when $m < d$, where $d$ is the dimensionality of the quadratic problem. In this regard, it will converge slower than full BFGS. 

In \Cref{fig:bfgs} and \Cref{fig:conjugate-gradient}, we compare Transformers with BFGS, L-BFGS, and Conjugate Gradient method on the metric of similarity of errors. We find that Transformers have a similar \textit{linear} correspondence with BFGS. This is perhaps not surprising, given that BFGS also gets a superlinear convergence rate for linear regression \cite{nocedal1999numerical}. Meanwhile, Transformers show a substantially faster convergence rate than L-BFGS and CG.

\begin{figure}[!htp]
    \centering
    \subfigure[Transformer v.s. BFGS]{\includegraphics[width=0.48\linewidth]{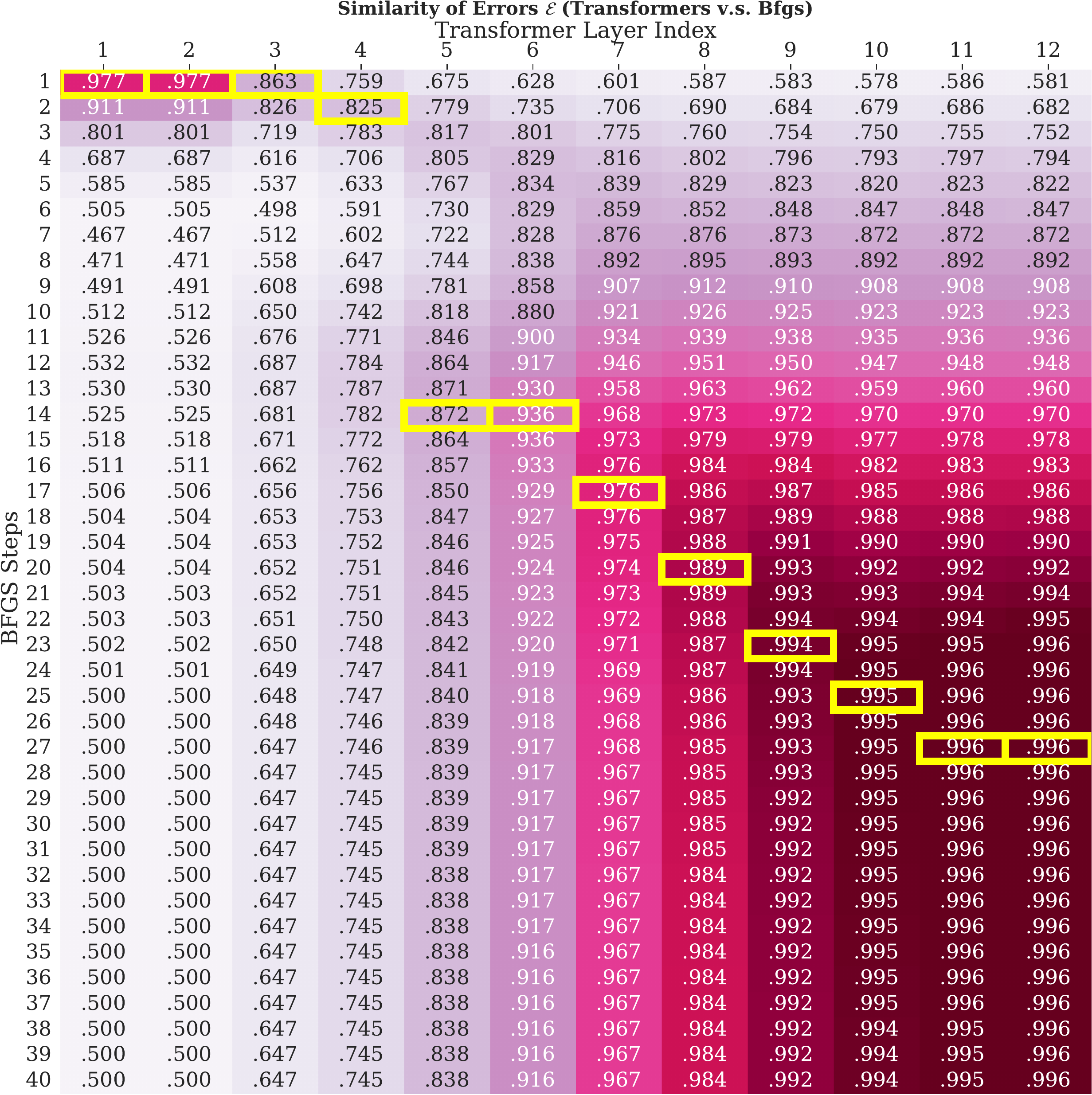}}
    \subfigure[Transformer v.s. L-BFGS]{\includegraphics[width=0.48\linewidth]{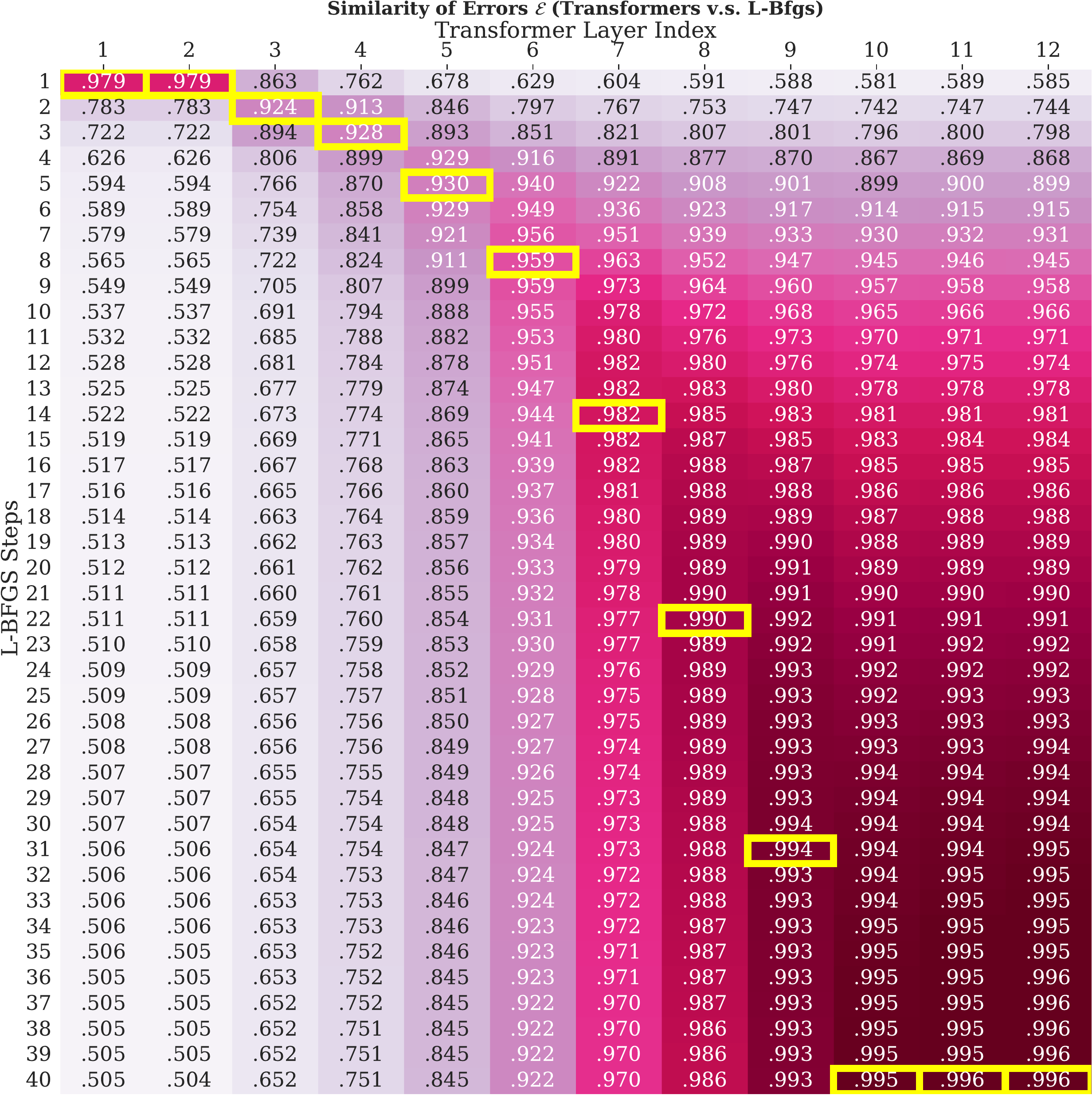}}
    \caption{\textbf{Similarity of Errors between Transformers and BFGS or L-BFGS.} The best matching steps are highlighted in yellow. We find that Transformer, from layers 6 to 11, has a linear correspondence with BFGS. For L-BFGS, due to its limited memory, it approximates second-order information more slowly and results in a slower convergence rate than Transformers.}
    \label{fig:bfgs}
\end{figure}

\begin{figure}[!htp]
    \centering
    \includegraphics[width=0.55\linewidth]{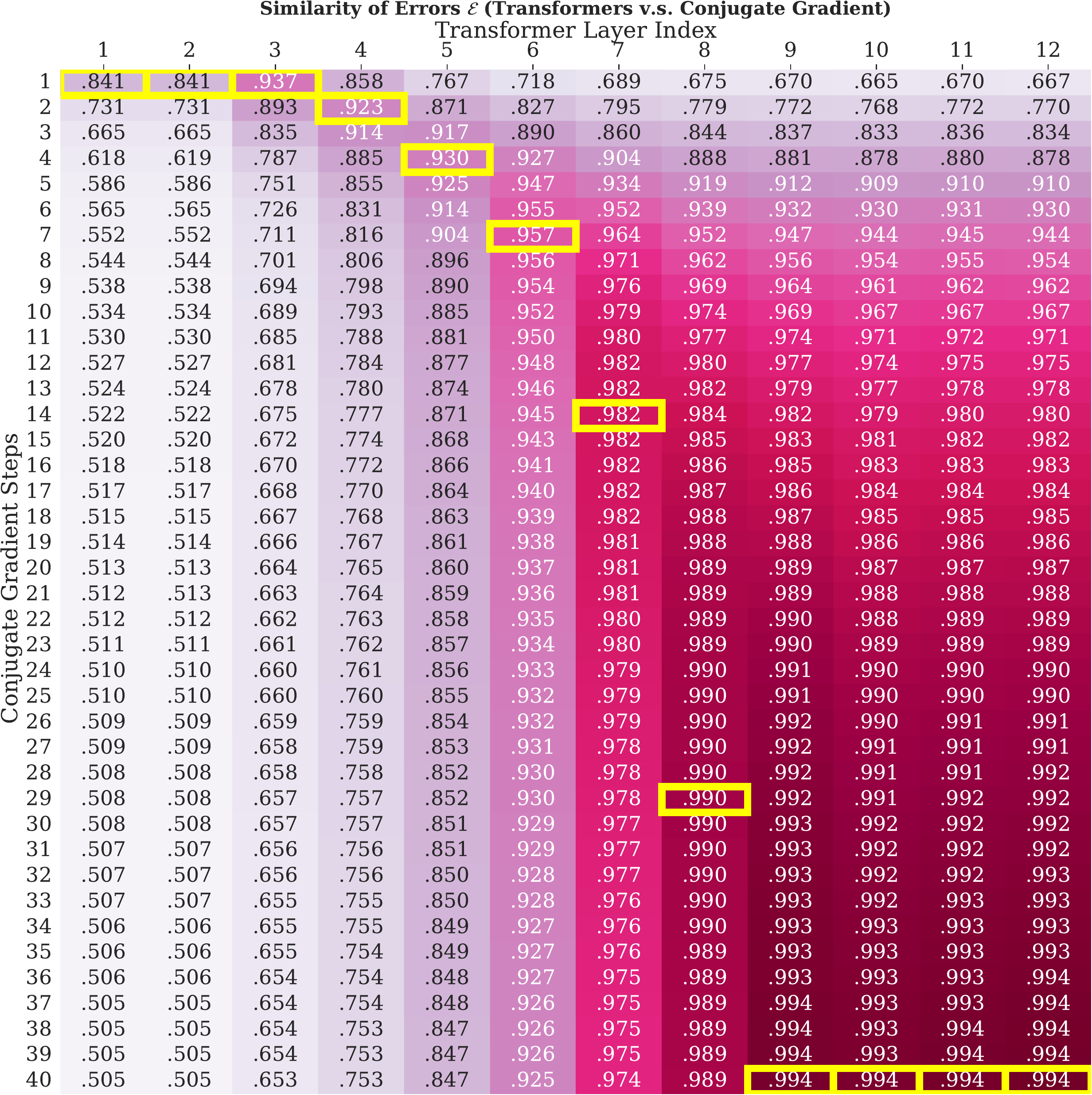}
    \caption{\textbf{Similarity of Errors between Transformers and Conjugate Gradient.} Transformer's convergence rate is still faster than conjugate gradient methods. }
    \label{fig:conjugate-gradient}
\end{figure}

\clearpage
\subsubsection{Additional Results on Comparison over Transformer Layers}
\begin{figure}[!htp]
\centering
\subfigure[Similarity of Errors]{\includegraphics[width=0.4\linewidth]{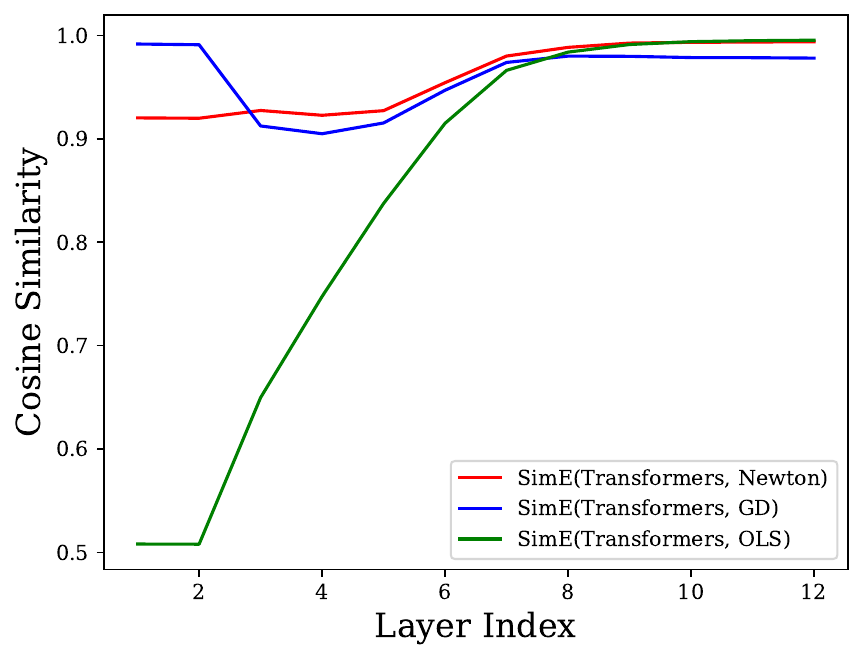}}\label{fig:over_layer_sim_e}
\subfigure[Similarity of Induced Weights]{\includegraphics[width=0.4\linewidth]{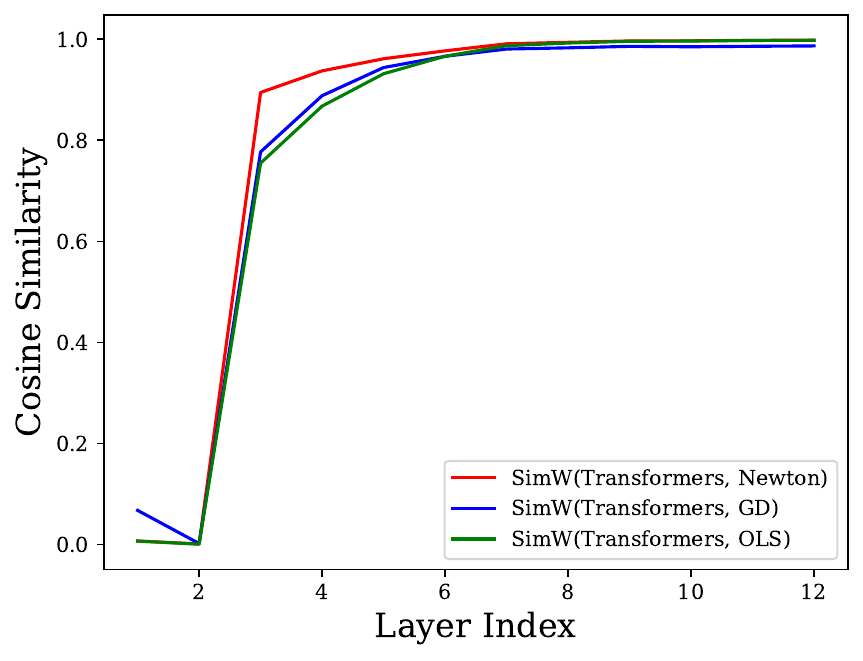}} \label{fig:over_layer_sim_w}
  \caption{Similarities between Transformer and candidate algorithms. Transformers resemble \textit{Iterative Newton's Method} the most.}
  \label{fig:over_layers}
\end{figure}

\subsubsection{Additional Results on Similarity of Induced Weights} \label{app:sim_w}
We present more details line plots for how the similarity of weights changes as the models see more in-context observations $\{\bx_i, y_i\}_{i=1}^n$, i.e., as $n$ increases. We fix the number of Transformers layers $\ell$ and compare with other algorithms with their best-match steps to $\ell$ in \Cref{fig:over_examples}. 

\begin{figure}[hb]
    \centering
    \includegraphics[width=0.32\linewidth]{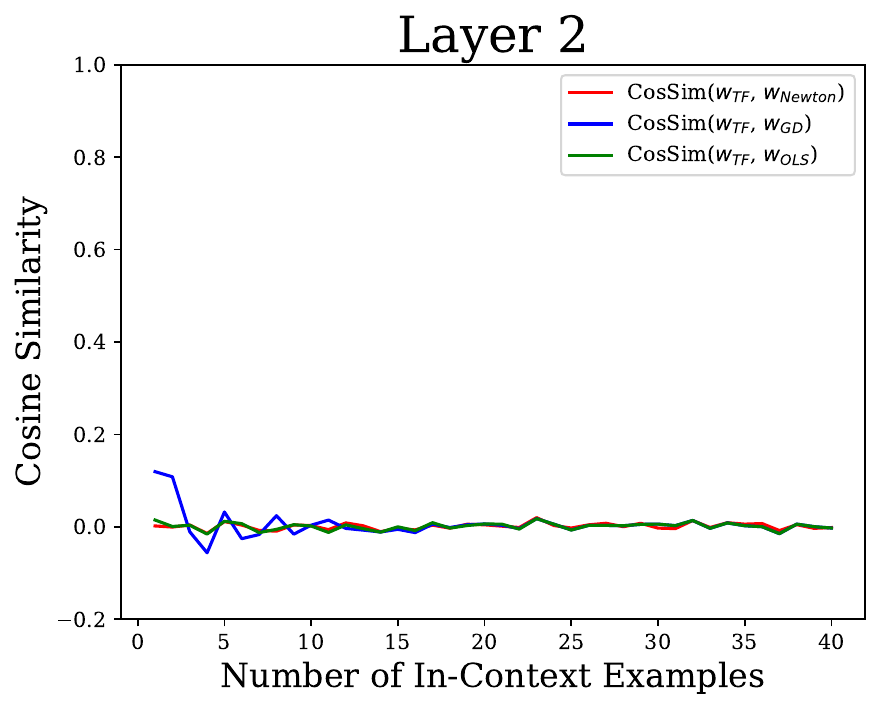}
    \includegraphics[width=0.32\linewidth]{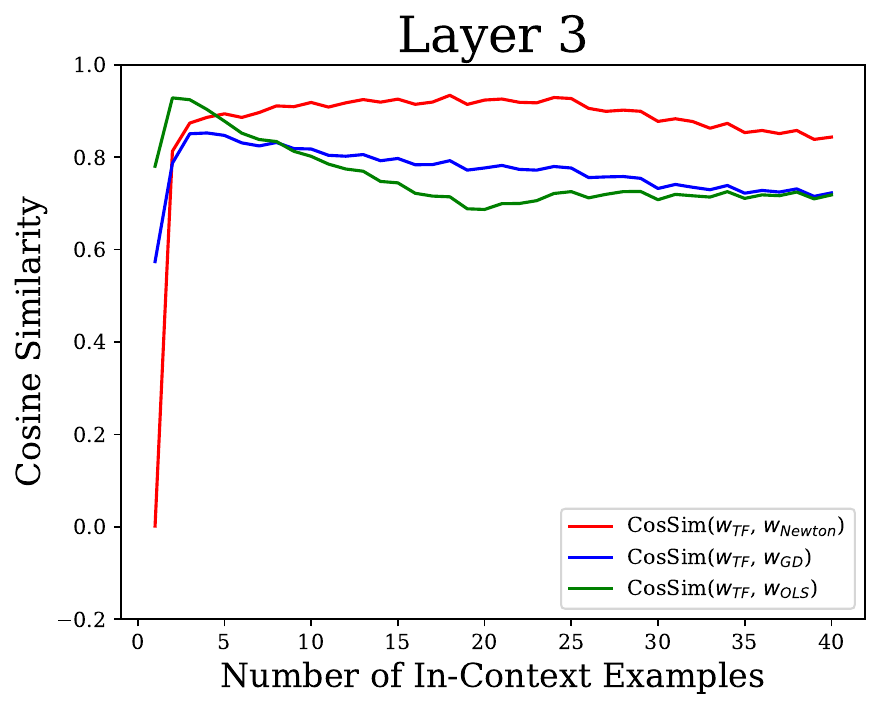}
    \includegraphics[width=0.32\linewidth]{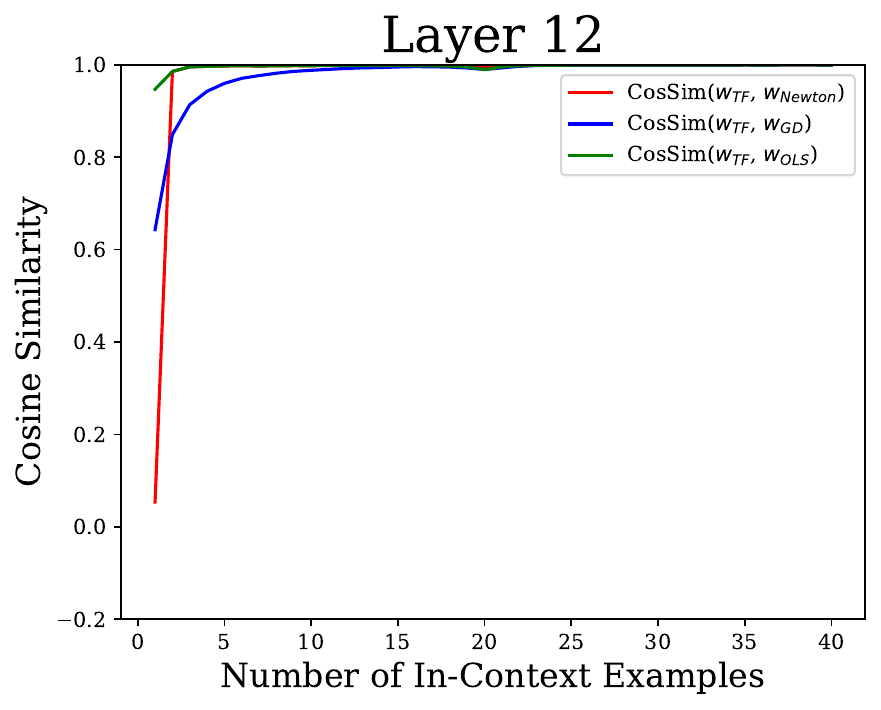}
    \caption{\textit{Similarity of induced weights} over varying number of in-context examples, on three layer indices of Transformers, indexed as 2, 3 and 12. We find that initially at layer 2, the Transformers model hasn't learned so it has zero similarity to all candidate algorithms. As we progress to the next layer number 3, we find that Transformers start to learn, and when provided few examples, Transformers are more similar to OLS but soon become most similar to the Iterative Newton's Method. Layer  12 shows that Transformers in the later layers converge to the OLS solution when provided more than 1 example. We also find there is a dip around $n = d$ for similarity between Transformers and OLS but not for Transformers and Newton, and this is probably because OLS has a more prominent double-descent phenomenon than Transformers and Newton.}
    \label{fig:over_examples}
\end{figure}

\clearpage
\subsection{Varying Data Distribution or Function Class}
\subsubsection{Experiments on Ill-Conditioned Problems}\label{app:ill}
In this section, we repeat the same experiments as we did on isotropic data in the main text and in \Cref{app:exp_standard}, and we change the covariance matrix to be ill-conditioned such that $\kappa(\bSigma) = 100$. 
\begin{figure}[htp]
    \centering
    \subfigure[Transformers]{\includegraphics[width=0.4\linewidth]{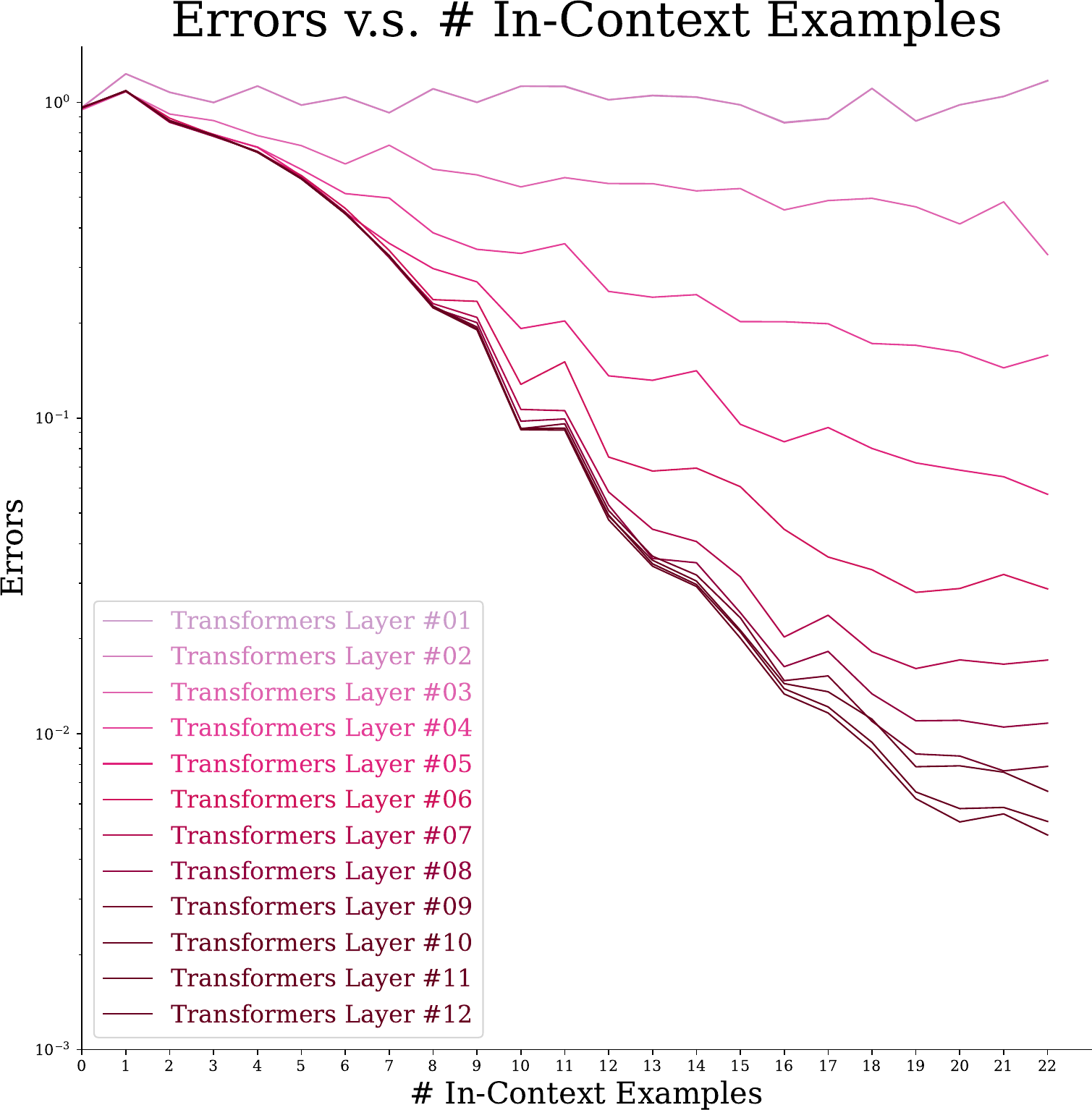}} \label{subfig:transformers_skew_only}
    \subfigure[Iterative Newton's Method]{\includegraphics[width=0.4\linewidth]{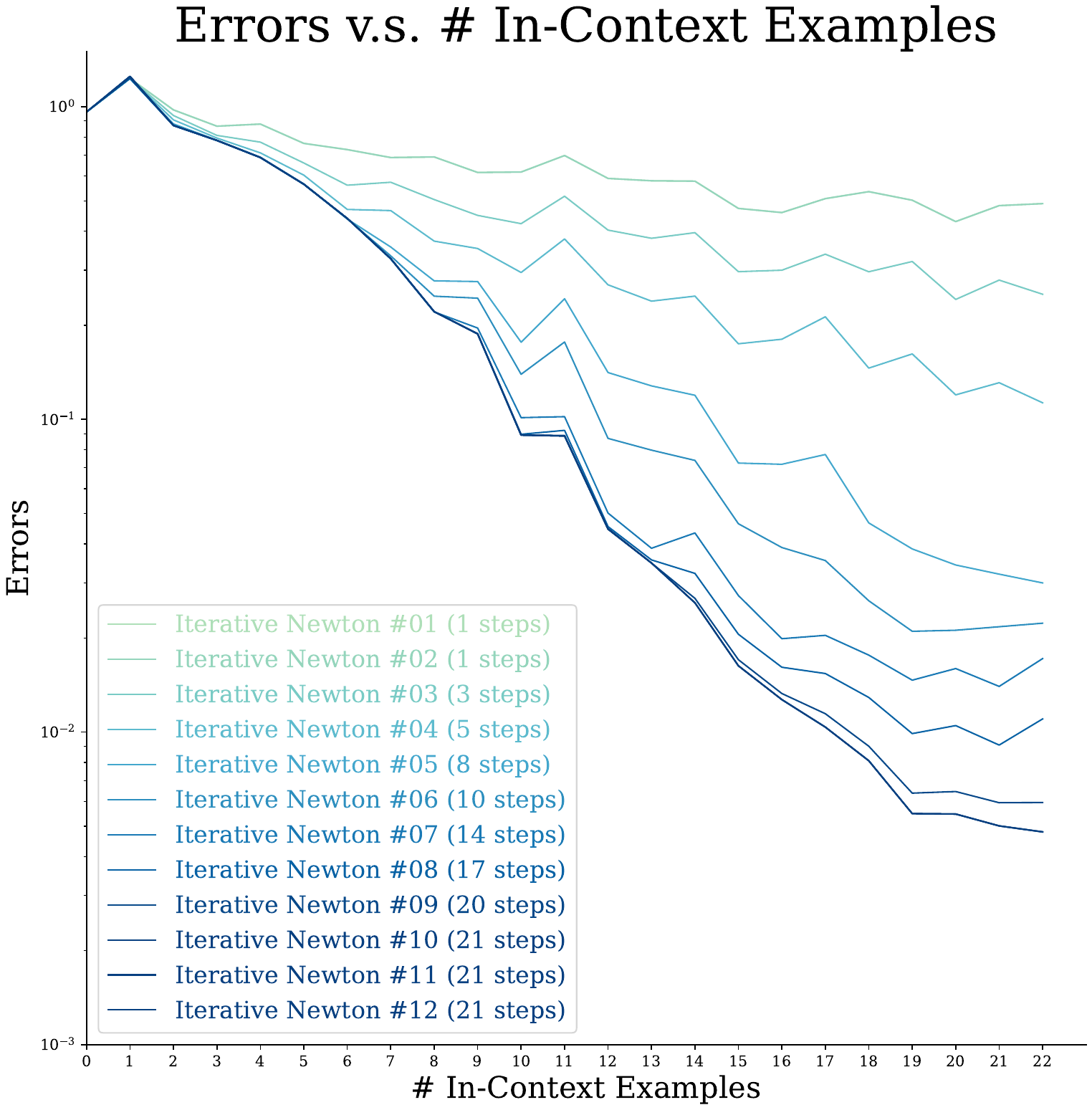}} \label{subfig:newton_skew_only}
    \caption{\textbf{Progression of Algorithms on Ill-Conditioned Data.} Transformer's performance still improves over the layer index $\ell$; Iterative Newton's Method's performance improves over the number of iterations $t$ and we plot the best-matching $t$ to Transformer's $\ell$ following \cref{def:matching}.}
    \label{fig:progression_skew}
\end{figure}

We also present the heatmaps to find the best-matching steps and conclude that Transformers are similar to Newton's method than GD in ill-conditioned data.
\begin{figure}[!htp]
    \centering
\includegraphics[width=0.49\linewidth]{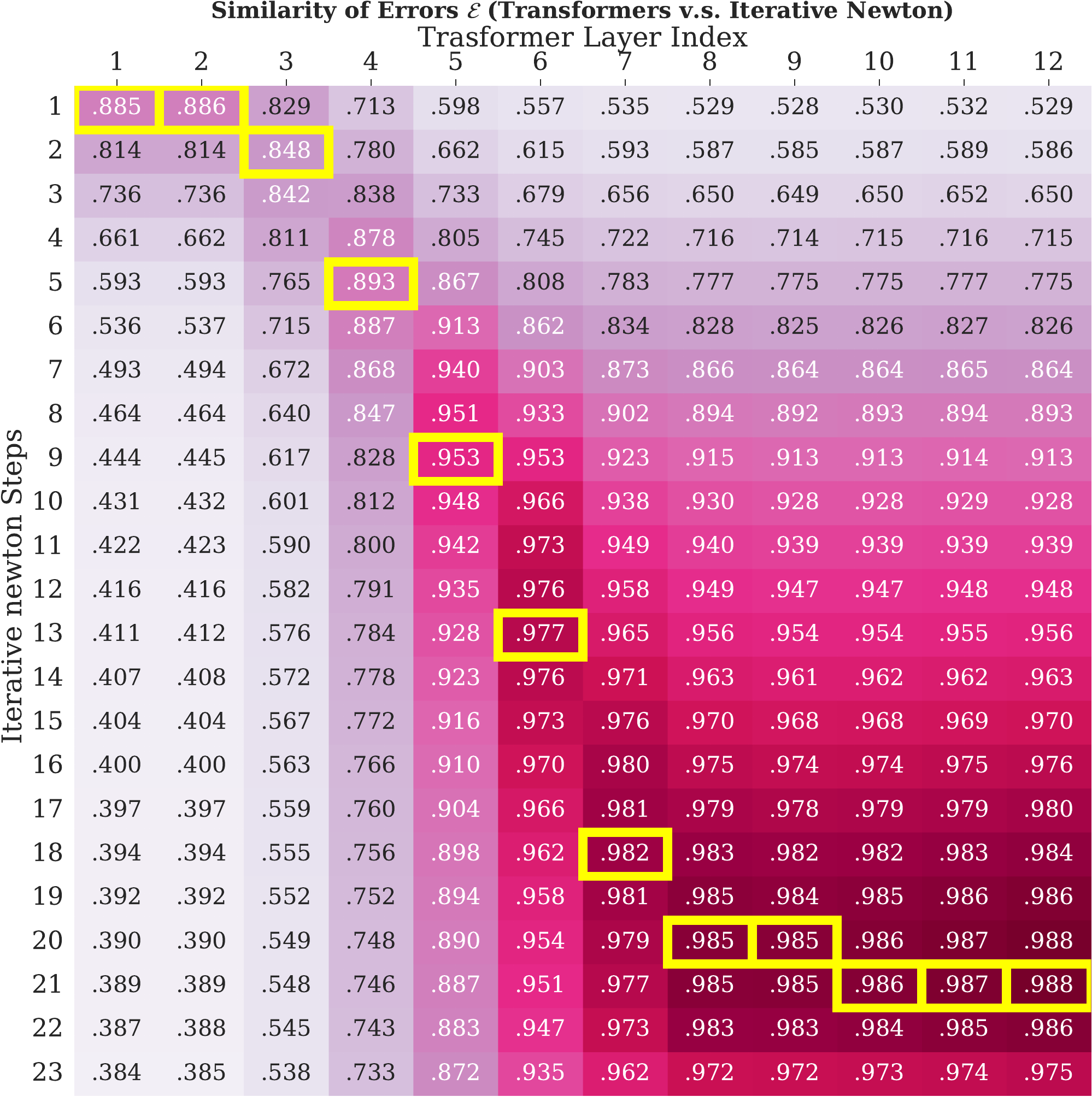}
\includegraphics[width=0.49\linewidth]{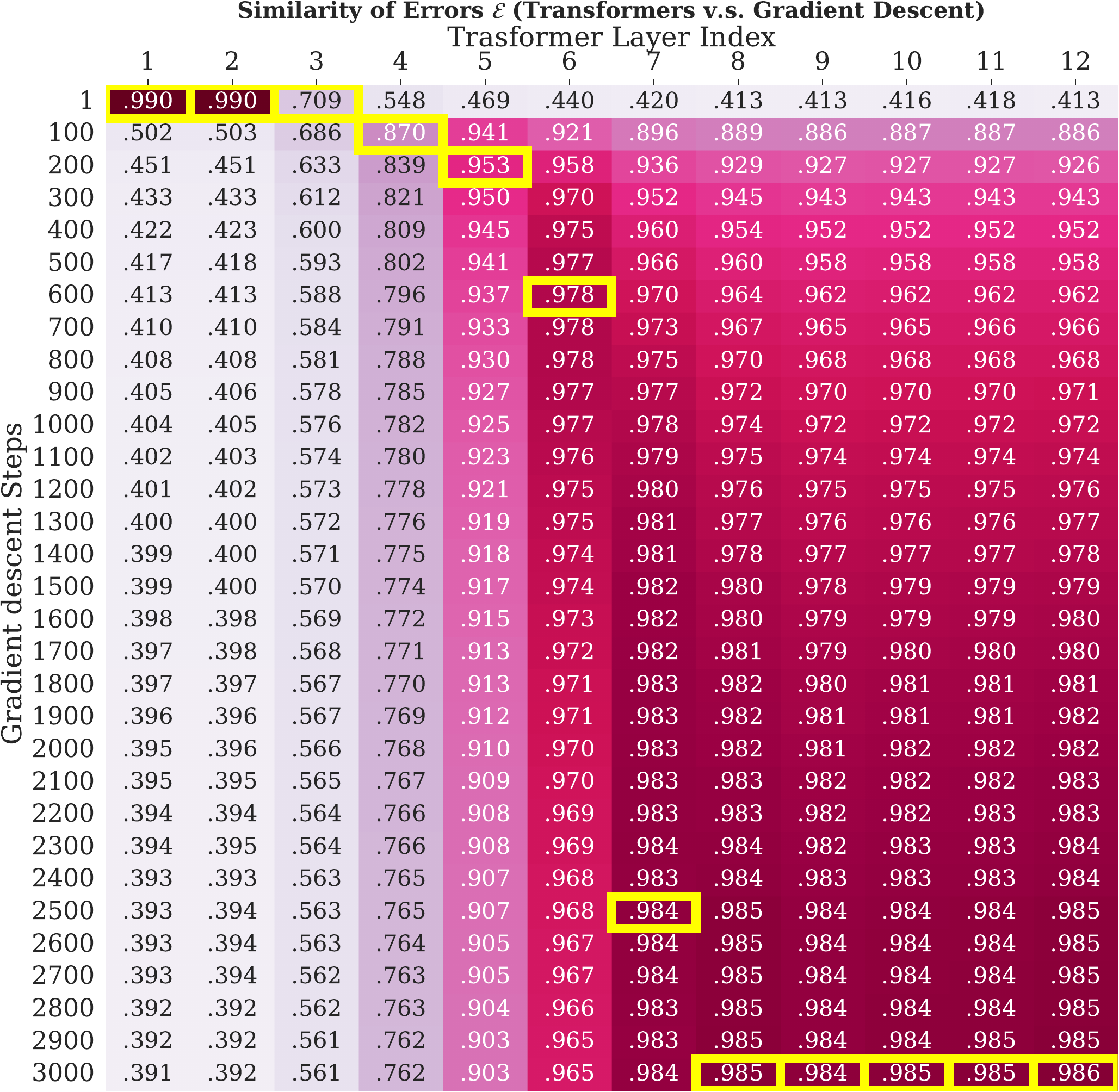}
\caption{\textbf{Similarity of Errors on Ill-Conditioned Data.} The best matching steps are highlighted in yellow.} \label{fig:ill_full_heatmap_sim_e}
\end{figure}

\begin{figure}[!htp]
    \centering
\includegraphics[width=0.49\linewidth]{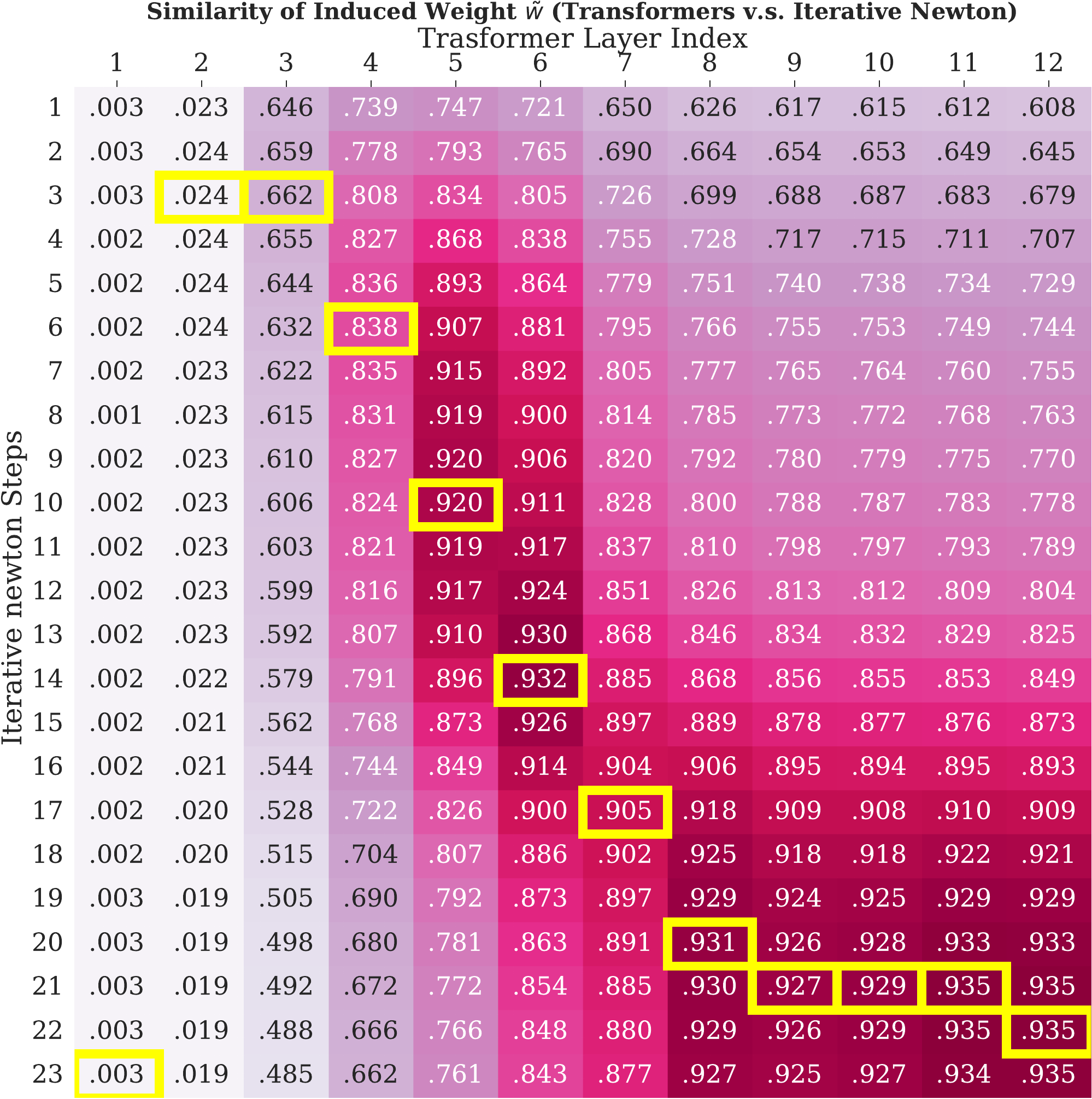}
\includegraphics[width=0.49\linewidth]{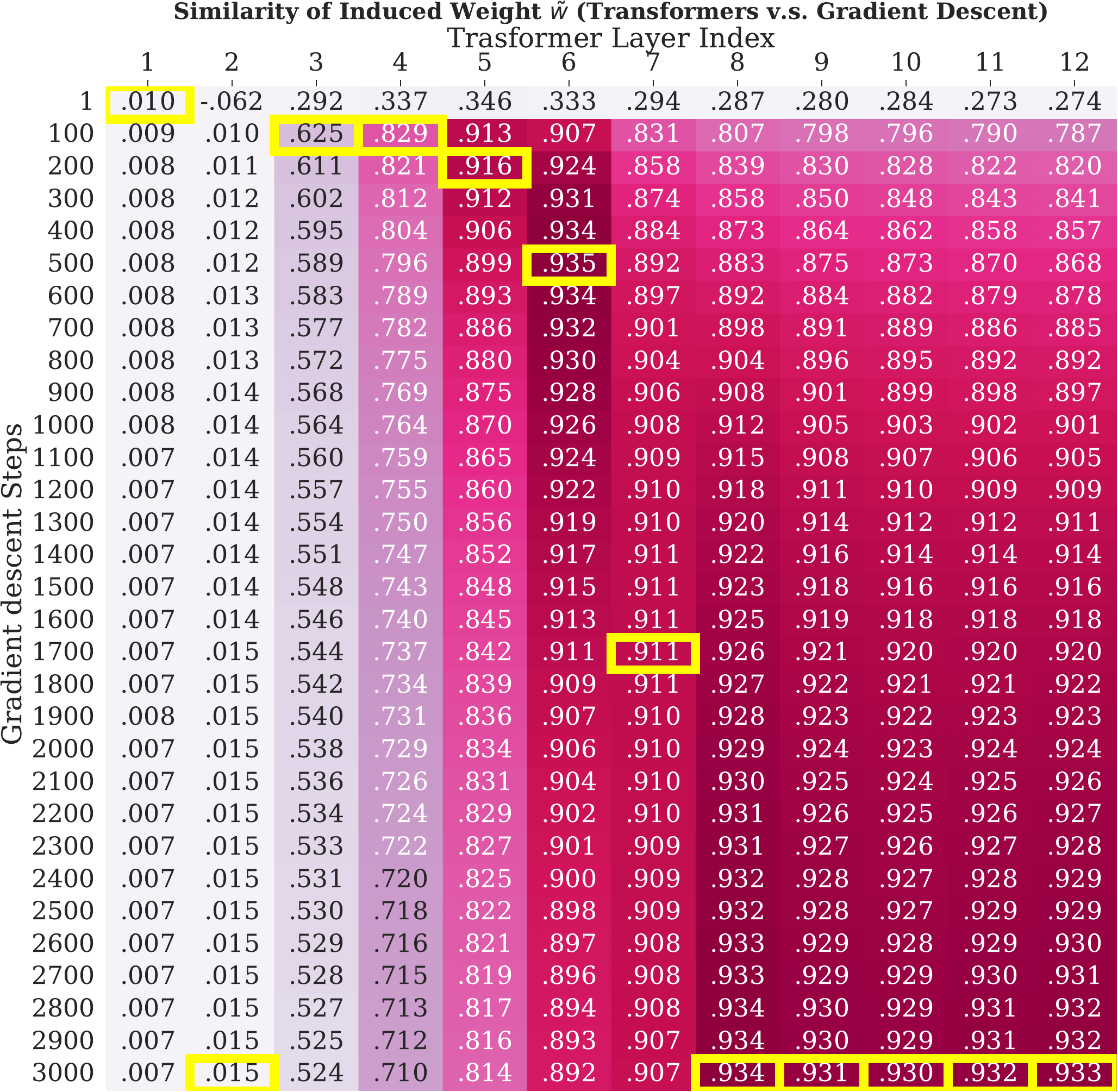}
\caption{\textbf{Similarity of Induced Weights on Ill-Conditioned Data.} The best matching steps are highlighted in yellow.} \label{fig:ill_full_heatmap_sim_w}
\end{figure}

\begin{figure}[!htp]
    \centering
\subfigure[Transformer v.s. BFGS]{\includegraphics[width=0.49\linewidth]{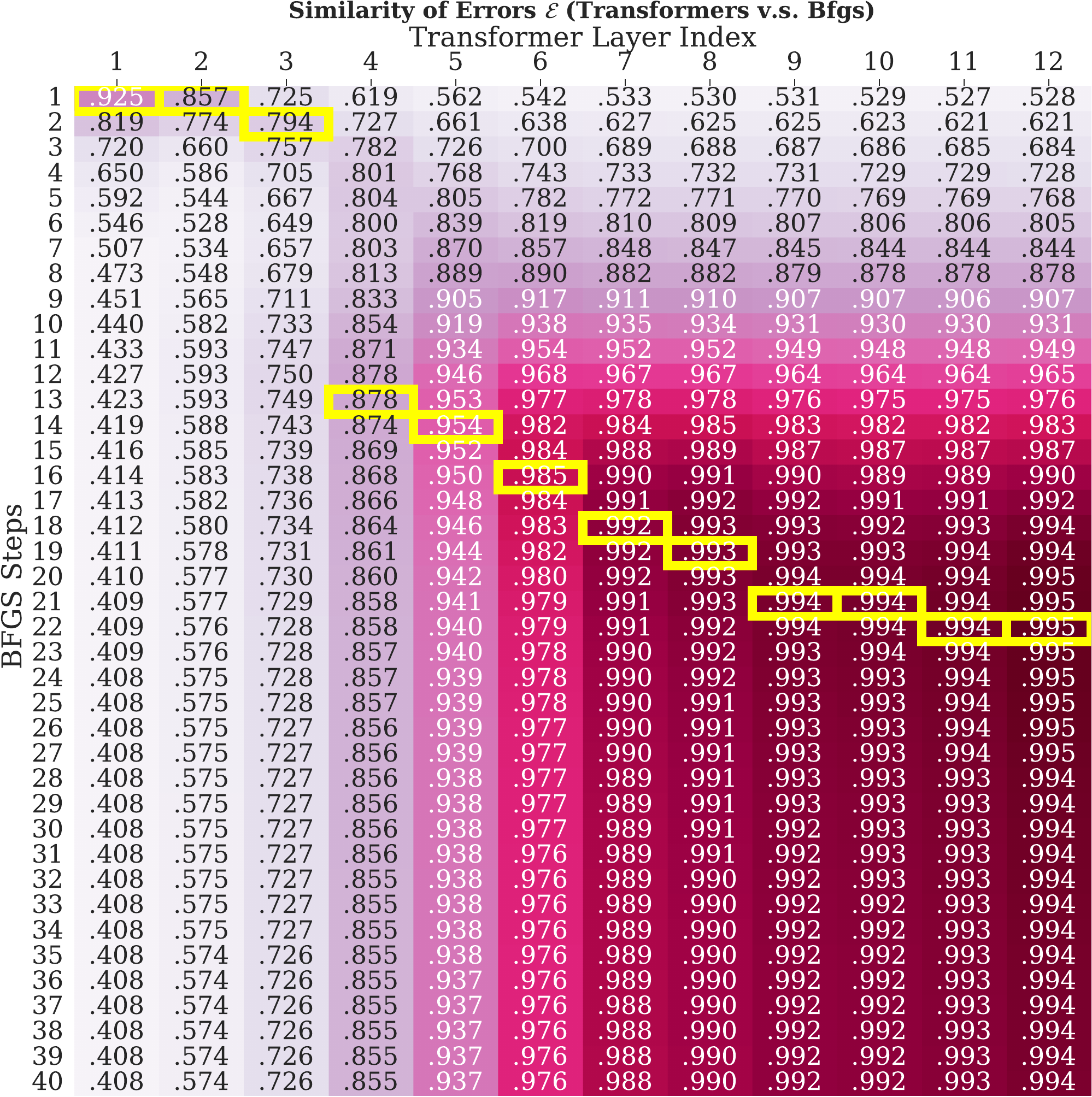}}
\subfigure[Transformer v.s. L-BFGS]{
\includegraphics[width=0.49\linewidth]{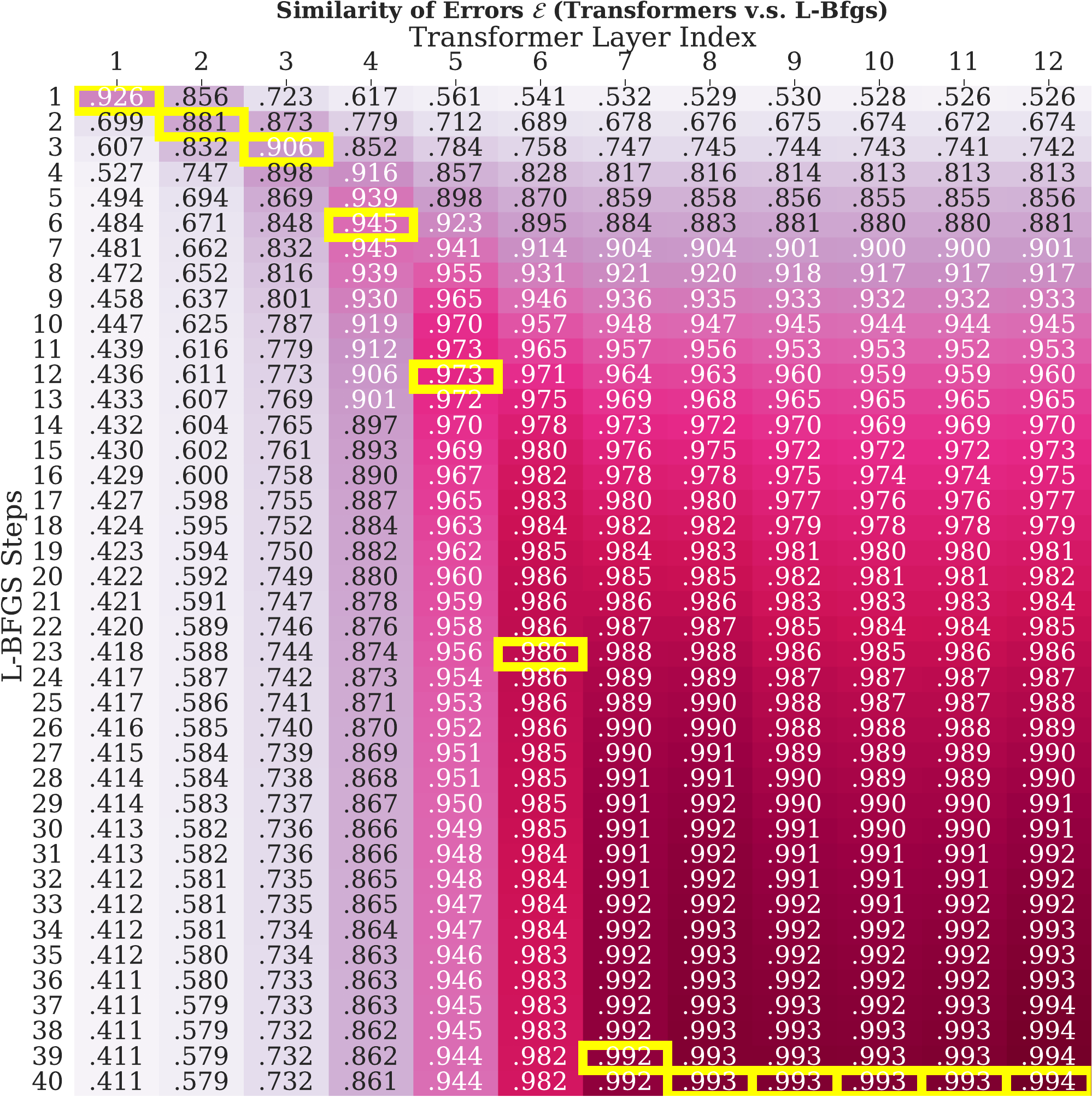}
}
\caption{\textbf{Similarity of Errors on Ill-Conditioned Data with Quasi-Newton Methods.} The best matching steps are highlighted in yellow. Transformer also matches BFGS linearly, from layers 4 to 11. L-BFGS still suffers due to its limited memory but still better than \gd because L-BFGS also attempts to approximate second-order information.} \label{fig:ill_bfgs}
\end{figure}

\clearpage
\subsubsection{Experiments with Noisy Linear Regression} \label{ssec:noisy}
We repeat the same experiments on noisy linear regression tasks with $y = \bw^\top \bx + \varepsilon$ where $\varepsilon \sim \mathcal N(0, \sigma^2)$ with noise level $\sigma = 0.1$. As shown in Figure \ref{fig:noisy-lr}, Transformers still show superlinear convergence on noisy linear regression tasks. Since the predictor is $\hat{\bw} = \left(\bX^\top \bX + \lambda \bI\right)^\dagger \bX^\top \by$ for some $\lambda$, the iterative newton's method is applied to $\bS = \bX^\top \bX + \lambda \bI$. Iterative Newton's method still keeps the same superlinear convergence rates. As it's also shown in Figure \ref{fig:noisy-lr}, Transformers and Iternative Newton's rates match linearly, as in the noiseless linear regression tasks. 
\begin{figure}[htp]
    \centering
      \includegraphics[width=0.45\linewidth]{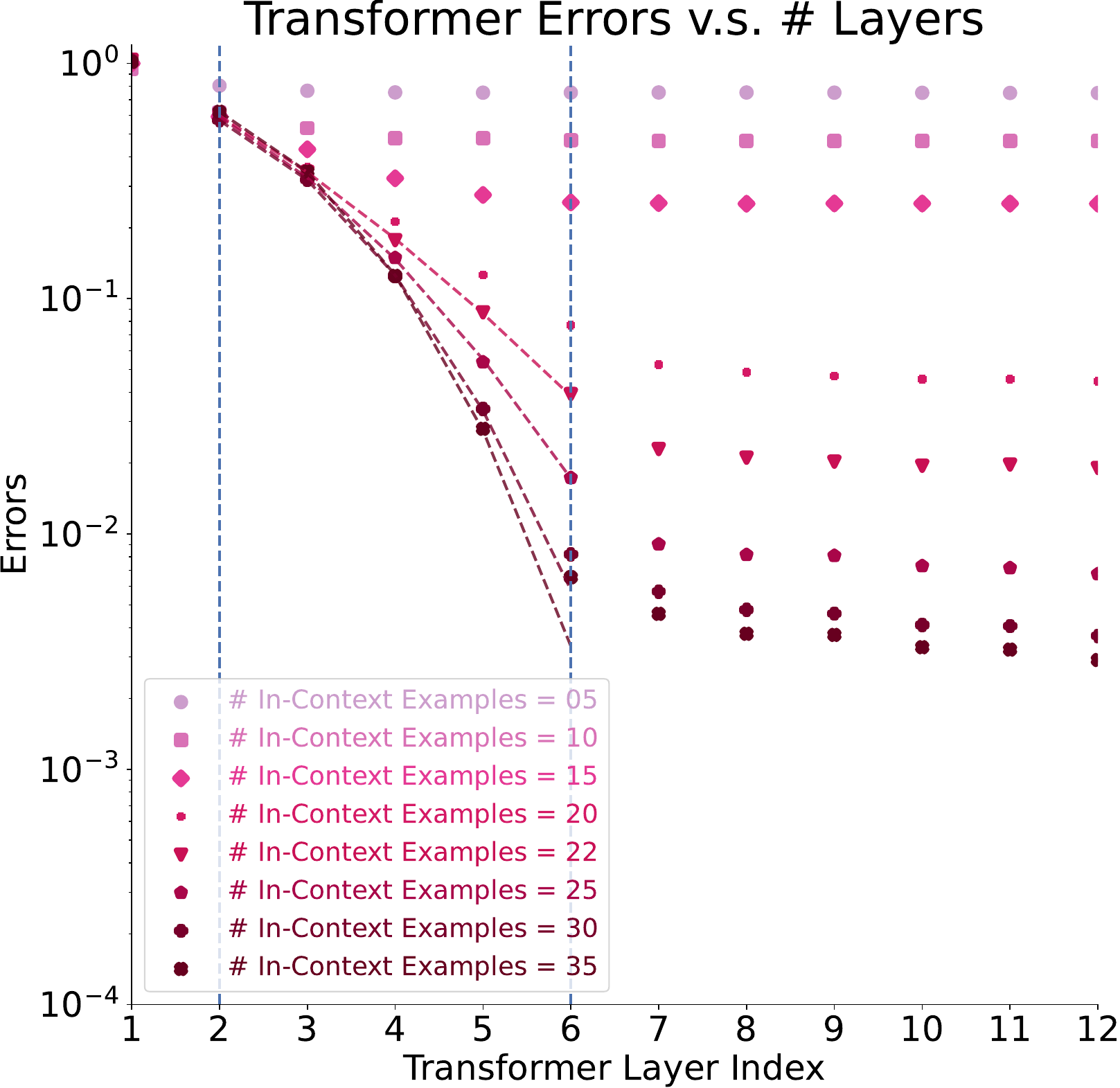}
      
      \includegraphics[width=0.45\linewidth]{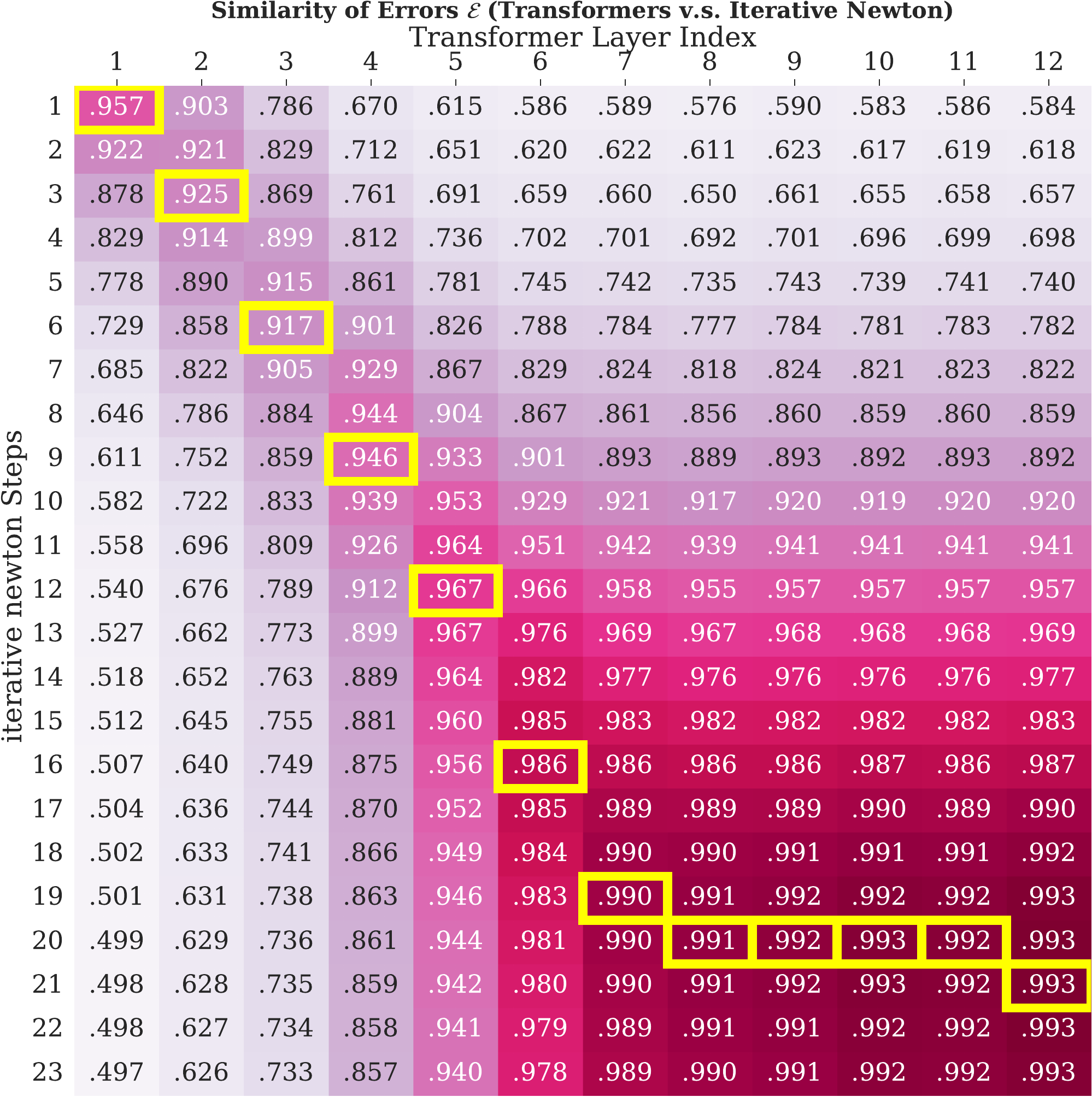}
      \hfill
      \includegraphics[width=0.45\linewidth]{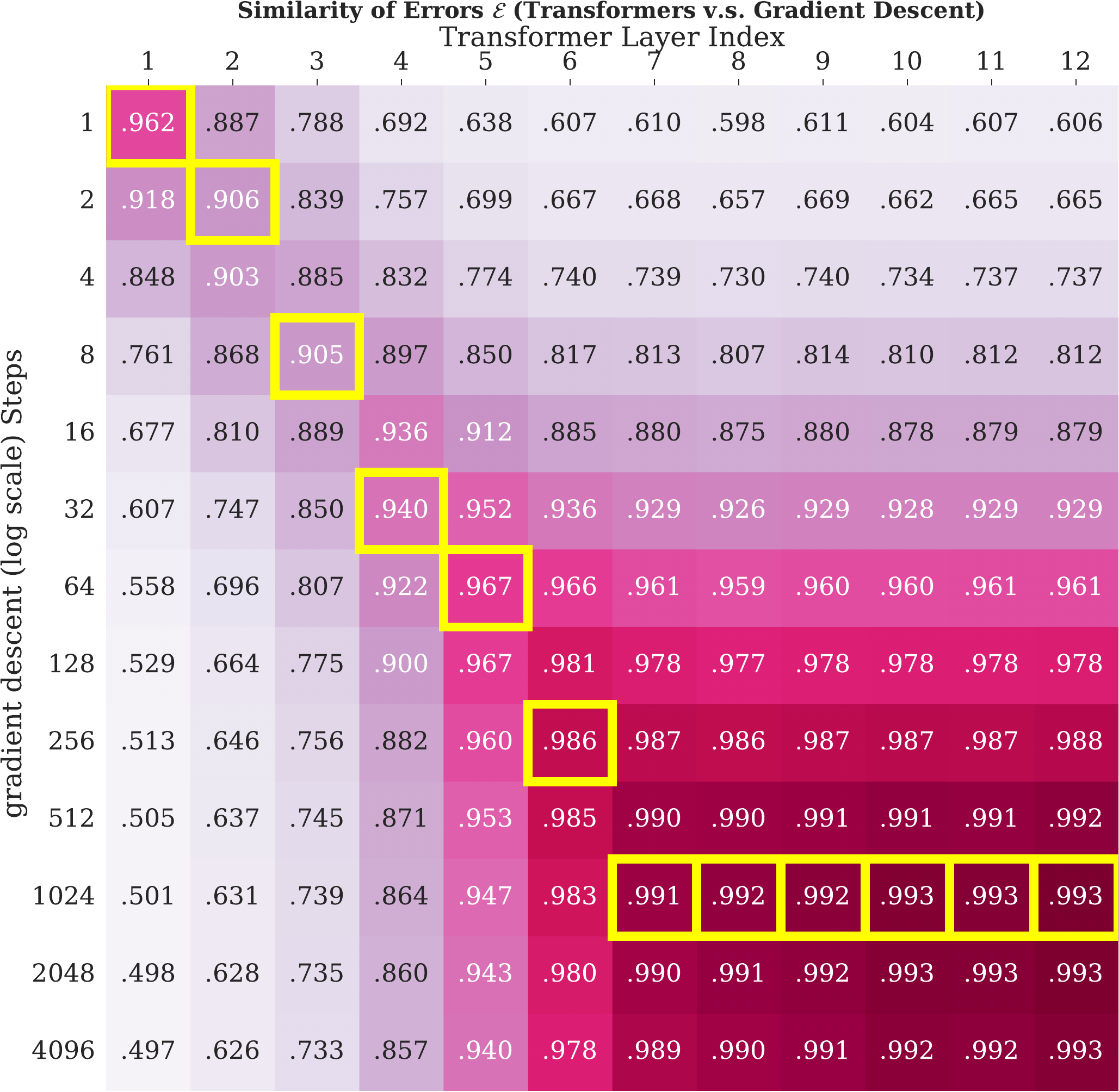}
      \caption{Experiment results on \textbf{Noisy Linear Regression}. \textbf{(Top)} Transformers have superlinear convergence rate. \textbf{(Bottom)} Transformers match Iterative Newton's rate and are exponentially faster than \gd.}
        \label{fig:noisy-lr}
\end{figure}

\subsubsection{Experiments with a Non-Linear Function Class (2-Layer MLP)} \label{app:non-linear}
To extend our experiments to non-linear cases, we adopt the same 2-layer ReLU neural network studied by \citet{Garg2022WhatCT}: see Fig. 5(c) in their paper. For any prompt $(\bx_1, y_1, \cdots, \bx_t, y_t)$, instead of generating labels $y_k = {\bw^\star}^\top \bx$ as mainly studied in the paper, we study a 2-layer neural network function class parameterized by $\bW \in \mathbb R^{d_\mathrm{hidden} \times d}$, $\bv \in \mathbb R^{d_\mathrm{hidden}}$, $\ba \in \mathbb R^{d_\mathrm{hidden}}$, and $b \in \mathbb R$, so that 
\begin{equation}
    y_k = f_{\bW, \bv, \ba, b}(\bx_k) = \ba^\top \mathrm{ReLU}\Big(\bW \bx_k + \bv\Big) + b
    \label{eqn:2nn-relu}
\end{equation}
Then we repeat the same probing experiments as in the main paper. As shown in Figure \ref{fig:transformer-relu2nn}, even on 2-layer neural network tasks with ReLU activation, Transformer shows superlinear convergence rates. Transformer shows an exponentially faster convergence rate than \gd's, because \gd's steps are shown in log scale and the trend is linear -- similar to Figure 9 in the main paper. 

\begin{figure}[htp]
    \centering
    \includegraphics[width=0.49\linewidth]{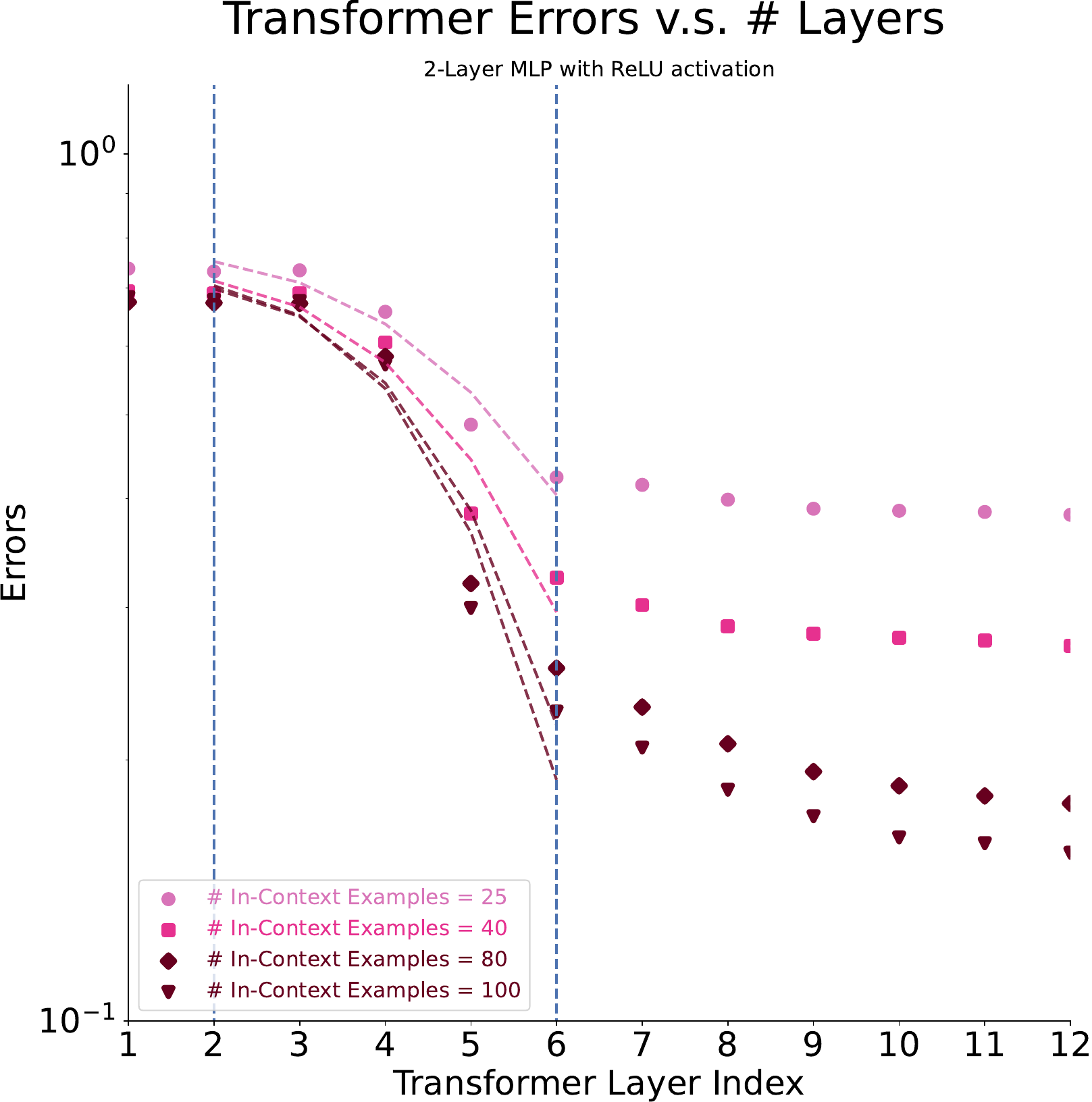}
    \hfill
  \includegraphics[width=0.49\linewidth]{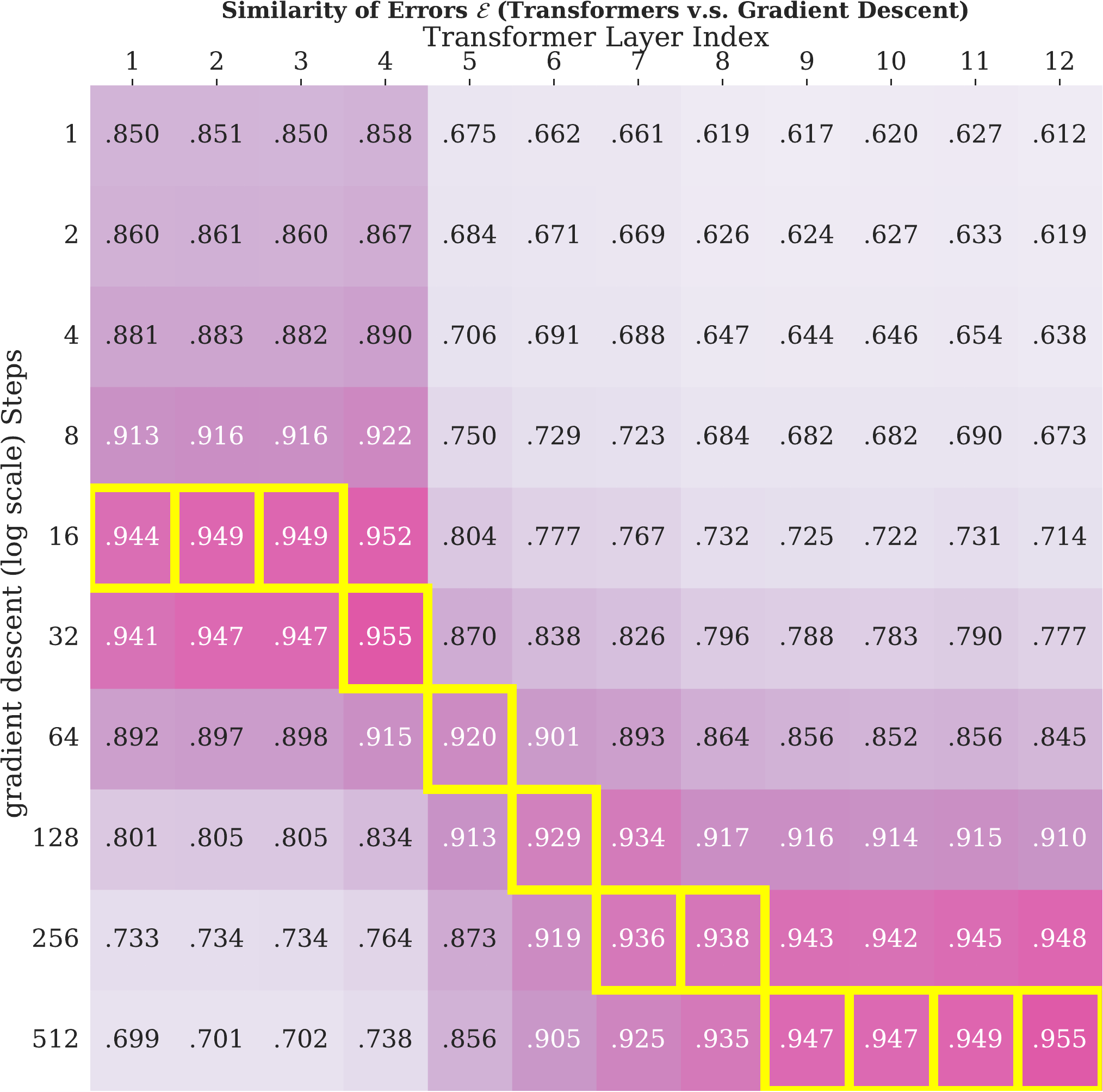}
    \caption{Empirical Results on 2-Layer Neural Network Regression with ReLU activation function. Transformers have superlinear convergence rates and match \gd's convergence rate exponentially}
    \label{fig:transformer-relu2nn}
\end{figure}

It would be interesting to ablate the activation function used in \Cref{eqn:2nn-relu}. We further consider the case when it's using the Tanh activation instead of ReLU, i.e.

\begin{equation}
    y_k = f_{\bW, \bv, \ba, b}(\bx_k) = \ba^\top \mathrm{Tanh}\Big(\bW \bx_k + \bv\Big) + b
    \label{eqn:2nn-tanh}
\end{equation}

Repeating the same experiments as before, as shown in \Cref{fig:transformer-tanh2nn}, we find that Transformers use the entire first 5 layers to pre-process and then only in the next few layers show exponentially faster convergence rate compared to \gd. We further note that in both \Cref{fig:transformer-relu2nn} and \Cref{fig:transformer-tanh2nn}, the cosine similarities between Transformers and \gd\ are significantly lower than the experiments with linear regression tasks. This might due to the over-parameterization of the function class and Transformers and \gd\ may arrive at different optima. 

\begin{figure}[htp]
    \centering
    \includegraphics[width=0.49\linewidth]{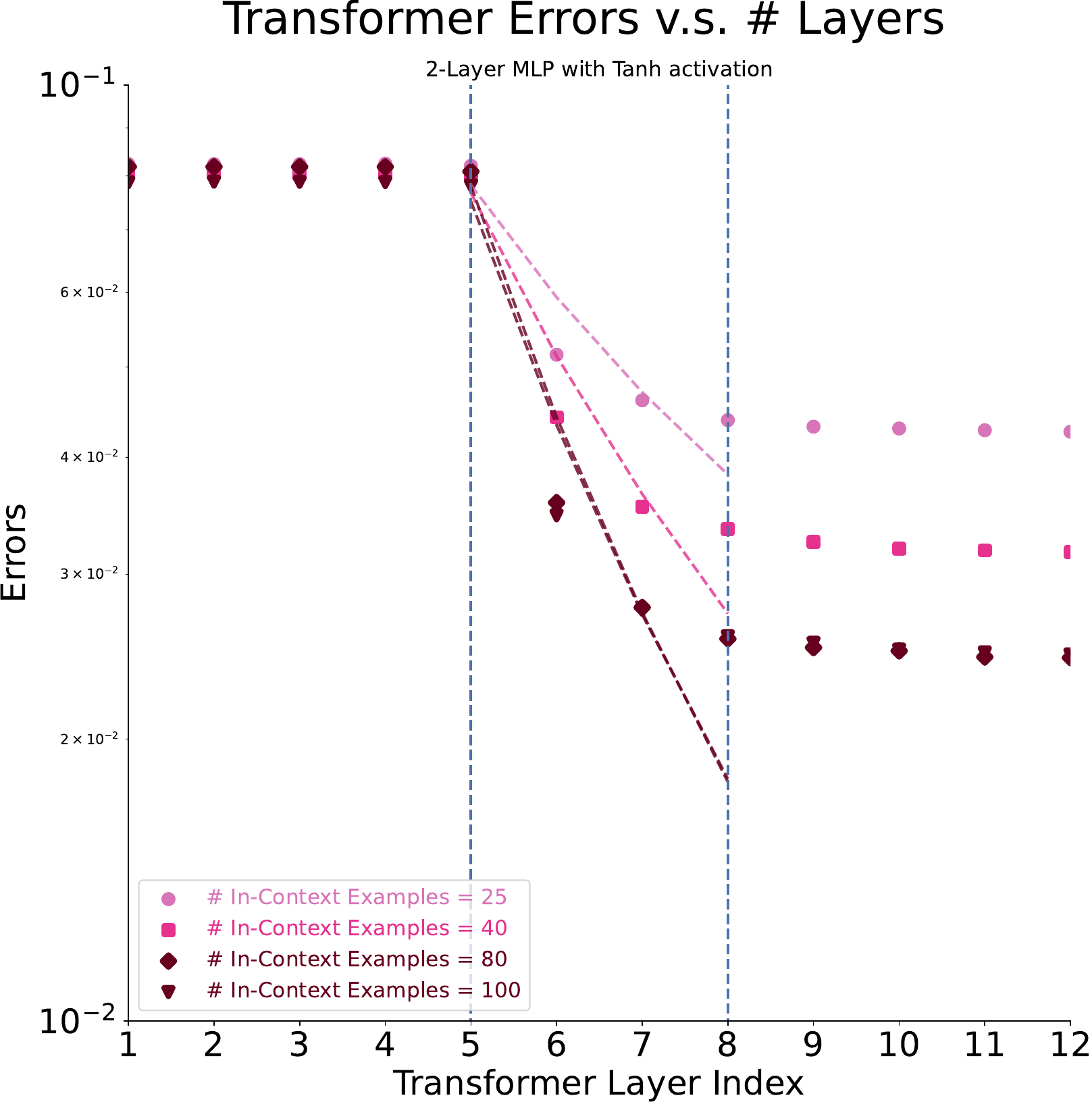}
    \hfill
  \includegraphics[width=0.49\linewidth]{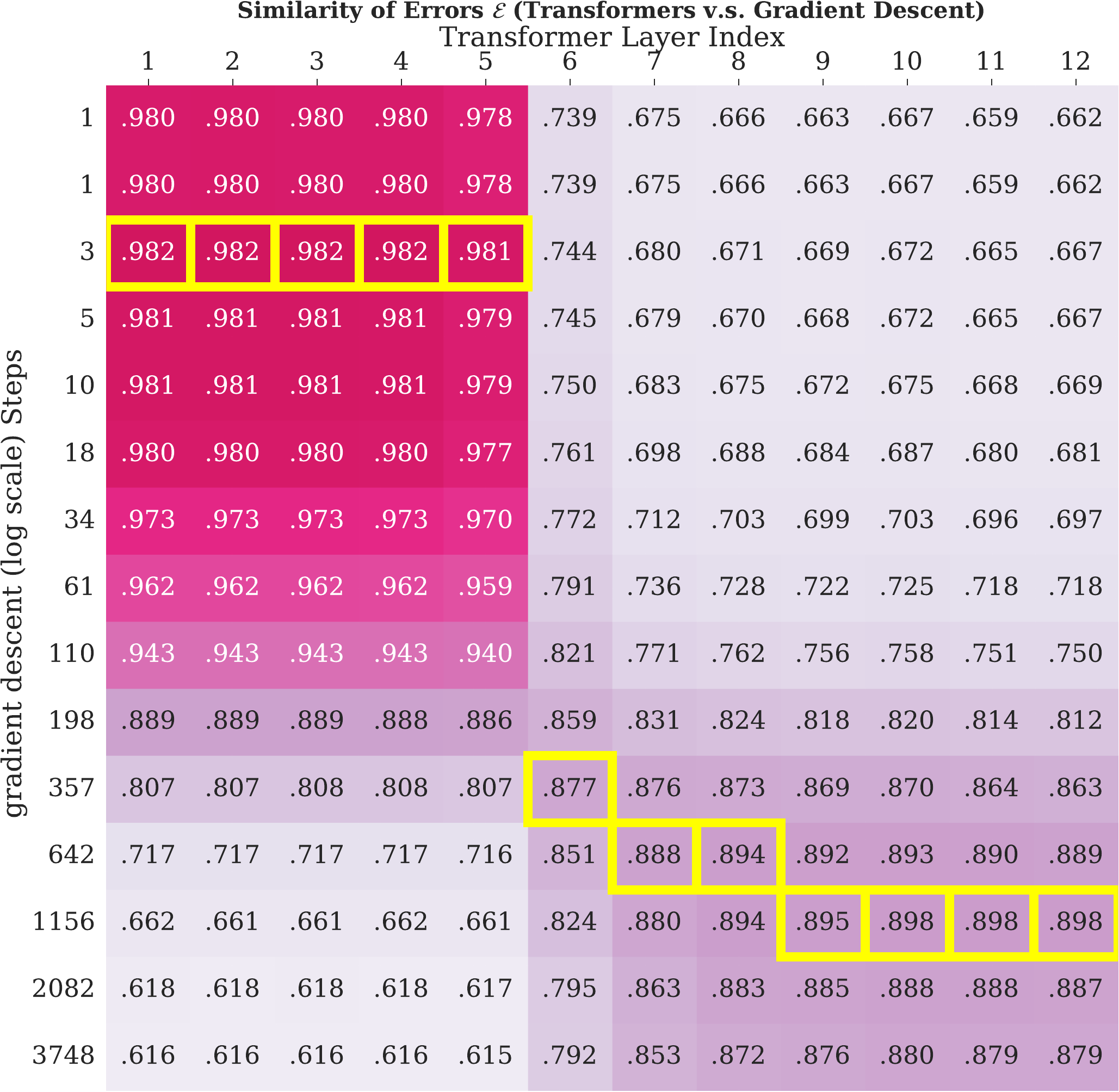}
    \caption{Empirical Results on 2-Layer Neural Network Regression with Tanh activation function. Transformers have superlinear convergence rates and match \gd's convergence rate exponentially}
    \label{fig:transformer-tanh2nn}
\end{figure}

It would be interesting for future research to explore further this function class of 2-layer MLP to understand fully how Transformer solve the regression problem in-context and whether it achieves a different optimum compared to alternative algorithms such as (Stochastic) \gd.
\clearpage
\subsection{Varying Transformer Architecture}
\subsubsection{Experiments on Transformers of Fewer Heads} \label{app:1-head}
In this section, we present experimental results from an alternative model configurations than the main text. We show in the main text that Transformers learn second-order optimization methods in-context where the experiments are using a GPT-2 model with 12 layers and 8 heads per layer. In this section, we present experiments with a GPT-2 model with 12 layers but only 1 head per layer. 

\begin{figure}[!htp]
    \centering
\includegraphics[width=0.45\linewidth]{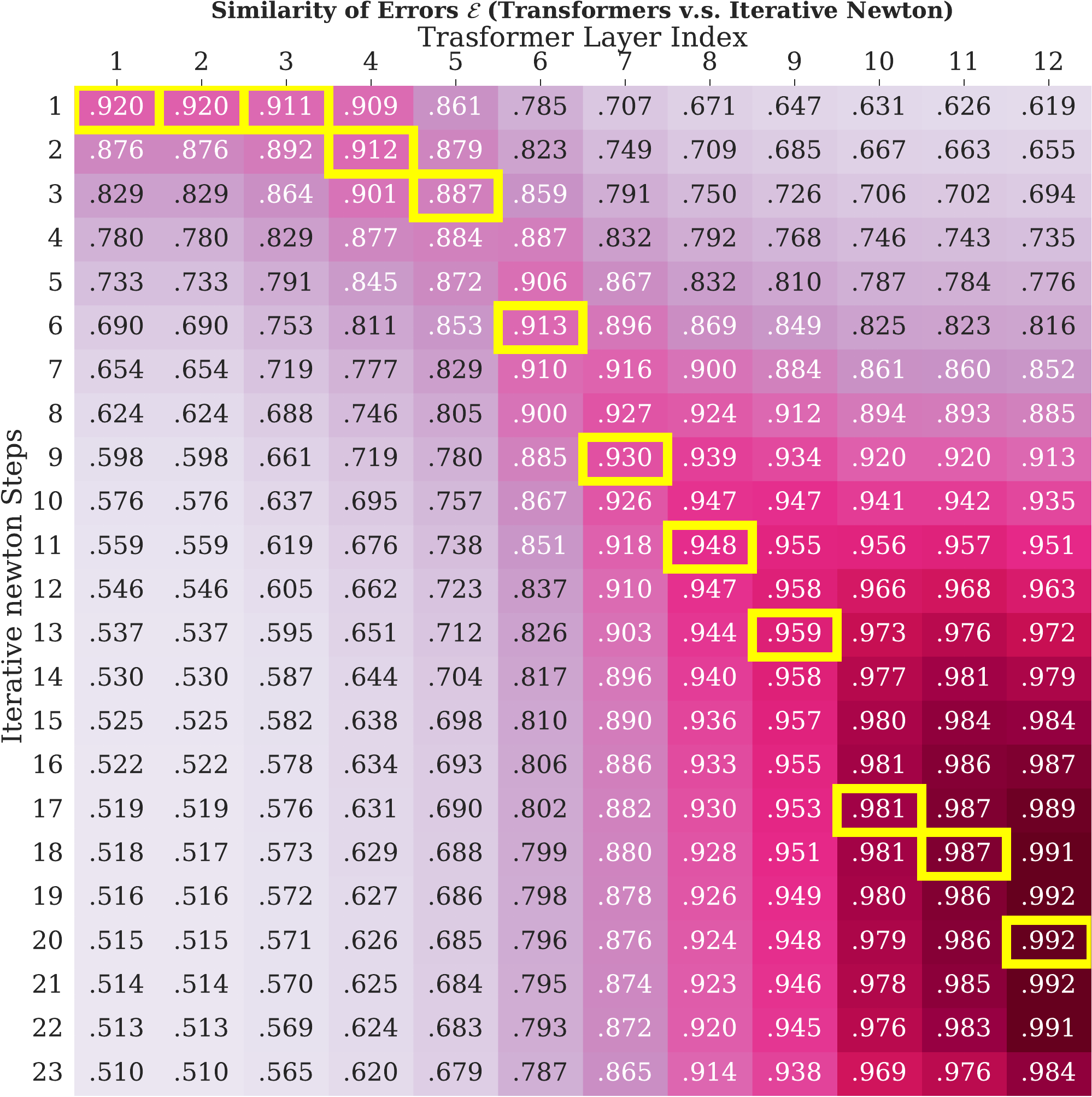}
\includegraphics[width=0.45\linewidth]{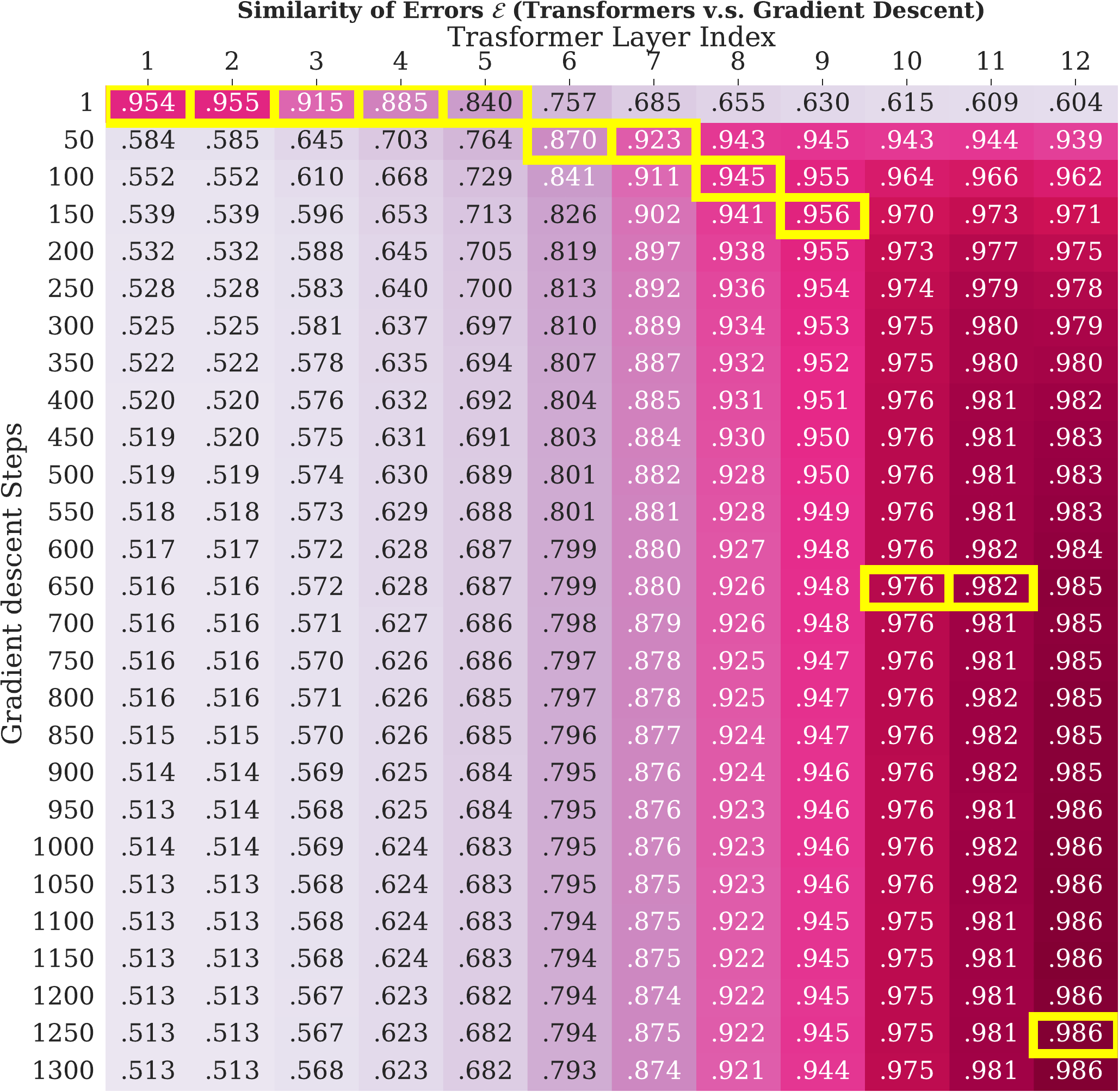}
\caption{\textbf{Similarity of Errors on an alternative Transformers Configuration.} The best matching steps are highlighted in yellow. } \label{fig:12layer_1head_full_heatmap_sim_e}
\end{figure}
\begin{figure}[!htp]
    \centering
\includegraphics[width=0.45\linewidth]{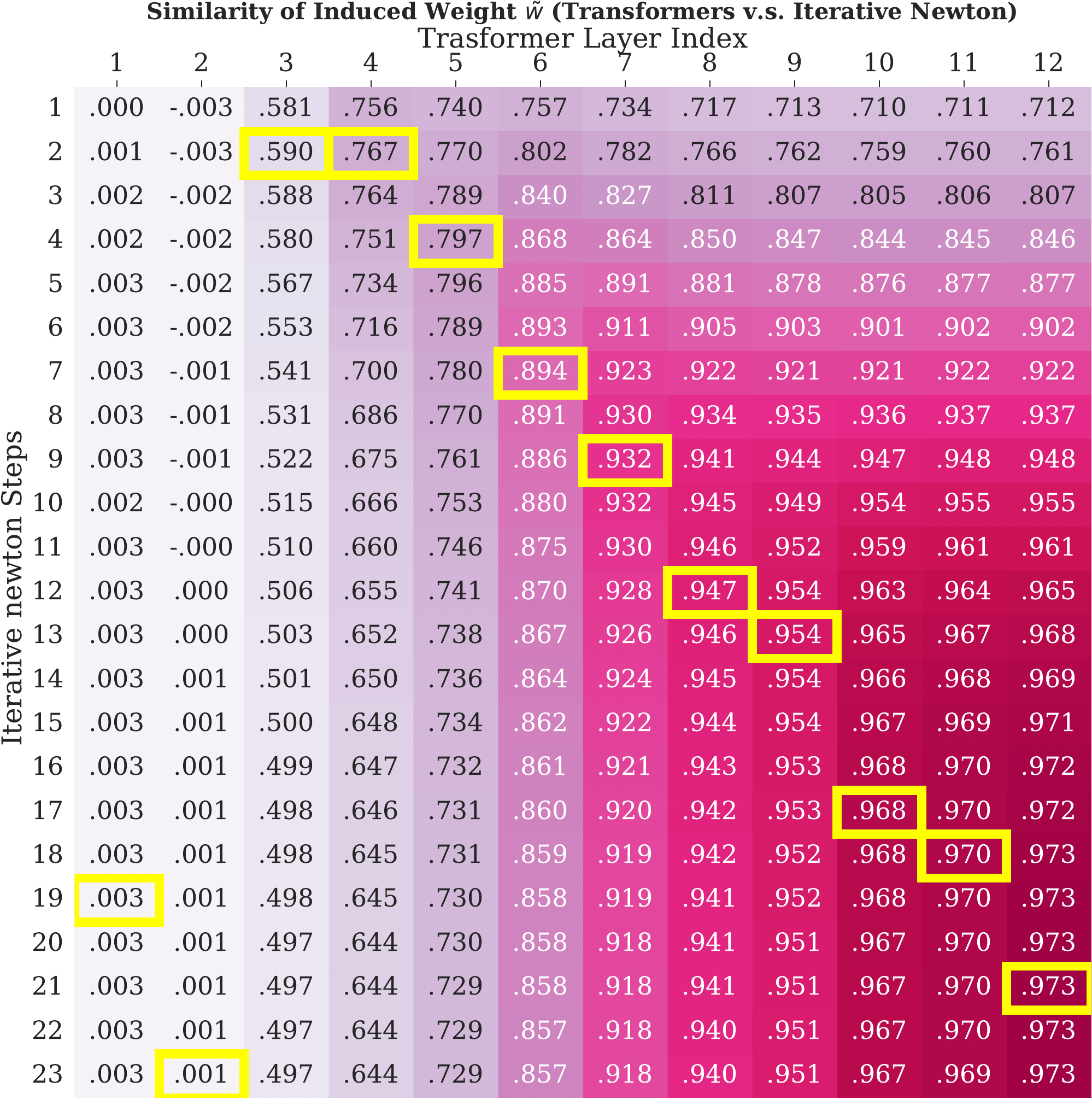}
\includegraphics[width=0.45\linewidth]{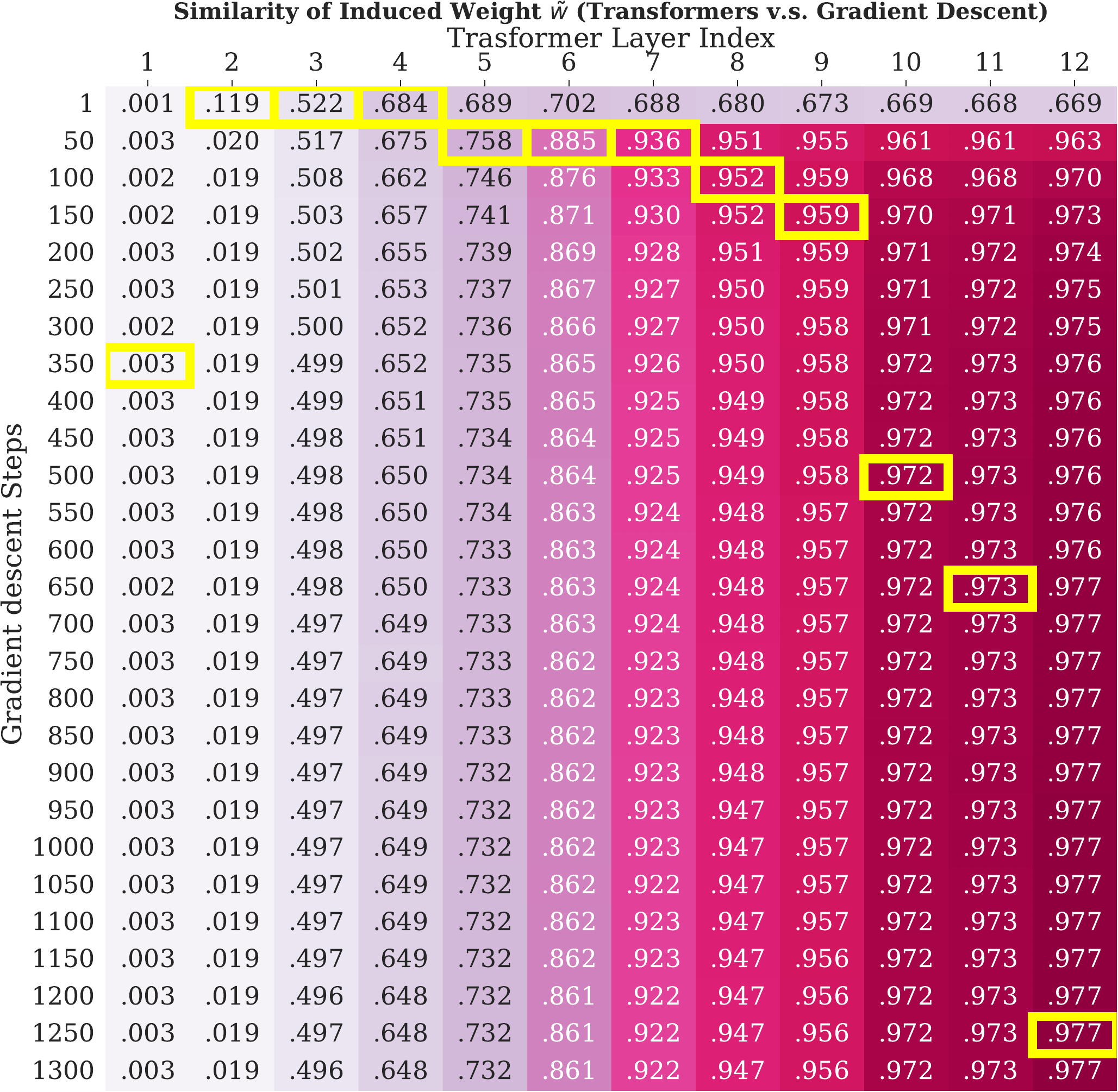}
\caption{\textbf{Similarity of Induced Weights on an alternative Transformers Configuration.} The best matching steps are highlighted in yellow.} \label{fig:12layer_1head_full_heatmap_sim_w}
\end{figure}

We conclude that our experimental results are not restricted to a specific model configurations, smaller models such as GPT-2 with 12 layers and 1 head each layer also suffice in implementing the \newton's method, and more similar than gradient descents, in terms of rate of convergence. 

\subsubsection{Experiments on Transformers with More Layers} \label{app:24-layer}
In this section, we investigate whether deeper models would behave similarly or differently. We work on Transformers with 24 layers and 8 heads each. 

\begin{figure}[!htp]
    \centering
\includegraphics[width=0.45\linewidth]{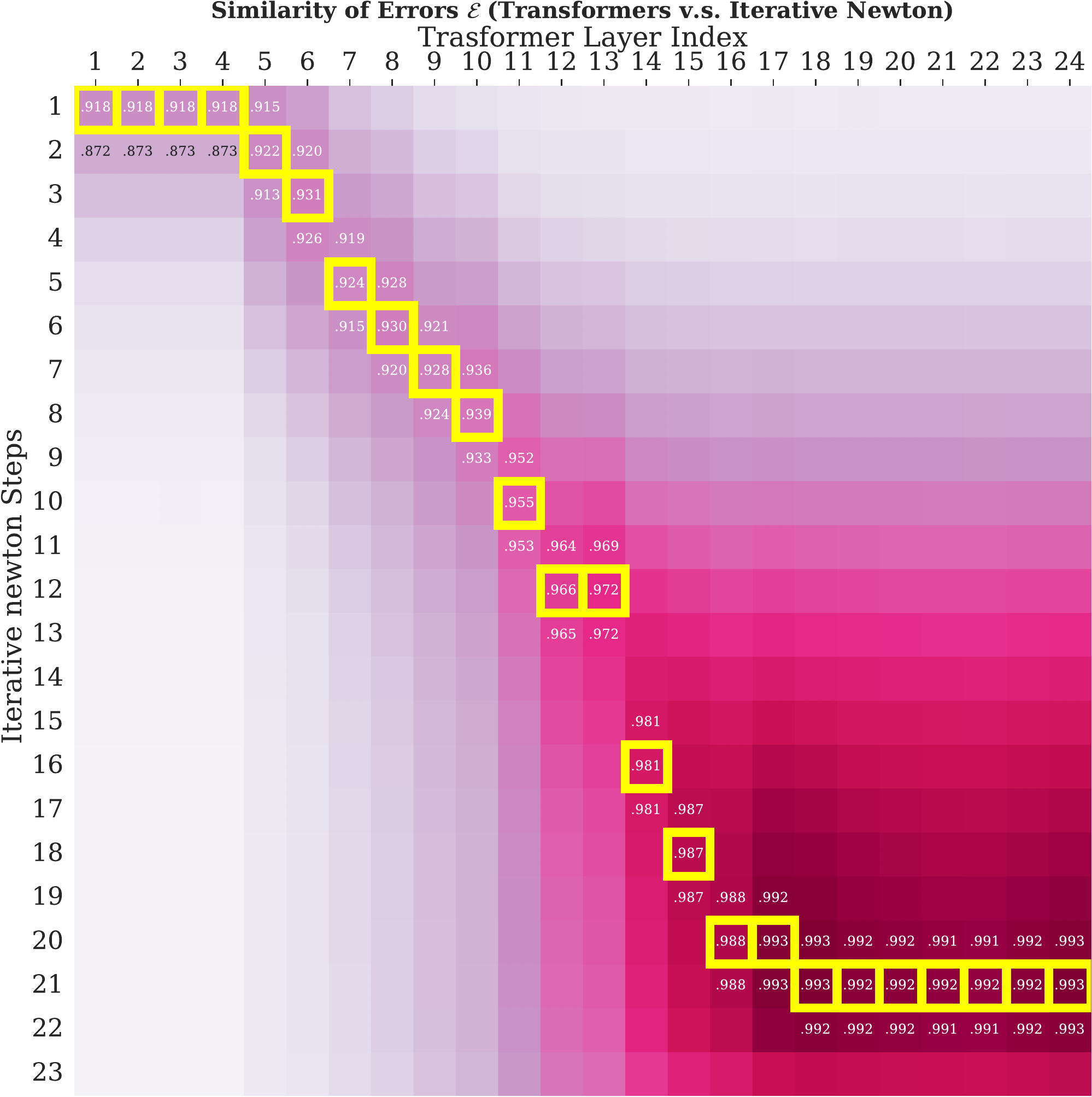}
\includegraphics[width=0.45\linewidth]{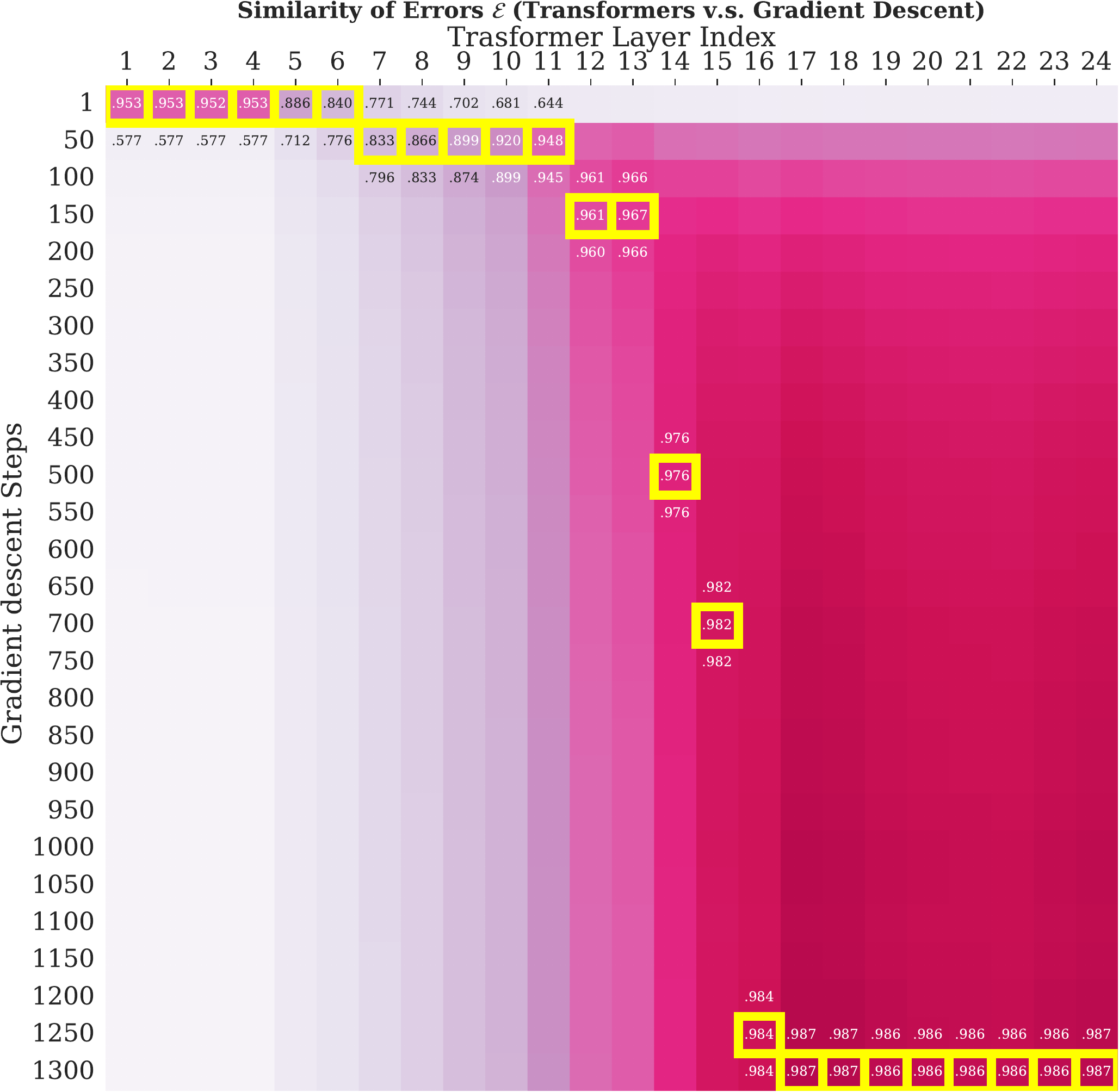}
\caption{\textbf{Similarity of Errors on a 24-layer Transformers Configuration.} The best matching steps are highlighted in yellow. } \label{fig:24layer_full_heatmap_sim_e}
\end{figure}

\begin{figure}[!htp]
    \centering
\includegraphics[width=0.45\linewidth]{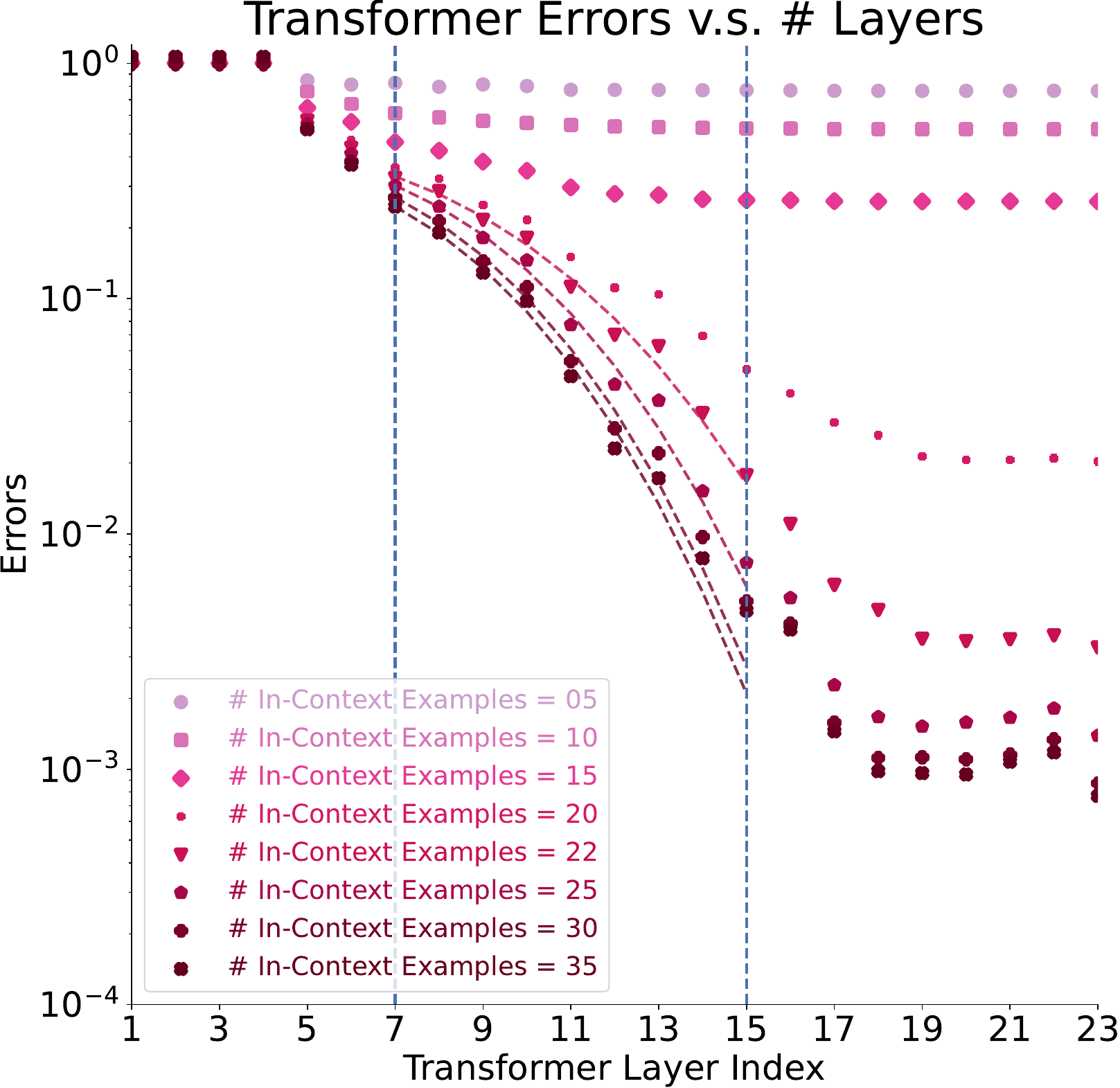}
    \caption{Transformers with 24 layers also converge superlinearly, similar to \newton.}
    \label{fig:convergence-24layer}
\end{figure}

\clearpage
\subsection{Heatmaps with Best-Matching Steps Help Compare Convergence Rates} \label{app:best-match}
{In this section, we show the heatmaps with best-matching steps among \textit{known algorithms.}}
\begin{figure}[!htp]
    \centering
    \subfigure[\newton\ v.s. \gd]{
    \includegraphics[width=0.45\linewidth]{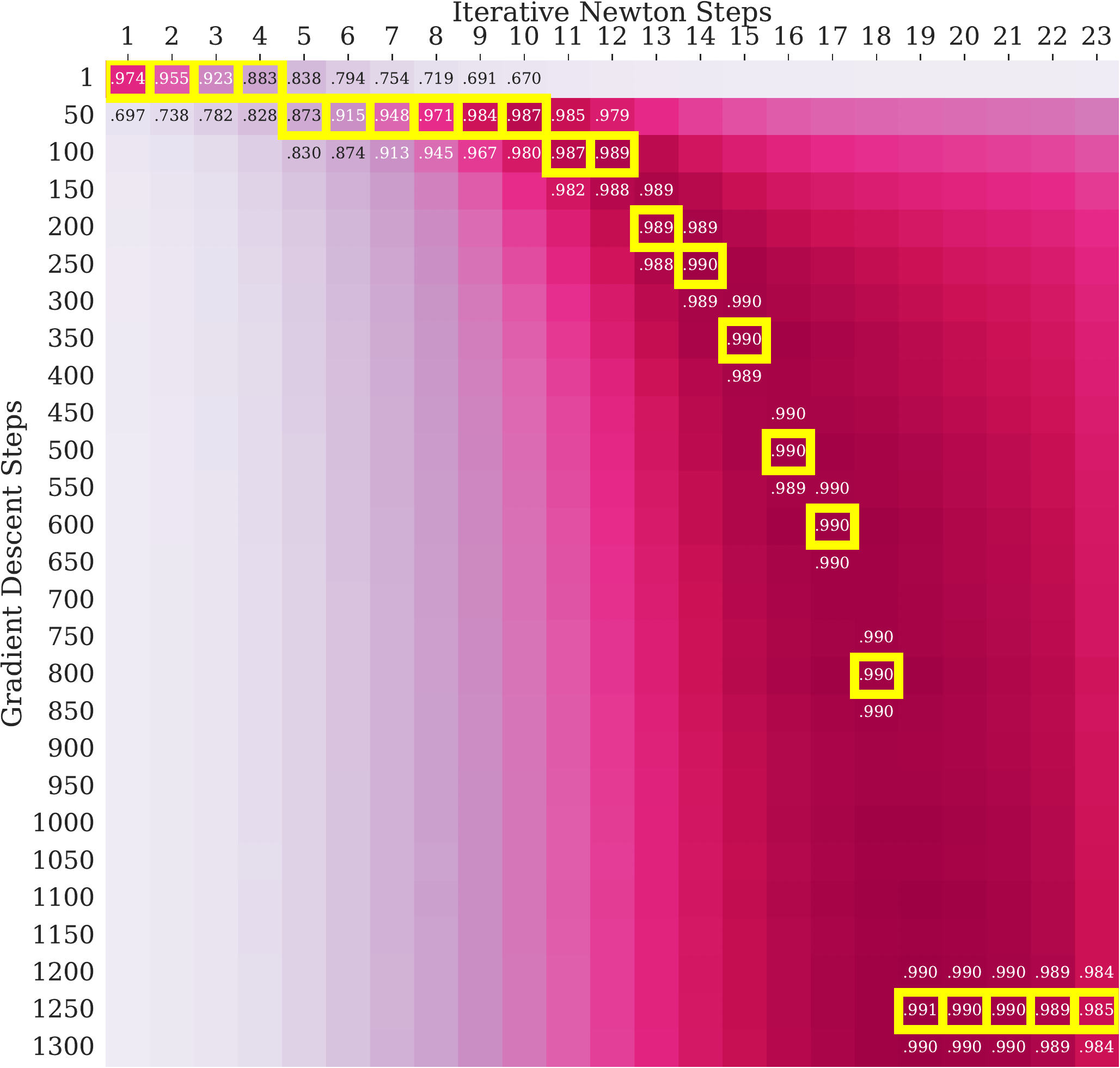}
    }
    \subfigure[\newton\ v.s. \newton]{
    \includegraphics[width=0.45\linewidth]{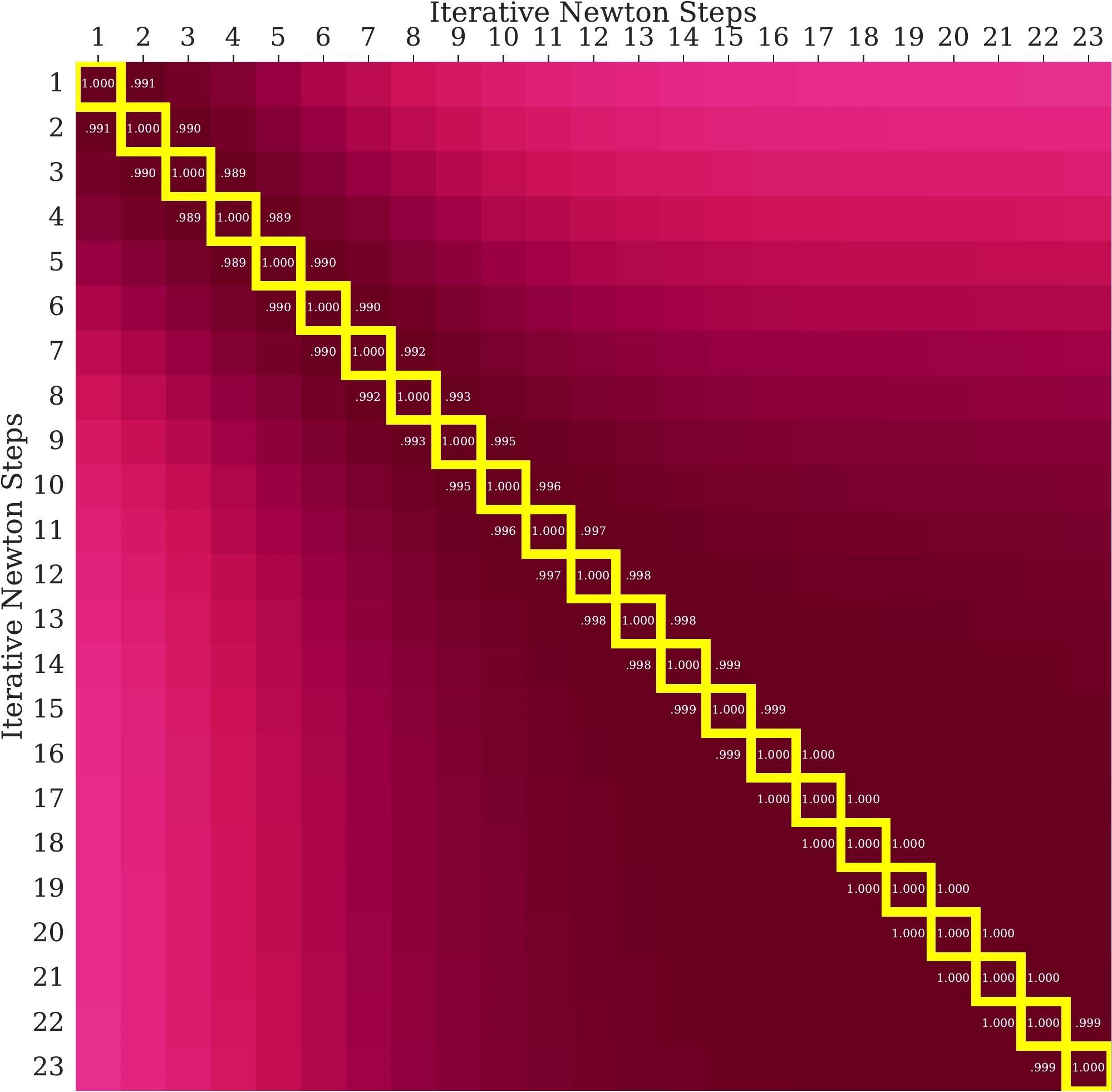}
    }
    \subfigure[\newton\ v.s. BFGS]{
    \includegraphics[width=0.45\linewidth]{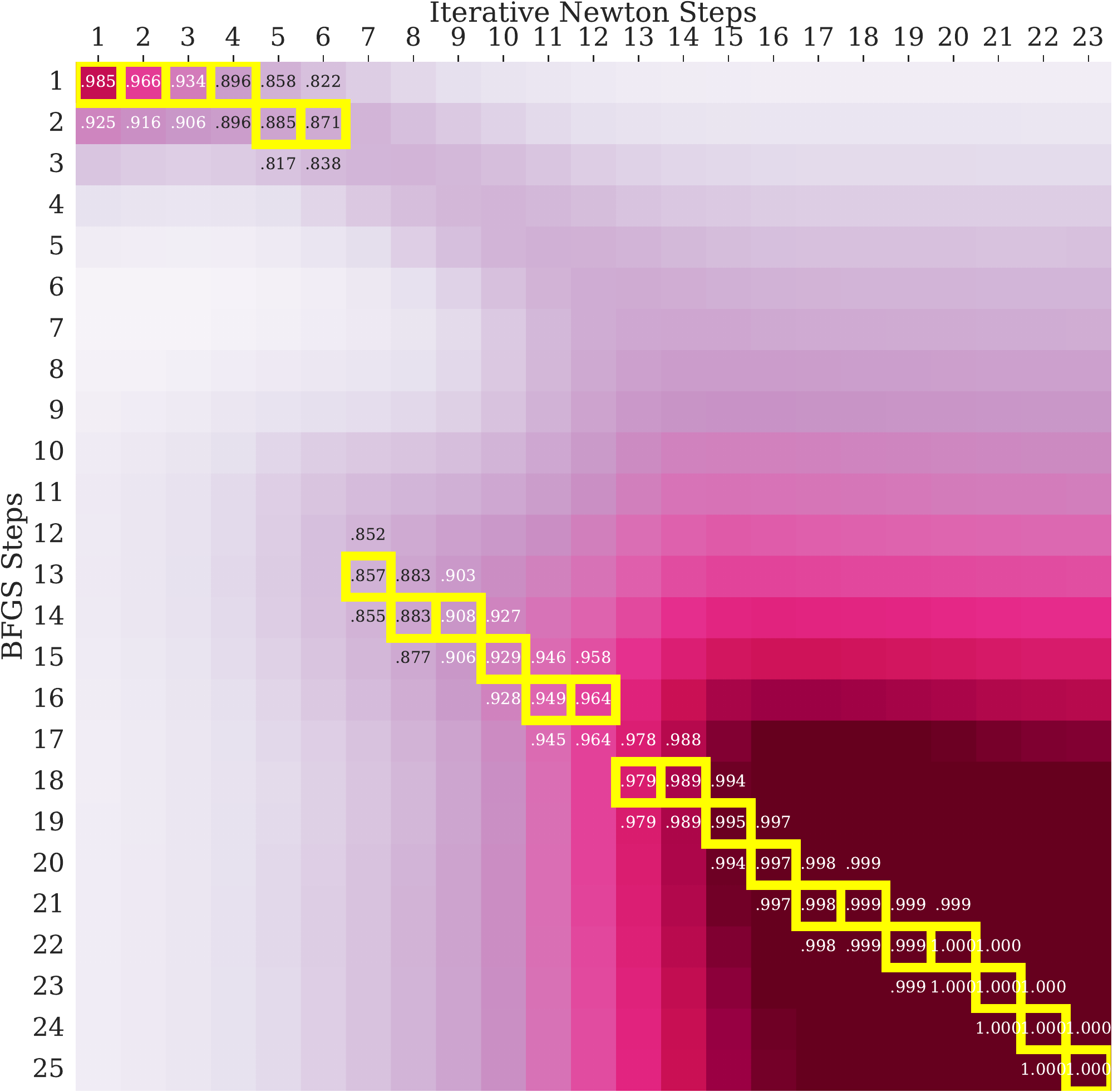}
    }
    \caption{Best-Matching Steps on Similarity of Residuals Help Compare Convergence Rates. (a: top-left) When comparing \newton\ and \gd, there is an exponential trend -- showing \newton\ converges exponentially faster than \gd. (b: top-right) When \newton\ is compared with itself in sub-figure, there is a linear trend -- showing they have the same convergence rate. (c: bottom) When \newton\ is compared to BFGS in sub-figure, there a linear trend after there are enough steps for BFGS to approximate second-order information -- showing \newton\ and BFGS share a similar convergence rate after sufficient BFGS steps.}
    \label{fig:best-matching}
\end{figure}

\subsection{Definitions for Evaluating Forgetting} \label{ssec:forgetting}
We measure the phenomenon of model forgetting by reusing an in-context example within $\{\bx_i, y_i\}_{i=1}^n$ as the test example $\bx_\mathrm{test}$. In experiments of \Cref{fig:lstm-experiment}, we fix $n = 20$ and reuse $\bx_\mathrm{test} = \bx_i$. We denote the ``Time Stamp Gap'' as the distance the reused example index $i$ from the current time stamp $n = 20$. We measure the forgetting of index $i$ as 
\begin{equation}
   \mathrm{Forgetting}(\calA, i) = \mathop{\mathbb E}_{\{\bx_i, y_i\}_{i=1}^n \sim P_{\mathcal{D}}} \mathrm{MSE}\Big(\mathcal A(\bx_i \mid \{\bx_i, y_i\}_{i=1}^n), y_i\Big)
\end{equation}
Note: the further away $i$ is from $n$, the more possible algorithm $\calA$ forgets.


\clearpage
\section{Detailed Proofs for  \Cref{sec:mechanism}} \label{app:mechanism}
In this section, we work on full attention layers with normalized ReLU activation $\sigma(\cdot) = \frac{1}{n} \mathrm{ReLU}(\cdot)$ given $n$ examples. 
\begin{definition}
\label{def:attn_full_relu}
    A full attention layer with $M$ heads and ReLU activation is also denoted as $\mathrm{Attn}$ on any input sequence $\bH = \begin{bmatrix}
        \bh_1, \cdots, \bh_N
    \end{bmatrix} \in \mathbb R^{D \times N}$, where $D$ is the dimension of hidden states and $N$ is the sequence length. In the vector form,
    \begin{equation}
         \tilde{\bh}_t = [\mathrm{Attn}(\bH)]_t = \bh_t +  \frac{1}{n} \sum_{m=1}^M \sum_{j=1}^n \mathrm{ReLU} \left(\inner{\bQ_m \bh_t, \bK_m \bh_j}\right) \cdot \bV_m \bh_j
    \end{equation}
\end{definition}
\begin{remark}
    This is slightly different from the \textbf{causal} attention layer (see \cref{def:attn}) in that at each time stamp $t$, the attention layer in \cref{def:attn_full_relu} has full information of all hidden states $j \in [n]$, unlike causal attention layer which requires $j \in [t]$. 
\end{remark}


\subsection{Helper Results}

We begin by constructing a useful component for our proof, and state some existing constructions from \citet{Akyrek2022WhatLA}.

\begin{restatable}{lemma}{LemmaSum} \label{lemma:sum}
    Given hidden states $\{\bh_1, \cdots, \bh_n\}$, there exists query, key and value matrices $\bQ, \bK, \bV$ respectively such that one attention layer can compute $\sum_{j=1}^n \bh_j$. 
\end{restatable}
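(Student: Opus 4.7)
The plan is to force every attention score to be the same positive constant, so that the attention layer collapses into a rescaled uniform aggregation of the values $\bV \bh_j$, and then to choose $\bV$ to hit $\sum_{j=1}^n \bh_j$ on the nose. A single head suffices for the aggregation, with a (very cheap) auxiliary head to deal with the residual stream.

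First I would use the standard convention (used throughout the enclosing Newton construction) that every $\bh_j$ has a designated coordinate, say the $p$-th, permanently equal to a fixed nonzero constant. Calling this direction $\be_p$, I would pick $\bQ, \bK$ so that $\bQ \bh_t = \be_p$ and $\bK \bh_j = \be_p$ for every $t, j$ --- explicitly by taking $\bQ^\top \bK = \be_p \be_p^\top$ and using that the $p$-th coordinate of every token is the fixed constant. Then $\langle \bQ \bh_t, \bK \bh_j \rangle$ is identically equal to a positive constant, so $\mathrm{ReLU}(\langle \bQ \bh_t, \bK \bh_j \rangle)$ is uniform and strictly positive; this uniformity is what lets us pull the values out of the attention sum.

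Second I would set $\bV = n \cdot \bP$, where $\bP$ is a projection that routes $\bh_j$ into a block of coordinates that are reserved as zero in every $\bh_t$. Combining the constant attention weight, the $1/n$ normalization in \Cref{def:attn_full_relu}, and the choice of $\bV$, the head's contribution in the reserved block is
\[
\frac{1}{n}\sum_{j=1}^n 1 \cdot n\, \bh_j \;=\; \sum_{j=1}^n \bh_j,
\]
which is exactly the target. Because the reserved coordinates are zero in $\bh_t$, the residual connection $+\bh_t$ adds nothing there, so the output $\tilde{\bh}_t$ stores the sum cleanly. If instead one wants the sum to appear in the same coordinates as the $\bh_j$'s, a second head with $\bV_2 = -\bI$ and a query/key that concentrates on $t$ itself (using a positional or identity-like feature) would cancel the residual.

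The main subtlety is the ReLU activation: unlike softmax, it can collapse an attention score to zero, so one cannot obtain a positive uniform weight from arbitrary hidden content. This forces the construction to rely on an always-on bias coordinate, and most of the actual ``work'' is bookkeeping of the hidden-state layout --- reserving a constant slot to read from via $\bQ, \bK$ and a zero slot to write into via $\bV$. The factor of $n$ inside $\bV$ is a benign dependency already acknowledged in the paper and can be absorbed or removed via the counting construction in Proposition~\ref{prop:akyrek}, but none of that is needed for the existence statement of the lemma.
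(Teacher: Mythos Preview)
Your proposal is correct and essentially identical to the paper's proof: both pad the hidden states with a constant-$1$ coordinate to force uniform positive ReLU attention scores, scale the value matrix by $n$ to cancel the $1/n$ normalization, and route the result into a reserved zero block so the residual connection does not interfere. The only cosmetic difference is that the paper writes the construction with two heads ($\mathrm{ReLU}(1)+\mathrm{ReLU}(-1)=1$), whereas you correctly note that a single head already suffices since the attention score is a fixed positive constant.
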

\begin{proof}
We can pad each hidden state by $1$ and $0$'s such that $\bh_t' \leftarrow \begin{bmatrix}
    \bh_t \\ 1 \\ \zero_d
\end{bmatrix} \in \mathbb R^{2d+1}$ . We construct two heads where $\bQ_1 = \bK_1 = \bQ_2 = \begin{bmatrix}
    \bO_{d \times d} & \bO_{d \times 1} & \bO_{d \times d}\\
    \bO_{1 \times d} & 1  & \bO_{1 \times d} \\
    \bO_{d \times d} & \bO_{d \times 1} & \bO_{d \times d}
\end{bmatrix}$ and $\bK_2 = -\bK_1$. Then $\begin{bmatrix}
    \bO_{d \times d} & \bO_{d \times 1} & \bO_{d \times d}\\
    \bO_{1 \times d} & 1  & \bO_{1 \times d} \\
    \bO_{d \times d} & \bO_{d \times 1} & \bO_{d \times d}
\end{bmatrix} \bh_t' = \begin{bmatrix}
    \zero_d \\ 1 \\ \zero_d
\end{bmatrix}$.

Let $\bV_1 = \bV_2 = \begin{bmatrix}
    \bO_{(d+1) \times d} & \bO_{(d+1) \times (d+1)} \\
    n\bI_{d \times d}  &  \bO_{d \times (d+1)}
\end{bmatrix}$ so that $\bV_m \begin{bmatrix}
    \bh_j \\ 1 \\ \zero_d
\end{bmatrix} = \begin{bmatrix}
    \zero_{d+1} \\ n \bh_j 
\end{bmatrix}$. 

We apply one attention layer to these 1-padded hidden states and we have 
\begin{equation}
\begin{aligned}
    \tilde{\bh}_t  & = \bh_t' + \frac{1}{n} \sum_{m=1}^2 \sum_{j=1}^n \mathrm{ReLU} \left(\inner{\bQ_m \bh_t', \bK_m \bh_j'}\right) \cdot \bV_m \bh_j' \\
    &= \bh_t' +  \frac{1}{n} \sum_{j=1}^n \Big[\mathrm{ReLU}(1) + \mathrm{ReLU}(-1)\Big] \cdot \begin{bmatrix}
    \zero_{d+1} \\ n \bh_j 
\end{bmatrix} \\
    &= \begin{bmatrix}
        \bh_t \\ 1 \\ \zero_d
    \end{bmatrix} + 
    \begin{bmatrix}
    \zero_{d+1} \\  \sum_{j=1}^n \bh_j 
\end{bmatrix} = \begin{bmatrix}
        \bh_t \\ 1 \\ \sum_{j=1}^n \bh_j
    \end{bmatrix}
\end{aligned}
\end{equation}  
\end{proof}


\begin{proposition}[\citealp{Akyrek2022WhatLA}]\label{prop:akyrek}
    Each of \verb|mov|, \verb|aff|, \verb|mul|, \verb|div|  can be implemented by a single transformer layer. These four operations are mappings $\mathbb R^{D \times N} \rightarrow \mathbb R^{D \times N}$, expressed as follows, 

\verb|mov|$(\bH; s, t, i, j, i', j')$: selects the entries of the $s$-th column of $\bH$ between rows $i$ and $j$, and copies
them into the $t$-th column ($t \geq s$) of $\bH$ between rows $i'$ and $j'$. 

\verb|mul|$(\bH; a, b, c, (i, j), (i', j'), (i'', j''))$: in each column $\bh$ of $\bH$, interprets the entries between $i$ and $j$ as an $a \times b$ matrix $\bA_1$, and the entries between $i'$ and $j'$ as a $b \times c$ matrix $\bA_2$, multiplies these matrices together, and stores the result between rows $i''$ and $j''$, yielding a matrix in which each column has the form 
$\begin{bmatrix}
    \bh_{:i''-1}, \bA_1 \bA_2, \bh_{j'':}
\end{bmatrix}^\top$. This allows the layer to implement inner products. 

\verb|div|$(\bH; (i, j), i', (i'', j''))$: in each column $\bh$ of $\bH$, divides the entries between $i$ and $j$ by the absolute value of the entry at $i'$, and stores the result between rows $i''$ and $j''$, yielding a matrix in which every column has the form $\begin{bmatrix}
    \bh_{:i''-1}, \bh_{i:j} / |\bh_{i'}|, \bh_{j'':}
\end{bmatrix}^\top$.

\verb|aff|$(\bH; (i, j), (i', j'), (i'', j''), \bW_1, \bW_2, \mathbf{b})$: in each column $\bh$ of $\bH$, applies an affine transformation to the entries between $i$ and $j$ and $i'$ and $j'$, then stores the result between rows $i''$ and $j''$, yielding a matrix in which every column has the form $\begin{bmatrix}
    \bh_{:i''-1}, \bW_1 \bh_{i:j} + \bW_2 \bh_{i':j'} + \mathbf{b}, \bh_{j'':}
\end{bmatrix}^\top$. This allows the layer to implement subtraction by setting $\bW_1 = \bI$ and $\bW_2 = -\bI$.

\end{proposition}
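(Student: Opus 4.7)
The plan is to give a constructive proof: I will exhibit explicit Transformer weights, layer by layer, that realize the Newton--Schulz recursion on $\bM_0 = \alpha \bS$. The central design choice is never to materialize any of the $d \times d$ matrices $\bM_j$ inside a single token; instead I will maintain the invariant that after $\ell$ of the $k$ Newton layers, each $\bx$-token carries the two blocks $[\bM_\ell \bx_t;\, \bx_t]$ (with $\bM_\ell$ symmetric), while the $y$-tokens are carried along untouched. This keeps the per-token hidden dimension $\mathcal O(d)$ even though each $\bM_\ell$ itself has $d^2$ entries. Throughout I will freely invoke the single-layer primitives $\mathrm{mov}, \mathrm{aff}, \mathrm{mul}, \mathrm{div}$ from \Cref{prop:akyrek} and the single-layer summation head from \Cref{lemma:sum} as black boxes.

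\textbf{Initialization (three layers).} A $\mathrm{mov}$ layer first duplicates $\bx_t$ so each $\bx$-column becomes $[\bx_t;\, \bx_t]$. A second attention layer uses two heads whose query/key projections produce $\pm \inner{\bx_t, \bx_j}$ and whose value matrices scale by $n\alpha$; exploiting the identity $\mathrm{ReLU}(u) - \mathrm{ReLU}(-u) = u$ with oppositely signed values across the two heads linearizes the ReLU, and the $1/n$ factor in the normalized attention cancels the $n$ in the value. After the skip connection the top block becomes $\bx_t + \alpha \bS \bx_t$, and a third $\mathrm{aff}$ layer subtracts the preserved copy of $\bx_t$ in the bottom block to leave $[\alpha \bS \bx_t;\, \bx_t] = [\bM_0 \bx_t;\, \bx_t]$.

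\textbf{Newton iteration (one layer each) and readout (five layers).} Suppose the invariant $[\bM_\ell \bx_t;\, \bx_t]$ holds with $\bM_\ell$ symmetric. I build a two-head attention layer in which the query reads $\bx_t$ from the bottom block, the key reads $\bM_\ell \bx_j$ from the top block of token $j$, and the value also reads $\bM_\ell \bx_j$, everything scaled by $-1/2$; sign-paired heads again linearize the ReLU, and the attention output becomes $-\frac{1}{2} \sum_j (\bx_t^\top \bM_\ell \bx_j)(\bM_\ell \bx_j) = -\frac{1}{2} \bM_\ell \bS \bM_\ell \bx_t$. Adding the skip connection gives $\bM_\ell \bx_t - \frac{1}{2}\bM_\ell \bS \bM_\ell \bx_t$, after which an MLP whose residual copies the top block doubles the expression to $2\bM_\ell \bx_t - \bM_\ell \bS \bM_\ell \bx_t = \bM_{\ell+1} \bx_t$, exactly one Newton update; symmetry of $\bM_{\ell+1}$ follows from the recursion. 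After the $k$ such Newton layers, I use $\mathrm{mov}$ to align $y_t$ with $\bM_k \bx_t$, $\mathrm{mul}$ to form $y_t \bM_k \bx_t$ in each column, then \Cref{lemma:sum} to aggregate $\sum_t y_t \bM_k \bx_t = \bM_k \bX^\top \by = \hat{\bw}_k^{\mathrm{Newton}}$ into the test-token state (which already contains $\bx_{\mathrm{test}}$), and finally $\mathrm{mul}$ to form the scalar $\inner{\hat{\bw}_k^{\mathrm{Newton}}, \bx_{\mathrm{test}}}$ that the linear read-out exposes. Counting two additional buffer layers for coordinate-slot alignment, the readout uses five layers; combined with the three initialization layers and $k$ Newton layers this gives total depth $k+8$, and all per-token states are $\mathcal O(d)$-dimensional.

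\textbf{Main obstacle.} The one genuinely nontrivial step is realizing the bilinear form $\bM_\ell \bS \bM_\ell \bx_t$ with a single ReLU self-attention layer at hidden dimension $\mathcal O(d)$. Attention naturally produces only a single outer-product factor per layer, so it is not a priori clear how two copies of $\bM_\ell$ can appear; the resolution is that because the invariant already carries $\bM_\ell \bx_t$ in the top block, both the "query side" and the "value side" of the attention can read $\bM_\ell$ off the hidden state itself rather than from weight matrices, so a single attention head implicitly performs two matrix multiplications. Pairing this with the ReLU-linearization trick (two sign-flipped heads) and the MLP doubling is where the detailed book-keeping lives; once that step is in place, the remainder of the construction is a routine composition of the primitives from \Cref{prop:akyrek} and \Cref{lemma:sum}.
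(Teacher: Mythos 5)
You were asked to prove Proposition \ref{prop:akyrek}: that each of the four primitives $\mathrm{mov}$, $\mathrm{aff}$, $\mathrm{mul}$, $\mathrm{div}$ is realizable by a single Transformer layer. Your proposal never addresses this statement. Instead you reconstruct the paper's Theorem \ref{thm:transformers_newton} (the $(k+8)$-layer Newton--Schulz construction), and you declare at the outset that you will ``freely invoke the single-layer primitives $\mathrm{mov}$, $\mathrm{aff}$, $\mathrm{mul}$, $\mathrm{div}$ from Proposition \ref{prop:akyrek} \ldots as black boxes.'' Invoking the proposition as a black box cannot constitute a proof of that proposition; relative to the stated goal the argument is circular. (For what it is worth, the Newton construction you sketch tracks the paper's own proof of Theorem \ref{thm:transformers_newton} closely --- the $[\bM_\ell \bx_t;\, \bx_t]$ invariant, the sign-paired ReLU heads exploiting $\mathrm{ReLU}(u)+\mathrm{ReLU}(-u)\cdot(-1)$-style linearization, the MLP doubling step, and the $3+k+5$ layer count all match --- but that is not the statement under review.)

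A proof of Proposition \ref{prop:akyrek} must exhibit, for each primitive separately, explicit attention and feed-forward weights and verify the claimed input--output behavior. Concretely: $\mathrm{aff}$ acts column-wise and so lives entirely in the MLP (zero the attention contribution and choose feed-forward weights realizing $\bW_1 \bh_{i:j} + \bW_2 \bh_{i':j'} + \mathbf{b}$ in rows $i''$ through $j''$); $\mathrm{mov}$ needs an attention head whose query--key score makes column $t$ attend to column $s$ (via positional content) and whose value matrix routes rows $i{:}j$ into rows $i'{:}j'$; $\mathrm{mul}$ and $\mathrm{div}$ require producing products and quotients of entries residing in the \emph{same} column, which is the genuinely nontrivial part --- one must show how the layer's nonlinearities represent or approximate these bilinear and rational maps. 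None of these constructions appear in your proposal. Note also that the paper itself does not prove this proposition; it imports it from \citet{Akyrek2022WhatLA}, so there is no internal proof to compare against --- but a blind attempt still needs to supply the constructions rather than assume them.
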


\subsection{Proof of Theorem~\ref{thm:transformers_newton}}
\TransformersNewton*
\begin{proof} We break the proof into parts.

\paragraph{Transformers Implement Initialization $\bT^{(0)}  = \alpha \bS$.} 
Given input sequence $\bH := \{\bx_1, \cdots, \bx_n\}$, with $\bx_i \in \mathbb R^d$,  we first apply the \verb|mov| operations given by Proposition~\ref{prop:akyrek} (similar to \cite{Akyrek2022WhatLA}, we show only non-zero rows when applying these operations):
\begin{equation}
    \begin{bmatrix}
        \bx_1 & \cdots & \bx_n \\
        & & 
    \end{bmatrix} \overset{\mathrm{mov}}{\longrightarrow} \begin{bmatrix}
        \bx_1 & \cdots & \bx_n  \\
        \bx_1 & \cdots & \bx_n 
    \end{bmatrix}
\end{equation}

We call each column after \verb|mov| as $\bh_j$. With an full attention layer, one can construct two heads with query and value matrices of the form $\bQ_1^\top \bK_1 = -\bQ_2^\top \bK_2 = \begin{bmatrix}
    \bI_{d \times d} & \bO_{d \times d} \\
    \bO_{d \times d} & \bO_{d \times d}
\end{bmatrix}$ such that for any $t \in [n]$, we have 
\begin{equation}
    \sum_{m=1}^2 \mathrm{ReLU} \left(\inner{\bQ_m \bh_t, \bK_m \bh_j}\right) = \mathrm{ReLU}(\bx_t^\top \bx_j) +  \mathrm{ReLU}(-\bx_t^\top \bx_j) = \inner{\bx_t, \bx_j}
\end{equation}
Let all value matrices $\bV_m =n \alpha \begin{bmatrix}
    \bI_{d \times d} & \bO_{d \times d} \\
    \bO_{d \times d} & \bO_{d \times d}
\end{bmatrix}$ for some $\alpha \in \mathbb R$. Combining the skip connections, we have 
\begin{equation}
    \tilde{\bh}_t = \begin{bmatrix}
        \bx_t \\ \bx_t
    \end{bmatrix} + \frac{1}{n} \sum_{j=1}^n \inner {\bx_t, \bx_j} n \alpha \begin{bmatrix}
        \bx_j \\ \zero 
    \end{bmatrix}  = \begin{bmatrix}
        \bx_t \\ \bx_t
    \end{bmatrix} + \begin{bmatrix}
       \alpha \left(\sum_{j=1}^n \bx_j \bx_j^\top \right) \bx_t \\ \zero
    \end{bmatrix} = \begin{bmatrix}
        \bx_t + \alpha \bS \bx_t \\ \bx_t
    \end{bmatrix} 
\end{equation}
Now we can use the \verb|aff| operator to make subtractions and then
\begin{equation}
    \begin{bmatrix}
        \bx_t + \alpha \bS \bx_t \\ \bx_t
    \end{bmatrix}  \overset{\mathrm{aff}}{\longrightarrow}\begin{bmatrix}
        (\bx_t + \alpha \bS \bx_t) - \bx_t  \\ \bx_t
    \end{bmatrix}  = \begin{bmatrix}
         \alpha \bS \bx_t \\ \bx_t
    \end{bmatrix}
\end{equation}

We call this transformed hidden states as $\bH^{(0)}$ and denote $\bT^{(0)} = \alpha \bS$:
\begin{equation}
        \bH^{(0)} = \begin{bmatrix}
            \bh_1^{(0)} & \cdots & \bh_n^{(0)}
        \end{bmatrix} = \begin{bmatrix}
            \bT^{(0)} \bx_1 & \cdots & \bT^{(0)} \bx_n \\
            \bx_1 & \cdots & \bx_n
        \end{bmatrix} 
\label{eqn:input_prompt}
\end{equation}
Notice that $\bS$ is symmetric and thereafter $\bT^{(0)}$ is also symmetric.

\paragraph{Transformers implement Newton Iteration.} 
Let the input prompt be the same as \Cref{eqn:input_prompt}, 
    \begin{equation}
        \bH^{(0)} = \begin{bmatrix}
            \bh_1^{(0)} & \cdots & \bh_n^{(0)}
        \end{bmatrix} = \begin{bmatrix}
            \bT^{(0)} \bx_1 & \cdots & \bT^{(0)} \bx_n \\
            \bx_1 & \cdots & \bx_n
        \end{bmatrix} 
    \end{equation}
    We claim that the $\ell$'s hidden states can be of the similar form 
    \begin{equation}
        \bH^{(\ell)} = \begin{bmatrix}
            \bh_1^{(\ell)} & \cdots & \bh_n^{(\ell)}
        \end{bmatrix} = \begin{bmatrix}
            \bT^{(\ell)} \bx_1 & \cdots & \bT^{(\ell)} \bx_n \\
            \bx_1 & \cdots & \bx_n
        \end{bmatrix} 
    \end{equation}
    We prove by induction that assuming our claim is true for $\ell$, we work on $\ell + 1$:

    Let $\bQ_m = \tilde{\bQ}_m \underbrace{\begin{bmatrix}
        \bO_d & -\frac{n}{2} \bI_d \\
        \bO_d & \bO_d 
    \end{bmatrix}}_{\bG} , \bK_m = \tilde{\bK}_m \underbrace{\begin{bmatrix}
        \bI_d & \bO_d \\
        \bO_d & \bO_d 
    \end{bmatrix}}_{\bJ}$ where $\tilde{\bQ}_1^\top \tilde{\bK}_1 : = \bI$, $\tilde{\bQ}_2^\top \tilde{\bK}_2 : = -\bI$ and $\bV_1 = \bV_2 = \underbrace{\begin{bmatrix}
        \bI_d & \bO_d \\
        \bO_d & \bO_d 
    \end{bmatrix}}_{\bJ}$. A 2-head self-attention layer, with ReLU attentions, can be written has 
    \begin{equation}
        \bh_t^{(\ell+1)} = [\mathrm{Attn}(\bH^{(\ell)})]_t = \bh_t^{(\ell)}+ \frac{1}{n} \sum_{m=1}^2 \sum_{j=1}^n \mathrm{ReLU} \left(\inner{\bQ_m \bh_t^{(\ell)}, \bK_m \bh_j^{(\ell)}}\right) \cdot \bV_m \bh_j^{(\ell)}
    \end{equation}
    where 
    \begin{equation}
    \begin{aligned}
        & \sum_{m=1}^2\mathrm{ReLU} \left(\inner{\bQ_m \bh_t^{(\ell)}, \bK_m \bh_j^{(\ell)}}\right) \cdot \bV_m \bh_j^{(\ell)} \\
        &=  \Big[\mathrm{ReLU} \Big((\bG \bh_t^{(\ell)})^\top \underbrace{\tilde{\bQ}_1^\top \tilde{\bK}_1}_{\bI} (\bJ \bh_j^{(\ell)})\Big)  + \mathrm{ReLU} \Big((\bG \bh_t^{(\ell)})^\top \underbrace{\tilde{\bQ}_2^\top \tilde{\bK}_2}_{-\bI} (\bJ \bh_j^{(\ell)})\Big)  \Big] \cdot (\bJ \bh_j^{(\ell)})\\
        &=  \Big[\mathrm{ReLU} ((\bG \bh_t^{(\ell)})^\top (\bJ \bh_j^{(\ell)})) + \mathrm{ReLU} (-(\bG \bh_t^{(\ell)})^\top  (\bJ \bh_j^{(\ell)})) \Big] \cdot  (\bJ \bh_j^{(\ell)})  \\
        &= (\bG \bh_t^{(\ell)})^\top (\bJ \bh_j^{(\ell)}) (\bJ \bh_j^{(\ell)}) \\
        &=  (\bJ \bh_j^{(\ell)}) (\bJ \bh_j^{(\ell)})^\top (\bG \bh_t^{(\ell)})
    \end{aligned} 
    \end{equation}
    Plug in our assumptions that $\bh_j^{(\ell)} = \begin{bmatrix}
        \bT^{(\ell)} \bx_j \\ \bx_j 
    \end{bmatrix}$, we have $\bJ \bh_j^{(\ell)} = \begin{bmatrix}
        \bT^{(\ell)} \bx_j \\ \zero_d 
    \end{bmatrix}$ and $\bG \bh_t^{(\ell)} = \begin{bmatrix}
       -\frac{n}{2} \bx_t \\ \zero_d 
    \end{bmatrix}$, 
    we have 
    \begin{equation}\label{eqn:h_update}
        \begin{aligned}
\bh_t^{(\ell+1)} &= \begin{bmatrix}
        \bT^{(\ell)} \bx_t \\ \bx_t 
    \end{bmatrix} + \frac{1}{n} \sum_{j=1}^n \begin{bmatrix}
        \bT^{(\ell)} \bx_j \\ \zero_d 
    \end{bmatrix} \begin{bmatrix}
        \bT^{(\ell)} \bx_j \\ \zero_d 
    \end{bmatrix}^\top  \begin{bmatrix}
         -\frac{n}{2} \bx_t \\ \zero_d 
    \end{bmatrix} \\
    &= \begin{bmatrix}
        \bT^{(\ell)} \bx_t - \frac{1}{2} \sum_{j=1}^n  (\bT^{(\ell)} \bx_j) (\bT^{(\ell)} \bx_j)^\top  \bx_t \\ \bx_t  
    \end{bmatrix} \\
    &= \begin{bmatrix}
        \bT^{(\ell)} \bx_t - \frac{1}{2} \bT^{(\ell)} \left(\sum_{j=1}^n \bx_j  \bx_j^\top\right)  {\bT^{(\ell)}}^\top \bx_t \\ \bx_t  
    \end{bmatrix} \\
    &= \begin{bmatrix}
        \left(\bT^{(\ell)} - \frac{1}{2} \bT^{(\ell)} \bS {\bT^{(\ell)}}^\top \right) \bx_t \\ \bx_t 
    \end{bmatrix}
        \end{aligned}
    \end{equation} 
    Now we pass over an MLP layer with
    \begin{equation}
        \bh_t^{(\ell+1)} \leftarrow \bh_t^{(\ell+1)} + \begin{bmatrix}
            \bI_d & \bO_d \\ 
            \bO_d & \bO_d 
        \end{bmatrix} \bh_t^{(\ell+1)} = \begin{bmatrix}
        \left(2\bT^{(\ell)} -  \bT^{(\ell)} \bS {\bT^{(\ell)}}^\top \right) \bx_t \\ \bx_t 
    \end{bmatrix}
    \end{equation}
    Now we denote the iteration
    \begin{equation}
        \bT^{(\ell+1)} = 2\bT^{(\ell)} -  \bT^{(\ell)} \bS {\bT^{(\ell)}}^\top
    \end{equation}
    We find that ${\bT^{(\ell+1)}}^\top =  \bT^{(\ell+1)}$ since $ \bT^{(\ell)}$ and $\bS$ are both symmetric. It reduces to 
    \begin{equation}
        \bT^{(\ell+1)} = 2\bT^{(\ell)} -  \bT^{(\ell)} \bS {\bT^{(\ell)}}
    \end{equation}
    This is exactly the same as the Newton iteration.

\paragraph{Transformers can implement $\hat{\bw}_\ell^\mathrm{TF} = \bT^{(\ell)} \bX^\top \by$.}
Going back to the empirical prompt format $\{\bx_1, y_1, \cdots, \bx_n, y_n\}$. We can let parameters be zero for positions of $y$'s and only rely on the skip connection up to layer $\ell$, and the $\bH^{(\ell)}$ is then $\begin{bmatrix}
           \bT^{(\ell)} \bx_j & \zero  \\
            \bx_j &  \zero \\
            0 & y_j
        \end{bmatrix}_{j=1}^n $. We again apply operations from Proposition~\ref{prop:akyrek}:
\begin{equation}
    \begin{bmatrix}
           \bT^{(\ell)} \bx_j & \zero  \\
            \bx_j &  \zero \\
            0 & y_j
        \end{bmatrix}_{j=1}^n \overset{\mathrm{mov}}{\longrightarrow} \begin{bmatrix}
           \bT^{(\ell)} \bx_j & \bT^{(\ell)} \bx_j  \\
            \bx_j & \zero \\
            0 & y_j
        \end{bmatrix}_{j=1}^n \overset{\mathrm{mul}}{\longrightarrow} \begin{bmatrix}
            \bT^{(\ell)} \bx_j & \bT^{(\ell)}\bx_j  \\
             \bx_j & \zero \\
             0 & y_j \\
        \zero & \bT^{(\ell)} y_j \bx_j 
        \end{bmatrix}_{j=1}^n 
\label{eqn:Tyjxj}
\end{equation}
Now we apply \Cref{lemma:sum} over all even columns in \Cref{eqn:Tyjxj} and we have 
\begin{equation}
    \mathrm{Output} = \sum_{j=1}^n \begin{bmatrix}
         \bT^{(\ell)}\bx_j \\ \zero \\ y_j \\  \bT^{(\ell)} y_j \bx_j 
    \end{bmatrix} = \begin{bmatrix}
        \bxi \\ \bT^{(\ell)} \sum_{j=1}^n y_j \bx_j 
    \end{bmatrix} = \begin{bmatrix}
        \bxi \\ \bT^{(\ell)} \bX^\top \by 
    \end{bmatrix}
\end{equation}
where $\bxi$ denotes irrelevant quantities. Note that the resulting $\bT^{(\ell)} \bX^\top \by $ is also the same as \newton's predictor $\hat{\bw}_k = \bM_k \bX^\top \by$ after $k$ iterations. We denote $\hat{\bw}_\ell^\mathrm{TF} = \bT^{(\ell)} \bX^\top \by$.

\paragraph{Transformers can make predictions on $\bx_{test}$ by $\inner{\hat{\bw}_\ell^\mathrm{TF}, \bx_\mathrm{test}}$.} 

Now we can make predictions on text query $\bx_\mathrm{test}$:
\begin{equation}
    \begin{bmatrix}
        \bxi &  \bx_\mathrm{test} \\
        \hat{\bw}_\ell^\mathrm{TF} & \bx_\mathrm{test}
    \end{bmatrix} \overset{\mathrm{mov}}{\longrightarrow} \begin{bmatrix}
        \bxi &  \bx_\mathrm{test} \\
        \hat{\bw}_\ell^\mathrm{TF} & \bx_\mathrm{test} \\
        \zero & \hat{\bw}_\ell^\mathrm{TF}
    \end{bmatrix} \overset{\mathrm{mul}}{\longrightarrow} \begin{bmatrix}
        \bxi &  \bx_\mathrm{test} \\
        \hat{\bw}_\ell^\mathrm{TF} & \bx_\mathrm{test} \\
        \zero & \hat{\bw}_\ell^\mathrm{TF} \\
        0 & \inner{\hat{\bw}_\ell^\mathrm{TF}, \bx_\mathrm{test}}
    \end{bmatrix}
\end{equation}

Finally, we can have an readout layer $\bbeta_\mathrm{ReadOut} = \{\bu, v\}$ applied (see \cref{def:readout}) with $\bu = \begin{bmatrix}
    \zero_{3d} & 1 
\end{bmatrix}^\top$ and $v = 0$ to extract the prediction $ \inner{\hat{\bw}_\ell^\mathrm{TF}, \bx_\mathrm{test}}$ at the last location, given by $\bx_\mathrm{test}$. This is exactly how \newton\ makes predictions. 

\paragraph{To Perform $k$ steps of Newton's iterations, Transformers need $\mathcal O(k)$ layers.}

Let's count the layers:
\begin{itemize}
    \item \textbf{Initialization}: \verb|mov| needs $\mathcal O(1)$ layer; gathering $\alpha \bS$ needs $\mathcal O(1)$ layer; and \verb|aff| needs $\mathcal O(1)$ layer. In total, Transformers need $\mathcal O(1)$ layers for initialization. 
    \item \textbf{Newton Iteration}: each exact Newton's iteration requires $\mathcal O(1)$ layer. Implementing $k$ iterations requires $\mathcal O(k)$ layers.
    \item \textbf{Implementing $\hat{\bw}_\ell^\mathrm{TF}$}: We need one operation of \verb|mov| and \verb|mul| each, requiring $\mathcal O(1)$ layer each. Apply \Cref{lemma:sum} for summation also requires $\mathcal O(1)$ layer. 
    \item \textbf{Making prediction on test query}:  We need one operation of \verb|mov| and \verb|mul| each, requiring $\mathcal O(1)$ layer each.
\end{itemize}
Hence, in total, Transformers can implement $k$-step \newton\ and make predictions accordingly using $\mathcal{O}(k)$ layers. 
\end{proof}

{
\begin{remark}
We note that \cite{pmlr-v202-giannou23a} used 13 layers to compute one Newton Iteration, and in our construction, we need only one Transformer layer (with one attention layer and one MLP layer) to compute one Newton Iteration. At the same time, we didn’t use \cite{Akyrek2022WhatLA} for constructing Newton Iterations. \cite{Akyrek2022WhatLA} is applied to initialize Newton and for reading out the prediction. 

In our construction, only the initialization and read-out prediction components use causal attention and softmax because \cite{Akyrek2022WhatLA}'s construction is applied. To be more specific, those are the first 3 layers in initializing Iterative Newton and the last 5 layers in reading out the predictions from the computed pseudo-inverse. All the layers corresponding to the Iterative Newton updates are using full attention and normalized ReLU activations.
\end{remark}

}

{
\begin{remark}
\label{rmk:causal}
We note that our proof can be extended to causal attention for $n$ sufficiently larger than $d$. Under causal attention (see \Cref{def:attn}) with normalized ReLU activation, \Cref{eqn:h_update} can be rewritten as follows, given $t > d$, we first choose $\bG = \begin{bmatrix}
        \bO_d & -\frac{1}{2} \bI_d \\
        \bO_d & \bO_d 
    \end{bmatrix}$, where the coefficient on the upper right block is $-\frac{1}{2}$ instead of $-\frac{n}{2}$ originally. Then
\begin{equation}
    \begin{aligned}
        \bh_t^{(\ell+1)} &= \begin{bmatrix}
        \bT^{(\ell)} \bx_t \\ \bx_t 
    \end{bmatrix} + \frac{1}{t} \sum_{j=1}^t \begin{bmatrix}
        \bT^{(\ell)} \bx_j \\ \zero_d 
    \end{bmatrix} \begin{bmatrix}
        \bT^{(\ell)} \bx_j \\ \zero_d 
    \end{bmatrix}^\top  \begin{bmatrix}
         -\frac{1}{2} \bx_t \\ \zero_d 
    \end{bmatrix} \\
    &= \begin{bmatrix}
        \bT^{(\ell)} \bx_t - \frac{1}{2}~ \frac{1}{t}\sum_{j=1}^t  (\bT^{(\ell)} \bx_j) (\bT^{(\ell)} \bx_j)^\top  \bx_t \\ \bx_t  
    \end{bmatrix} \\
    &= \begin{bmatrix}
        \bT^{(\ell)} \bx_t - \frac{1}{2} \bT^{(\ell)} \left(\frac{1}{t}\sum_{j=1}^t \bx_j  \bx_j^\top\right)  {\bT^{(\ell)}}^\top \bx_t \\ \bx_t  
    \end{bmatrix} \\
    &= \begin{bmatrix}
        \left(\bT^{(\ell)} - \frac{1}{2} \bT^{(\ell)} \hat{\bSigma} {\bT^{(\ell)}}^\top \right) \bx_t \\ \bx_t 
    \end{bmatrix}
    \end{aligned}
\end{equation}
where $\hat{\bSigma} = \frac{1}{t}\sum_{j=1}^t \bx_j  \bx_j^\top$ is the estimate of the covariance matrix given seen in-context examples $\{\bx_j, y_j\}_{j=1}^t$ so far. Since $t > d$, $\hat{\bSigma}$ is an unbiased estimate for $\bSigma \approx \frac{1}{n} \bS$ if $n$ is sufficiently large. The rest of the proof follows similarly, up to the perturbation introduced by the error in the estimate of $\hat{\bSigma}$. 

We also note when $t < d$, the estimate  $\hat{\bSigma} = \frac{1}{t}\sum_{j=1}^t \bx_j  \bx_j^\top$ is no longer a valid covariance matrix since it's singular. Then this gives different $\bT^{(\ell+1)}$ for different time stamp $t < d$ and such error may propagate in our proof. Hence, a formal extension to causal models requires extensive analysis of the error bounds and it is beyond the scope of this work. Nonetheless, we provide a plausible direction of such an extension. 
    
\end{remark}
}

\subsection{\newton\ as a Sum of Moments Method}\label{sec:newton_moment}
Recall that \newton's method finds $\bS^\dagger$ as follows 
\begin{equation}
    \bM_0 = \underbrace{\frac{2}{\|\bS\bS^\top\|_2}}_{\alpha} ~ \bS^\top, \qquad \bM_k = 2 \bM_{k-1} - \bM_{k-1} \bS \bM_{k-1}, \forall k \geq 1.
\end{equation}

We can expand the iterative equation to moments of $\bS$ as follows. 

\begin{equation}
    \begin{aligned}
        \bM_1 &= 2 \bM_0 - \bM_0 \bS \bM_0  = 2 \alpha \bS^\top - 4 \alpha^2 \bS^\top \bS \bS^\top = 2 \alpha \bS - 4 \alpha^2 \bS^3 .
    \end{aligned}
\end{equation}

Let's do this one more time for $\bM_2$.
\begin{equation}
    \begin{aligned}
        \bM_2 &= 2 \bM_{1} - \bM_{1} \bS \bM_{1} = 2 (2 \alpha \bS - 4 \alpha^2 \bS^3) - (2 \alpha \bS - 4 \alpha^2 \bS^3) \bS (2 \alpha \bS - 4 \alpha^2 \bS^3) \\
        &= 4 \alpha \bS - 8 \alpha^2 \bS^3 - 4 \alpha^2 \bS^3 + 16 \alpha^3 \bS^5 - 16\alpha^4 \bS^7 \\
        &= 4 \alpha \bS - 12 \alpha^2 \bS^3  + 16 \alpha^3 \bS^5 - 16\alpha^4 \bS^7.
    \end{aligned}
\end{equation}

We can see that $\bM_k$ are  summations of moments of $\bS$, with respect to some pre-defined coefficients from the Newton's algorithm. Hence \newton\ is a special of an algorithm which computes an approximation of the inverse using second-order moments of the matrix, 
\begin{equation}
    \bM_k = \sum_{s=1}^{2^{k+1} - 1} \beta_s \bS^s
\label{eqn:som}
\end{equation}
with coefficients $\beta_s \in \mathbb R$.

We note that Transformer circuits can represent other sum of moments other than Newton's method. We can introduce different coefficients $\beta_i$ than in the proof of Theorem \ref{thm:transformers_newton} by scaling the value matrices or through the MLP layers.

\subsection{Estimated weight vectors lie in the span of previous examples} \label{app:span}

What properties can we infer and verify for the weight vectors which arise from Newton's method? A straightforward one arises from interpreting any sum of moments method as a kernel method. 

We can expand $\bS^s$ as follows
\begin{equation}
    \begin{aligned}
        \bS^s &= \left(\sum_{i=1}^t \bx_i \bx_i^\top\right)^s = \sum_{i=1}^t \left(\sum_{{j_1}, \cdots, {j_{s-1}}} \inner{\bx_i, \bx_{j_1}} \prod_{v=1}^{s-2} \inner{\bx_{j_{v}}, \bx_{j_{v+1}}}\right) \bx_i \bx_{j_{s-1}}^\top. 
    \end{aligned}
\end{equation}
Then we have 
\begin{equation}
    \begin{aligned}
        \hat{\bw}_t &=\bM_t \bX^\top \by  = \sum_{s=1}^{2^{t+1} - 1} \beta_s \bS^s \bX^\top \by  \\
        &=  \sum_{s=1}^{2^{t+1} - 1} \beta_s \left\{\sum_{i=1}^t \left(\sum_{{j_1}, \cdots, {j_{s-1}}} \inner{\bx_i, \bx_{j_1}} \prod_{v=1}^{s-2} \inner{\bx_{j_{v}}, \bx_{j_{v+1}}}\right) \bx_i \bx_{j_{s-1}}^\top \right\} \left\{\sum_{i=1}^t y_i \bx_i \right\} \\
        &= \sum_{s=1}^{2^{t+1} - 1} \beta_s \left(\sum_{i=1}^t \left(\sum_{{j_1}, \cdots, {j_{s}}} y_{j_s} \inner{\bx_i, \bx_{j_1}} \prod_{v=1}^{s-1} \inner{\bx_{j_{v}}, \bx_{j_{v+1}}}\right) \bx_i \right) \\
        &= \sum_{i=1}^t \underbrace{\left(\sum_{s=1}^{2^{t+1} - 1} \sum_{{j_1}, \cdots, {j_{s}}}  \beta_s y_{j_s} \inner{\bx_i, \bx_{j_1}} \prod_{v=1}^{s-1} \inner{\bx_{j_{v}}, \bx_{j_{v+1}}} \right)}_{\phi_t(i \mid \bX, \by, \bbeta)} \bx_i \\
        &= \sum_{i=1}^t \phi_t(i \mid \bX, \by, \bbeta) ~ \bx_i
    \end{aligned}
\end{equation}
where $\bX$ is  the data matrix, $\bbeta$  are coefficients of moments given by the sum of moments method and $\phi_t(\cdot)$ is some function which assigns some weight to the $i$-th datapoint, based on all other datapoints.
Therefore if the Transformer implements a sum of moments method (such as Newton's method), then its induced weight vector $\tilde{\bw}_t(\mathrm{Transformers} \mid \{\bx_i, y_i\}_{i=1}^t)$ after seeing in-context examples $\{\bx_i, y_i\}_{i=1}^t$  should lie in the span of the examples $\{\bx_i\}_{i=1}^t$:
\begin{equation}
    \tilde{\bw}_t(\mathrm{Transformers} \mid \{\bx_i, y_i\}_{i=1}^t) \stackrel{?}{=} \mathrm{Span}\{\bx_1, \cdots, \bx_t\} = \sum_{t=1}^t a_i \bx_i \qquad \textrm{for coefficients $a_i$}.
    \label{eqn:linearity}
\end{equation}
We test this hypothesis. Given a sequence of in-context examples $\{\bx_i, y_i\}_{i=1}^t$, we fit  coefficients
$\{a_i\}_{i=1}^t$ in \Cref{eqn:linearity} to minimize MSE loss:
\begin{equation}
   \{\hat{a}_i\}_{i=1}^t  = \argmin_{a_1, a_2, \cdots, a_t \in \mathbb R}\left\|\tilde{\bw}_t(\mathrm{Transformers} \mid \{\bx_i, y_i\}_{i=1}^t) - \sum_{t=1}^t a_i \bx_i\right\|_2^2.
\end{equation}
We then measure the quality of this fit across different number of in-context examples $t$, and visualize the residual error  in \Cref{fig:higher_order}. We find that even when $t < d$, Transformers' induced weights still lie close to the span of the observed examples $\bx_i$'s. This provides an additional validation of our proposed mechanism. 

\begin{figure}[H]
    \centering
    \includegraphics[width=0.6\linewidth]{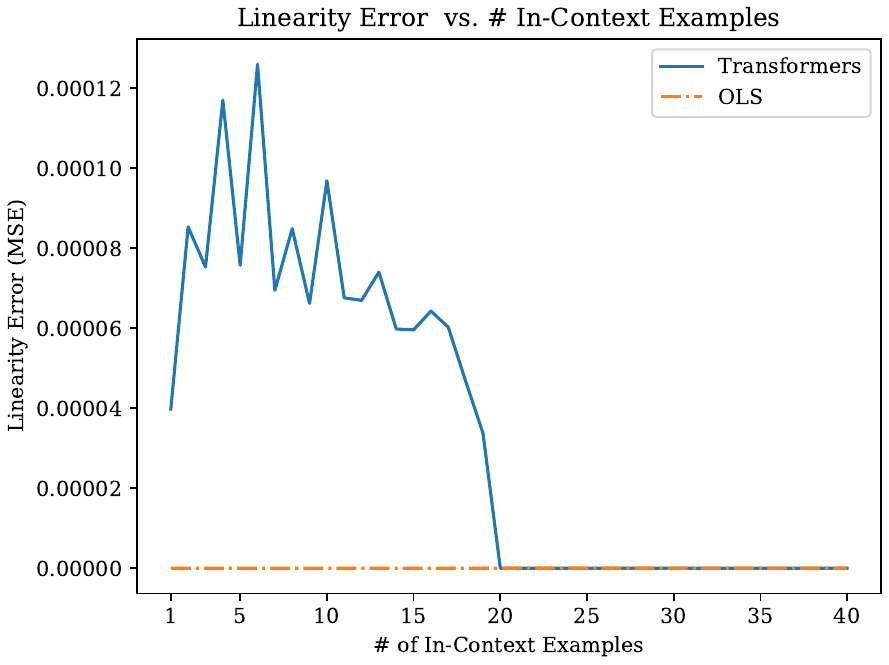}
    \caption{Verification of hypothesis that the Transformers induced weight vector $\bw$ lies in the span of observed examples $\{\bx_i\}$. }
    \label{fig:higher_order}
\end{figure}

\end{document}